\pgfplotsset{compat=1.18}
\begin{document}

\title{A Refined Analysis of UCBVI}

\author{\name Simone Drago \email simone.drago@polimi.it \\
\name Marco Mussi \email marco.mussi@polimi.it \\
\name Alberto Maria Metelli \email albertomaria.metelli@polimi.it \\
\addr Politecnico di Milano \\
Piazza Leonardo da Vinci 32, Milan, 20133, Italy}

\editor{My editor}

\maketitle

\begin{abstract}%
In this work, we provide a refined analysis of the \texttt{UCBVI} algorithm~\citep{azar2017minimax}, improving both the bonus terms and the regret analysis. Additionally, we compare our version of \texttt{UCBVI} with both its original version and the state-of-the-art \texttt{MVP} algorithm. Our empirical validation demonstrates that improving the multiplicative constants in the bounds has significant positive effects on the empirical performance of the algorithms. 
\end{abstract}

\section{Introduction}
\label{sec:intro}

We focus on the problem of \emph{finite-horizon tabular RL}, where the statistical complexity is characterized by a well-established lower bound of the order $\Omega ( \sqrt{HSAT} )$, where $S$ is the number of states, $A$ is the number of actions, $H$ is the horizon of the episode, and $T=HK$ where $K$ is the number of episodes~\citep{domingues2021episodic}. State-of-the-art learning algorithms match the lower bound up to logarithmic factors. \texttt{UCBVI}~\citep{azar2017minimax} matches this lower bound by combining the classical value iteration approach with the \emph{optimism in the face of uncertainty} mechanism, achieving $\widetilde{\BigO}(\sqrt{HSAT})$ under the condition $T \ge \Omega (H^3 S^3 A)$. Recently, \texttt{MVP}~\citep{zhang2024settling} overcome this limitation by employing the so-called \emph{doubling trick}, ensuring order-optimal regret (up to logarithmic factors) for every time horizon. However, this improvement comes at the expense of significantly larger constant factors, which may imply poor empirical performances (see Section~\ref{sec:experiments}). Consequently, \ucbvi remains a competitive and practical solution for finite-horizon tabular RL.

In this work, we improve both the bound and the analysis of the \ucbvi algorithm, with particular attention to its advanced form with Bernstein-Freedman optimistic bonus. Our goal is to design an exploration bonus that is as tight as possible and to conduct a regret upper bound analysis that minimizes also constants. 
\section{Setting}
\label{sec:preliminaries}

In this section, we introduce the notation and the setting we consider in the rest of the work.

\paragraph{Notation.}
Given a measurable set $\mathcal{X}$, we denote with $\Delta(\mathcal{X})$ the set of probability measures over $\mathcal{X}$, and with $|\mathcal{X}|$ it cardinality. For $n \in \mathbb{N}$, we denote the set $\{1, \ldots, n\}$ as $\dsb{n}$. We denote the L1 norm of a vector as $\| \cdot \|_1$. We denote the indicator function of event $x$ as $\mathbb{I}\{x\}$.

\paragraph{Markov Decision Processes.}
An undiscounted, episodic Markov Decision Process \citep[MDP, ][]{puterman1990} is a tuple $\mathcal{M} \coloneqq (\Ss, \As, P, R, H)$. In this tuple, $\Ss$ is the state space, $\As$ is the action space, $P : \Ss \times \As \to \Delta(\Ss)$ represents the state transition probability, $R : \Ss \times \As \to \mathbb{R}$ represents the reward function, and $H \in \mathbb{N}_{\geq 1}$ is the length of each episode.\footnote{Let $x, y \in \Ss$ and $a \in \As$, we denote as $P(y|x,a)$ the probability of observing $y$ as the next state after playing action $a$ in state $x$, and $R(x,a)$ the reward obtained after playing action $a$ in state $x$.}

We assume the state space and the action space are finite sets, and we denote their cardinalities as $|\Ss| = S < + \infty$ and $|\As| = A < + \infty$. We assume the state transition probability and the reward do not depend on the stage. Moreover, we assume the reward to be deterministic, known, and bounded in $[0,1]$.

\paragraph{Interaction with the Environment.}
The agent interacts with the environment in a sequence of $K$ episodes. Denote as $x_{k,h}$ the state occupied by the agent at stage $h \in \dsb{H}$ of episode $k \in \dsb{K}$, and as $a_{k,h}^{\pi_k}$ the action played by the agent at stage $h$ of episode $k$ according to the policy $\pi_k$. We assume policies to be deterministic and stage-dependent, \ie $\pi : \Ss \times \dsb{H} \to \As$.

The interaction of the $k$-th episode is defined as follows. Starting from state $x_{k,1} \in \Ss$, the agent selects which action to play as $a_{k,h}^{\pi_k} = \pi_k (x_{k,h}, h)$ for every $h \in \dsb{H}$, and observes a sequence of next-states and rewards, until the end of the episode.

The function $\Vpi[]{h} : \Ss \to \mathbb{R}$ denotes the value function at stage $h \in \dsb{H}$, such that $\Vpi[]{h}(x)$ represents the expected sum of the $H-h$ returns received under policy $\pi$ starting from state $x\in\Ss$. Under the assumptions stated above, there exists a deterministic policy $\pi^*$ which attains the best possible value function $\Vstar{h}(x) \coloneqq \sup_{\pi} \Vpi[]{h}(x)$ for every state $x \in \Ss$.
We measure the performance of a learning algorithm $\mathfrak{A}$ after $K$ episodes by means of the \emph{cumulative regret}:
\begin{equation*}
    \textnormal{Reg}(\mathfrak{A}, K) \coloneqq \sum_{i=1}^K \Vstar{1}(x_{i,1}) - \Vpi[i]{1}(x_{i,1}).
\end{equation*}
We denote as $T=KH$ the total number of interactions.
\section{\texttt{UCBVI}}

In this section, we consider the \ucbvi algorithm, introduced in~\citep{azar2017minimax}. First, we provide a more compact (but equivalent) pseudocode of the algorithm in Algorithm~\ref{alg:ucbvi}. 

We start by initializing the counters needed in order to run the algorithm. Then, we start the continuous interaction procedure for every episode $k\in\dsb{K}$. For every episode, before starting the interaction, the algorithm computes the optimistic estimate of the value function. Such estimate starts by computing all the transition probabilities $\Pest_k (y | x, a)$ for every state, action and next state as:
\begin{align}
    \Pest_k (y | x, a) = \frac{N_k(x, a, y)}{\max\{1,N_k(x, a)\}},
\end{align}
where $N_k(x, a)$ is the number of times we play action $a\in\As$ in state $x\in\Ss$, and $N_k(x, a, y)$ is the number of times we do so and observe the next state $y\in\Ss$. Then, we compute the optimistic value iteration for finite horizon MDPs, starting from stage $H$ backward, where the optimistic $Q_{k,h} (x,a)$ is defined as:
\begin{align*}
    Q_{k,h} (x,a) = \min \{ Q_{k-1,h} (x,a), R (x,a) + \! \sum_{y \in \mathcal{S}} \! \Pest_k (y | x, a) \Vest{k,h+1}(y) + b_{k,h} (x, a) \}.
\end{align*}
This procedure mimics value iteration with an additive exploration term $b_{k,h} (x, a)$ which will be further characterized later in Section~\ref{sec:analysis}. Then, term $V_{k,h} (x)$ is computed as usual for value iteration:
\begin{align*}
    \Vest{k,h}(x) = \max_{a \in \mathcal{A}} Q_{k,h}(x, a).
\end{align*}
After that, we can start interacting with the environment and play greedily. More in detail, at every stage $h\in\dsb{H}$ of episode $k\in\dsb{K}$, we observe state $x_{k,h}$ and play the most promising action according to the optimistic estimate:
\begin{align*}
a_{k,h} \in \argmax_{a \in \mathcal{A}} Q_{k,h} (x_{k,h}, a).
\end{align*}
After having played $a_{k,h}$, we observe the reward $r_{k,h}$ and the next state $x_{k,h+1}$. Finally, we use the collected information to properly update counters.

\RestyleAlgo{ruled}
\LinesNumbered
\begin{algorithm}[t!]
\caption{\ucbvi.}\label{alg:ucbvi}
{\small
\textbf{Initialize}: $N_0(x, a, y) = 0$, $N_0 (x, a) = 0$, $N'_{0,h} (x) = 0, \ \forall (x, a, y) \in \mathcal{S} \times \mathcal{A} \times \mathcal{S}$ \\ 
\phantom{\textbf{Initialize}:} $Q_{0,h} (x,a) = H - h + 1, \ \forall (x, a, h) \in \mathcal{S} \times \mathcal{A} \times \dsb{H}$

\For{$k \in \dsb{K}$}{

    \vspace{.1cm}

    \textcolor{black!40!white}{\texttt{// Update the optimistic estimates for episode} $k$}
    
    Estimate $\Pest_k (y | x, a) = N_k(x, a, y)/ \max\{1,N_k(x, a)\}$ \label{alg:ucbvi:pest}
    
    Initialize $\Vest{k,H+1}(x) = 0$, $\forall x \in \mathcal{S}$
    
    \For{$h = \{ H, H-1, \ldots, 1 \}$}{
    
        \For{$x \in \mathcal{S}$}{

            \For{$a \in \mathcal{A}$}{
                $Q_{k,h} (x,a) = \min \{ Q_{k-1,h} (x,a), R (x,a) + \! \sum_{y \in \mathcal{S}} \! \Pest_k (y | x, a) \Vest{k,h+1}(y) + b_{k,h} (x, a) \}$ \label{alg:ucbvi:qvals}
            }
    
            $\Vest{k,h}(x) = \max_{a \in \mathcal{A}} Q_{k,h}(x, a)$ \label{alg:ucbvi:vvals}
                        
        }
    }

    \textcolor{black!40!white}{\texttt{// Interact with the environment for episode} $k$}
    
    Agent observes state $x_{k,1}$
    
    \For{$h \in \dsb{H}$}{

        Agent plays action $a_{k,h} \in \argmax_{a \in \mathcal{A}} Q_{k,h} (x_{k,h}, a)$ \label{alg:ucbvi:play}

        Environment returns reward $r_{k,h}$ and next state $x_{k,h+1}$ \label{alg:ucbvi:observe}

        Update for every $(x, a, y) \in \mathcal{S} \times \mathcal{A} \times \mathcal{S}$: \label{alg:ucbvi:updatefirst}

        \phantom{Update} $N_k(x, a, y) = N_{k-1}(x, a, y) + \mathds{1}\{x=x_{k,h}, a=a_{k,h}, y = x_{k,h+1}\}$ 

        \phantom{Update} $N_k(x, a) = N_{k-1}(x, a) + \mathds{1}\{x=x_{k,h}, a=a_{k,h}\}$

        \phantom{Update} $N_{k,h}'(x) = N_{k-1,h}'(x) + \mathds{1}\{x=x_{k,h}\}$ \label{alg:ucbvi:updatelast}
    }
}
}
\end{algorithm}

\section{A Refined \texttt{UCBVI} Analysis}
\label{sec:analysis}

In this section, we analyze the \ucbvi algorithm. We start in Section~\ref{sec:analysis:ch} by providing the regret upper bound for \ucbvi with \emph{Chernoff-Hoeffding} bonus (Theorem~\ref{thr:ucbvichUB}). Then, we consider the more elaborated \emph{Bernstein-Freedman} bonus, and we characterize the regret of \ucbvi we get when we consider such a bonus (Theorem~\ref{thr:ucbvibfUB}). Our contribution consists of refining the analysis to obtain tighter bonuses and, as a direct consequence, tighter regret bounds.

\subsection{Chernoff-Hoeffding}
\label{sec:analysis:ch}

Let us start with the \emph{Chernoff-Hoeffding} bound, taking the opportunity to correct some typos of the original analysis of~\citep{azar2017minimax}.\footnotetext[2]{We assume that, by definition, $\bonus_{k,H}(s,a) = 0$, as at the last stage, there is no need for exploration, and the rewards are deterministic.}
\addtocounter{footnote}{1}

\begin{restatable}[Regret for \ucbvi with Chernoff-Hoeffding bonus]{thr}{ucbvichUB}\label{thr:ucbvichUB}
Let $\delta \in (0,1)$. Considering:\footnotemark[2]
\begin{equation*}
    b_{k,h}(x,a) = \frac{2HL}{\sqrt{\max \{ N_{k}(x,a), 1 \}}}, 
\end{equation*}
then, \wp at least $1-\delta$, the regret of \ucbvich is bounded by:
\begin{equation*}
    \regch{K} \le 10 e HL \sqrt{SAT} + \frac{8}{3} e H^2 S^2 A L^2,
\end{equation*}
where $L = \ln{(5HSAT / \delta)}$. For $T \ge \Omega ( H^2 S^3 A )$, this bound translates to $\widetilde{\BigO}(H \sqrt{SAT})$.
\end{restatable}%
Theorem~\ref{thr:ucbvichUB} should be compared to Theorem~1 of~\citep{azar2017minimax}. Since the analysis is a refinement of the original analysis in terms of constants, the order of the regret does not change between the two theorems. However, our analysis provides a smaller value for the constants.\footnote{To the best of the authors' knowledge, the original analysis of~\citep{azar2017minimax} is missing a multiplicative $e$ factor in the regret bound.} Moreover, observe how the minimum value of $T$ for which the regret bound holds according to our analysis is $H$ times higher than the one reported in the original theorem. This is due to the fact that the minimum $T$ in the statement of Theorem~1 of~\citep{azar2017minimax} is incorrect, although the derivation in the appendix provides the same minimum value of $T$ we obtain.

\subsection{Bernstein-Freedman}
\label{sec:analysis:bf}

We now move to the \emph{Bernstein-Freedman} bonus.
\begin{restatable}[Regret for \ucbvi with Bernstein-Freedman bonus]{thr}{ucbvibfUB}\label{thr:ucbvibfUB}
Let $\delta \in (0,1)$. Considering:\footnotemark[2]  
\begin{align*}
    b_{k,h}(x,a) = & \underbrace{\sqrt{\frac{4L \Var_{y \sim \widehat{P}_k(\cdot | x,a)}(V_{k,h+1}(y))}{\max \{ N_{k}(x,a), 1 \}}}}_{\textup{\texttt{(A)}}} + \underbrace{\frac{7HL}{ 3 \max \{ N_{k}(x,a) - 1, 1 \}}}_{\textup{\texttt{(B)}}} \\ & \quad + \underbrace{\sqrt{\frac{4 \sum_{y \in \mathcal{S}} \left( \widehat{P}(y | x, a) \cdot \min \left\{ \frac{84^2 H^3 S^2 A L^2}{\max \{1, N_{k,h+1}'(y) \} } , H^2 \right\} \right)}{\max \{ N_{k}(x,a), 1 \}}}}_{\textup{\texttt{(C)}}},
\end{align*}
then, \wp at least $1-\delta$, the regret of \ucbvibf is bounded by:
\begin{equation*}
    \regbf{K} \le 24 e L \sqrt{HSAT} + 616 e H^2 S^2 A L^2 + 4 e \sqrt{H^2 TL},
\end{equation*}
where $L = \ln (5HSAT / \delta )$. For $T \ge \Omega ( H^3 S^3 A )$ and $SA \ge H$, this bound translates to $\widetilde{\BigO}(\sqrt{HSAT})$.
\end{restatable}

Theorem~\ref{thr:ucbvibfUB} should be compared to Theorem~2 of~\citep{azar2017minimax}. Again, as the analysis is a refinement in terms of constants, the order of the regret does not change. Moreover, also the minimum value of $T$ under which the regret bound matches the lower bound in unchanged between the two analyses. It is important to notice, however, that the constant terms of our analysis are strictly smaller than the ones of~\citep{azar2017minimax}.\footnotemark[3] In particular, comparing the bonus term $b_{k,h}$ presented in Theorem~\ref{thr:ucbvibfUB}  \wrt the one of \citep[][Algorithm~4]{azar2017minimax} we observe the following differences:
\begin{itemize}[topsep=1pt,noitemsep,leftmargin=15pt]
    \item in term \texttt{(A)}, we have a $\sqrt{4}$ multiplicative factor instead of $\sqrt{8}$;
    \item in term \texttt{(B)}, we have a $7$ multiplicative factor instead of $14$;
    \item in term \texttt{(C)}, we have a $\sqrt{4}$ multiplicative factor instead of $\sqrt{8}$ and, inside the minimum, a term multiplicative factor $84^2$ instead of $100^2$.
\end{itemize}
The reader shall refer to Appendices~\ref{apx:proof_CH} and \ref{apx:proof_BF} for the proofs of Theorems~\ref{thr:ucbvichUB} and \ref{thr:ucbvibfUB}, respectively. The derivations provided in the appendices closely follow the proofs of~\citep{azar2017minimax}, focusing on lowering the constant terms.
A full description of the notation employed throughout the paper is reported in Appendix~\ref{apx:notation}. Both proofs are conducted under the condition that concentration inequalities hold for the next state estimator and its variance. Those conditions fall under event $\mathcal{E}$, which is presented in Appendix~{B.4} of~\citep{azar2017minimax} and restated in Appendix~\ref{apx:event}. Finally, additional lemmas necessary to show the regret decomposition and to bound the summation of the term it comprises are demonstrated in Appendix~\ref{apx:lemmas}.
\section{Numerical Validation}
\label{sec:experiments}

In this section, we numerically compare the performances of \ucbvi, both with the Chernoff-Hoeffding and Bernstein-Freedman bonuses of~\citep{azar2017minimax} and with the improved Bernstein-Freedman bonus of this paper, against the \mvp \citep{zhang2024settling} algorithm.

In order to fairly compare to the \mvp algorithm, all the $N_h(x,a)$ terms are considered as $N(x,a)$, removing the discriminant of the stage from the algorithm, and the $c_2$ constant (which refers to the uncertainty in the estimation of the rewards) is set to $0$, to remove the exploration factor needed due to the stochasticity of the reward in the original paper.

\subsection{Illustrative Environments}

As a first experimental evaluation, we consider a set of illustrative environments. We consider an MDP with parameters $S=3$, $A=3$, $H \in \{5, 10\}$, and we consider a number of episodes $K \in \{10^5, 10^6\}$. We evaluate each experiment by averaging over 10 runs. In each run, the rewards and transition probabilities of the MDP are randomly generated. Then, the clairvoyant optimum is calculated for the purpose of regret computation, and the algorithms are evaluated.

\begin{figure*}[t!]
    \centering

    \hspace{0.8cm}
    \subfloat[$H=5, \ K=10^5$.]{\resizebox{0.37\linewidth}{!}{\includegraphics{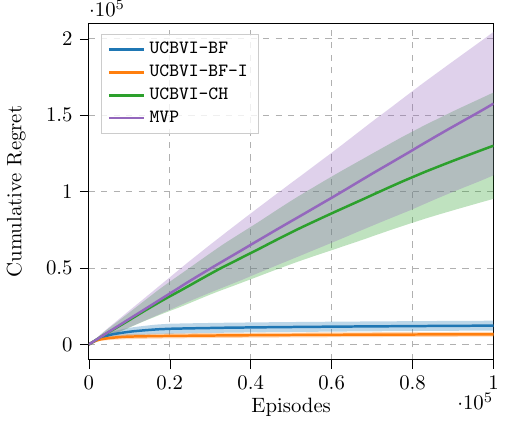}}}
    \hfill
    \subfloat[$H=5, \ K=10^6$.]{\resizebox{0.37\linewidth}{!}{\includegraphics{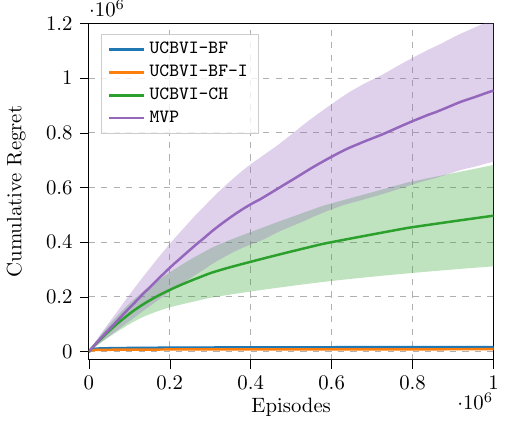}}}
    \hspace{1cm}

    \vspace{0.3cm}

    \hspace{0.8cm}
    \subfloat[$H=10, \ K=10^5$.]{\resizebox{0.37\linewidth}{!}{\includegraphics{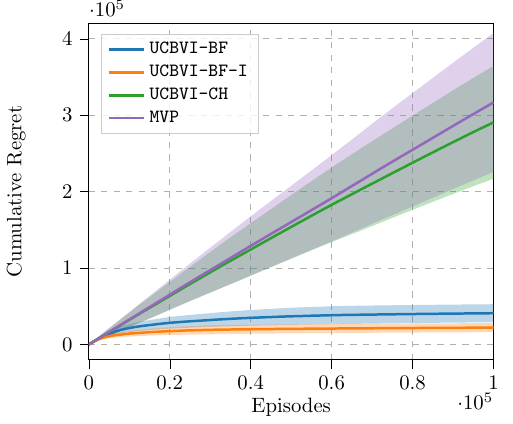}}}
    \hfill
    \subfloat[$H=10, \ K=10^6$.]{\resizebox{0.37\linewidth}{!}{\includegraphics{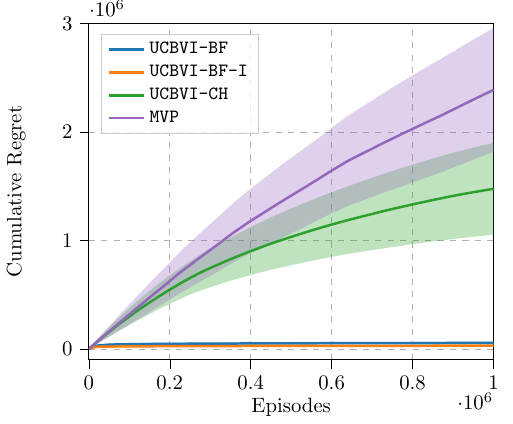}}}
    \hspace{1cm}
    
    \caption{Performances in terms of cumulative regret in toy environments with $S=3$ states and $A=3$ actions ($10$ runs, mean $\pm$ $95\%$ C.I.).}
    \label{fig:TODO2}
\end{figure*}




\paragraph{Results.}
Figure~\ref{fig:TODO2} represents the cumulative regret of the evaluated algorithms in the first experimental evaluation for different values of $H$ and $K$. From the figures, we can observe that \ucbvi with the Chernoff-Hoeffding bonus and \mvp begin to show a sub-linear regret for $K=10^6$, whereas both versions of \ucbvi with the Bernstein-Freedman bonus greatly outperform the other algorithms in all the evaluated scenarios. In particular, the use of a tighter Bernstein-Freedman bonus (\ucbvibfi) translates into a cumulative regret that is, although of the same order, lower than with the usage of a larger bonus (\ucbvibf), highlighting the importance of lower order terms and constants in empirical performance.

\subsection{RiverSwim}
We now consider the RiverSwim environment~\citep{strehl2008analysis}. This environment emulates a swimmer that has to swim against the current, where the agent has 2 options: \emph{(i)} to try to swim to the other side or \emph{(ii)} to turn back. In this scenario, the rewards and the transition probabilities are designed such that the optimal policy corresponds to trying to swim and reach the other side of the \quotes{river}. This is considered a challenging benchmark for exploration. We consider the scenario with $S=5$ and $H=10$. The reward model and the transition probability are designed such that the suboptimality gap between the optimal action and the other one in the initial state is very low ($\sim\!0.1$, with a scale of the problem in the order of $H=10$).

\paragraph{Results.}
Figure~\ref{fig:river} compares the results when using \mvp and \ucbvi in its original version (\ucbvibf) and the one we propose with tighter bounds (\ucbvibfi). \mvp confirms its poor empirical performance, failing to deliver a sublinear trend for the considered horizon. Instead, \ucbvi, in both versions, shows a clear sublinear trend, with the improved version (\ucbvibfi) with a cumulative regret approximately half of the original one (\ucbvibf).

\begin{figure}[t!]
\centering
\resizebox{0.37\linewidth}{!}{\includegraphics{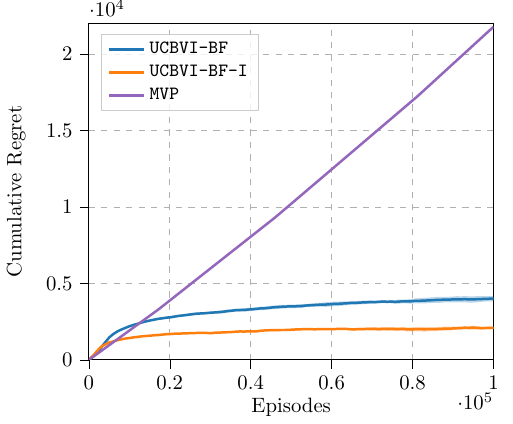}}
\caption{Performances in terms of cumulative regret in the RiverSwim environment with $S=5$ states and horizon $H=10$ ($4$ runs, mean $\pm$ $95\%$ C.I.).}
\label{fig:river}
\end{figure}

\begin{table}[t!]
    \centering
    \begin{tabular}{c|ccc}
        \hline 
        & Bonus ratio & Regret upper bound ratio &  Empirical regret ratio\\ \hline
        \texttt{CH} & $7/2$ & $2$ & - \\ \hline
        \texttt{BF} & $\sqrt{2}$ & $5/4$ & $1.87 \pm 0.03$ \\ \hline
    \end{tabular}
    \caption{Improvement ratios in the bonuses, regret upper bounds, and empirical regret between our analysis and the original of~\citep{azar2017minimax}.}
    \label{tab:ratios}
\end{table}

\section{Conclusions}
A summary of the improvements, expressed in terms of improvement ratios in the bonuses, regret upper bounds, and empirical regret, is reported in Table~\ref{tab:ratios}. First, we compare our versions of the \texttt{UCBVI} algorithms with the original ones from~\citep{azar2017minimax}. The algorithmic structure remains the same, though we re-derived the bonus terms to make them as tight as possible, resulting in an improvement of $7/2$ and $\sqrt{2}$ for the Chernoff-Hoeffding and Bernstein-Freedman bonuses, in the dominant terms, respectively. This reduction in over-exploration has significant empirical effects, as shown in Section~\ref{sec:experiments}, where, as reported in Table~\ref{tab:ratios}, we achieve an improvement in the empirical regret of $1.87$ times. Additionally, this impacts the regret analysis, where we were able to reduce the regret bound by a factor of $2$ and $5/4$ for the Chernoff-Hoeffding and Bernstein-Freedman bonuses, respectively, in terms of dominant terms. However, lower order terms also have an impact on the performance, and through a refined analysis we were able to reduce them by a factor of $\sim \! 90$ and $\sim \! 4$ for the Chernoff-Hoeffding and Bernstein-Freedman bonuses, respectively.

\clearpage
\vskip 0.2in
\bibliography{biblio}

\clearpage
\appendix

\setlength{\parindent}{0pt}
\section{Notation}
\label{apx:notation}

In this section, we collect the notation used throughout the main paper and the appendices.

\begin{table}[h!]
\centering
\resizebox{.98\linewidth}{!}{\begin{tabular}{| p{.20\textwidth} | p{.85\textwidth} |} 
\hline
\textbf{Symbol} & \textbf{Meaning} \\ \hline \hline
$\Ss$ & State space \\ 
$\As$ & Action space \\ 
$P$ & Transition distribution \\ 
$R$ & Reward function \\ 
$H$ & Length of the episode \\ 
$K$ & Total number of episodes \\ 
$T$ & Total number of steps \\ 
$T_k$ & Total number of steps up to episode $k$ \\ 
$S$ & Cardinality of the state space \\ 
$A$ & Cardinality of the action space \\ \hline
$x_{k,h}$ & State occupied at stage $h$ of episode $k$ \\ 
$a_{k, h}^{\pi}$ & Action played at stage $h$ of episode $k$ under policy $\pi$ \\ 
$R^\pi (x)$ & Reward obtained by playing according to policy $\pi$ in state $x$ \\ 
$N_k (x,a)$ & Number of visits to state-action pair $(x,a)$ up to episode $k$ \\ 
$N_k (x,a,y)$ & Number of transitions to state $y$ from $x$ after playing $a$, up to episode $k$ \\ 
$N'_{k,h}(x)$ & Number of visits to state $x$ at stage $h$ up to episode $k$ \\ 
$\Pest_{k}$ & Estimated transition distribution \\ 
$\bonus_{k,h}$ & Exploration bonus \\ 
$\bonus'_{k,h}(x)$ & $\min \{ \frac{84^2 H^3 S^2 A L^2 }{N'_{k,h}(x)} , H^2 \}$ \\ 
$\pi_k$ & Policy played during episode $k$ \\ 
$\pi^*$ & Optimal policy \\ 
$\Qest{k,h}$ & Optimistic state-action value function \\ 
$\Vstar{h}$ & Value function of the optimal policy at stage $h$ \\ 
$\Vpi[]{h}$ & Value function under policy $\pi$ at stage $h$ \\ 
$\Vest{k,h}$ & Optimistic estimator of the optimal value function at stage $h$ of episode $k$ \\ \hline
$\Vdiff{k,h}(x)$ & Regret in state $x$, at stage $h$ of episode $k$, following policy $\pi_k$ \\ 
$\Vdifftil{k, h}(x)$ & Pseudo-regret in state $x$, at stage $h$ of episode $k$, following policy $\pi_k$ \\ 
$\regch{k}$ & Regret of \ucbvi using Chernoff-Hoeffding bonus after $k$ episodes \\ 
$\regchtilde{k}$ & Pseudo-regret of \ucbvi using Chernoff-Hoeffding bonus after $k$ episodes \\ 
$\regbf{k}$ & Regret of \ucbvi using Bernstein-Freedman bonus after $k$ episodes \\ 
$\regbftilde{k}$ & Pseudo-regret of \ucbvi using Bernstein-Freedman bonus after $k$ episodes \\ \hline
$\mathcal{E}$ & Concentration inequalities event \\ 
$\Omega, \Omega_{k,h}$ & Optimism events \\ 
$\varepsilon$, $\overline{\varepsilon}$ & Martingale Difference Sequences \\ 
$\ktyp$, $\ktypx$ & Sets of typical episodes \\ 
$\histkh$ & History of the interactions up to, and including, stage $h$ of episode $k$ \\ \hline
$L$ & Logarithmic term $\ln (5HSAT/\delta)$ \\ 
$\VVpi{h}(x,a)$ & Next-state variance of $\Vpi{}$ \\ 
$\VVopt{h}$ & Next-state variance of $\Vstar{}$ \\ 
$\VVest{k,h}$ & Empirical next-state variance of $\Vest{k,h}$ \\ 
$\VVestopt{k,h}$ & Empirical next-state variance of $\Vstar{}$ \\ 
$\xi_{k,j} (x, a)$ & State-action wise model error ${\xi_{k,j} (x, a) \coloneqq \sum_{y \in \Ss} [ \Pest_k (y | x, a) - P(y | x, a)] \Vstar{h+1} (y)}$ \\ \hline
\end{tabular}}
\caption{Table of notation.}
\label{tab:notation}
\end{table}

Let us now state the definitions of the next-state variances employed in the analysis. We define the empirical next state variance of $\Vest{}$ as:

\begin{equation*}
    \VVest{k,h+1} (x,a) \coloneqq \Var_{y \sim \Pest_k (\cdot | x, a)} [\Vest{k,h+1} (y)].
\end{equation*}

We define the next state variance of $\Vstar{}$ as:

\begin{equation*}
    \VVopt{h+1} (x,a) \coloneqq \Var_{y \sim P (\cdot | x,a)} [\Vstar{h+1} (y)].
\end{equation*}

We define the next state variance of $\Vpi[]{}$ as:

\begin{equation*}
    \VVpi[]{h+1} (x,a) \coloneqq \Var_{y \sim P (\cdot | x,a)} [\Vpi[]{h+1} (y)].
\end{equation*}

Finally, we define the empirical next state variance of $\Vstar{}$ as:

\begin{equation*}
    \VVestopt{k,h+1} (x,a) \coloneqq \Var_{y \sim \Pest_k (\cdot | x,a)} [\Vstar{h+1} (y)].
\end{equation*}

Let us now state the definition of the Martingale Difference Sequences employed in the analysis:

\begin{align*}
    \varepsilon_{k,h} &\coloneqq P(\cdot | x_{k,h}, a_{k,h}^{\pi_k})^\transpose \Vdifftil{k,h+1}(\cdot) - \Vdifftil{k,h+1}(x_{k,h+1}), \\
    \overline{\varepsilon}_{k,h} &\coloneqq \sum_{y\in\Ss} P(y|x_{k,h},a_{k,h}^{\pi_k}) \sqrt{\frac{\mathbb{I}(y \in [y]_{k,h})}{N_k(x_{k,h},a_{k,h}^{\pi_k})P(y|x_{k,h},a_{k,h}^{\pi_k})}} \Vdifftil{k,h+1}(y) \\
    &\quad - \sqrt{\frac{\mathbb{I}(x_{k,h+1} \in [y]_{k,h})}{N_k(x_{k,h},a_{k,h}^{\pi_k})P(x_{k,h+1}|x_{k,h},a_{k,h}^{\pi_k})}} \Vdifftil{k,h+1}(x_{k,h+1}).
\end{align*}
\section{High Probability Events}
\label{apx:event}

In this section, we restate the high probability event $\mathcal{E}$ under which the concentration inequalities hold, presented in Appendix~B.4 of \citep{azar2017minimax}.

Event $\mathcal{E}$ is defined as:

\begin{align*}
    \mathcal{E} \coloneqq \mathcal{E}_{\Pest} \bigcap \bigcap_{\substack{k \in \dsb{K} \\ h \in \dsb{H} \\ x \in \Ss}} \bigg[ 
    & \evaz\left( \mathcal{F}_{\Vdifftil{}, k, h}, H, L \right) 
    \bigcap \evaz\left( \mathcal{F}'_{\Vdifftil{}, k, h}, \frac{1}{\sqrt{L}}, L \right)
    \bigcap \evaz\left( \mathcal{F}_{\Vdifftil{}, k, h, x}, H, L \right) \\
    & \bigcap \evaz\left( \mathcal{F}'_{\Vdifftil{}, k, h, x}, \frac{1}{\sqrt{L}}, L \right)
    \bigcap \evfr\left( \mathcal{G}_{\VV, k, h}, H^4T, H^3, L \right) \\
    & \bigcap \evfr\left( \mathcal{G}_{\VV, k, h, x}, H^5 N'_{k,h}(x), H^3, L \right)
    \bigcap \evaz\left( \mathcal{F}_{\bonus', k, h}, H^2, L \right) \\
    & \bigcap \evaz\left( \mathcal{F}_{\bonus', k, h, x}, H^2, L \right)
    \bigg]
\end{align*}

We refer the reader to Lemma~1 of \citep{azar2017minimax} for the proof that event $\mathcal{E}$ holds with high probability.
Let, for ease of reading $\overline{x} = x_{i,j}$, $\overline{x}' = x_{i,j+1}$, and $\overline{a}=a_{i,j}^{\pi_i}$
We now restate the definition of the events that compose $\mathcal{E}$:

\begin{align*}
    \mathcal{E}_{\Pest} \coloneqq \left\{ \Pest_k (y|x,a) \in \mathcal{P}(k,h,N_k(x,a),x,a,y), \forall k \in \dsb{K}, h \in \dsb{H}, (x,a,y)\in\Ss\times\As\times\Ss \right\},
\end{align*}

where $\mathcal{P}(k, h, n,x,a,y)$ is defined as the subset of the set of all probability distributions $\mathcal{P}$ over $\Ss$ such that:

\begin{align}
    \mathcal{P}(k, h, n,x,a,y) \coloneqq &\bigg\{ \widetilde{P}(\cdot | x,a) \in \mathcal{P} : \| \widetilde{P}(\cdot | x,a) - P(\cdot | x,a) \|_1 \le 2 \sqrt{\frac{SL}{n}}, \label{apx:event:1} \\
    \begin{split}
    \label{apx:event:2}
    & \left| \sum_{y\in\Ss} (\widetilde{P}(y|x,a) - P(y|x,a) ) \Vstar{h}(y) \right| \\
    &\le \min \left( \sqrt{\frac{2 \VVestopt{k,h+1}(x,a)L}{n}} + \frac{7 HL}{3 (n-1)}, \sqrt{\frac{2 \VVopt{h+1}(x, a) L}{n}} + \frac{2HL}{3n} \right),
    \end{split} \\
    & \left| \widetilde{P}(y|x,a) - P(y|x,a) \right| \le \sqrt{\frac{2 P(y|x,a) (1-P(y|x,a)) L}{n}} + \frac{2L}{3n}\bigg\},\label{apx:event:3}
\end{align}

where Equation~\eqref{apx:event:1} follows by applying the result of Theorem~{2.1} of \cite{weissman2003}, Equation~\eqref{apx:event:2} follows by applying both Bernstein's inequality \citep[see, \eg][]{cesabianchi2006} and the empirical Bernstein inequality \citep{maurer2009}, and Equation~\eqref{apx:event:3} follows by applying Lemma~\ref{lem:bernstein_bernoulli}.

The remaining events concern the summation of Martingale difference sequences:

\begin{align*}
    \evaz\left( \mathcal{F}_{\Vdifftil{}, k, h}, H, L \right) &\coloneqq \bigg\{\sum_{i=1}^k \sum_{j=h}^{H-1} \left[ \sum_{y\in\Ss} P(y|\overline{x},\overline{a})\Vdifftil{i,j+1}(y) - \Vdifftil{i,j+1}(\overline{x}') \right] \\
    &\qquad \le 2 \sqrt{k (H-h) H^2 L} \bigg\},
\end{align*}
\begin{align*}
    \evaz\left( \mathcal{F}'_{\Vdifftil{}, k, h}, \frac{1}{\sqrt{L}}, L \right) &\coloneqq \bigg\{ \sum_{i=1}^k \sum_{j=h}^H \bigg[\sum_{y\in\Ss} P(y|\overline{x}, \overline{a}) \sqrt{\frac{\mathbb{I}(y \in [y]_{i,j})}{N_i(\overline{x}, \overline{a})P(y|\overline{x}, \overline{a})}} \Vdifftil{i,j+1}(y) \\
    &\qquad - \sqrt{\frac{\mathbb{I}(\overline{x}' \in [y]_{i,j})}{N_i(\overline{x}, \overline{a})P(\overline{x}'|\overline{x}, \overline{a})}} \Vdifftil{i,j+1}(\overline{x}') \bigg] \\
    &\qquad \le 2 \sqrt{k(H-h) \frac{1}{\sqrt{L}^2} L}\bigg\},
\end{align*}
\begin{align*}
    \evaz\left( \mathcal{F}_{\Vdifftil{}, k, h, x}, H, L \right) &\coloneqq \bigg\{\sum_{i=1}^k \mathbb{I}(x_{i,h}=x) \sum_{j=h}^{H-1} \left[ \sum_{y\in\Ss} P(y|\overline{x}, \overline{a})\Vdifftil{i,j+1}(y) - \Vdifftil{i,j+1}(\overline{x}') \right] \\
    &\qquad \le 2 \sqrt{N'_{k,h}(x) (H-h) H^2 L} \bigg\},
\end{align*}
\begin{align*}
    \evaz\left( \mathcal{F}'_{\Vdifftil{}, k, h, x}, \frac{1}{\sqrt{L}}, L \right) &\coloneqq \Bigg\{ \sum_{i=1}^k \mathbb{I}(x_{i,h}=x) \sum_{j=h}^H \Bigg[ \bigg[\sum_{y\in\Ss} P(y|\overline{x}, \overline{a}) \sqrt{\frac{\mathbb{I}(y \in [y]_{i,j})}{N_i(\overline{x}, \overline{a})P(y|\overline{x}, \overline{a})}} \Vdifftil{i,j+1}(y) \bigg] \\
    &\qquad - \sqrt{\frac{\mathbb{I}(\overline{x}' \in [y]_{i,j})}{N_i(\overline{x}, \overline{a})P(y|\overline{x}, \overline{a})}} \Vdifftil{i,j+1}(\overline{x}') \Bigg] \\
    &\qquad \le 2 \sqrt{N'_{k,h}(x) (H-h) \frac{1}{\sqrt{L}^2} L}\Bigg\}, \\
\end{align*}
\begin{align*}
    \evfr\left( \mathcal{G}_{\VV, k, h}, H^4T, H^3, L \right) & \coloneqq \Bigg\{ \sum_{i=1}^k \mathbb{E} \left[ \sum_{j=h}^{H-1} \VVpi{j+1} (\overline{x}, \overline{a}) | \mathcal{H}_{i,h} \right] - \sum_{i=1}^k \sum_{j=h}^{H-1} \VVpi{j+1} (\overline{x}, \overline{a}) \Bigg\} \\
    &\qquad \le 2\sqrt{H^4 T_k L} + \frac{4H^3 L}{3},\\
\end{align*}
\begin{align*}
    \evfr\left( \mathcal{G}_{\VV, k, h, x}, H^5 N'_{k,h}(x), H^3, L \right) & \coloneqq \Bigg\{ \sum_{i=1}^k \mathbb{I}(x_{i,h}=x) \mathbb{E} \left[ \sum_{j=h}^{H-1} \VVpi{j+1} (\overline{x}, \overline{a}) | \mathcal{H}_{i,h} \right] \\
    &\qquad - \sum_{i=1}^k \mathbb{I}(x_{i,h}=x) \sum_{j=h}^{H-1} \VVpi{j+1} (\overline{x}, \overline{a}) \Bigg\} \\ 
    &\qquad \le 2\sqrt{H^5 N'_{k,h}(x) L} + \frac{4H^3 L}{3}, \\
\end{align*}
\begin{align*}
    \evaz\left( \mathcal{F}_{\bonus', k, h}, H^2, L \right) &\coloneqq \left\{ \sum_{i=1}^k \sum_{j=h}^{H-1} \left[ \sum_{y \in \Ss} P(y | \overline{x}, \overline{a}) \bonus'_{i,j+1}(y) - \bonus'_{i,j+1}(\overline{x}') \right] \right\} \\
    &\qquad \le 2 \sqrt{(H-h) H^3 T_k L} \\
\end{align*}
\begin{align*}
    \evaz\left( \mathcal{F}_{\bonus', k, h, x}, H^2, L \right) &\coloneqq \left\{ \sum_{i=1}^k \mathbb{I}(x_{i,h}=x) \sum_{j=h}^{H-1} \left[ \sum_{y \in \Ss} P(y | \overline{x}, \overline{a}) \bonus'_{i,j+1}(y) - \bonus'_{i,j+1}(\overline{x}') \right] \right\} \\
    &\qquad \le 2 \sqrt{N'_{k,h}(x) (H-h) H^4 L} \\
\end{align*}
\section{Technical Lemmas}
\label{apx:lemmas}

\begin{lemma}[Bernstein inequality for Bernoulli random variables]
\label{lem:bernstein_bernoulli}
    Let $p$ be the parameter of a Bernoulli random variable, and let $\widehat{p}$ be its estimator. Let $\delta > 0$. Then, \wp at least $1-\delta$, it holds that:
    
    \begin{equation*}
        |\widehat{p} - p| \le \sqrt{\frac{2 p (1-p) L}{n}} + \frac{2 L}{3 n},
    \end{equation*}

    where $n$ represents the number of observations, and $L=\ln(2/\delta)$.
\end{lemma}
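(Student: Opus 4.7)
The plan is to derive the stated inequality as a direct consequence of the two-sided Bernstein inequality applied to the $n$ i.i.d.\ Bernoulli observations whose empirical mean is $\widehat{p}$, followed by an elementary algebraic inversion of the resulting deviation bound.

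First, I would write $\widehat{p} = \frac{1}{n}\sum_{i=1}^n X_i$, where $X_1,\dots,X_n$ are i.i.d.\ Bernoulli random variables with parameter $p$. These satisfy $\mathbb{E}[X_i] = p$, $\mathrm{Var}(X_i) = p(1-p)$, and $|X_i - p| \le 1$ almost surely, which are exactly the ingredients needed to invoke Bernstein's inequality. The two-sided form (see, e.g., the reference already used in Appendix~\ref{apx:event}) then gives, for every $t > 0$,
$$\Pr\bigl(|\widehat{p} - p| \ge t\bigr) \le 2\exp\!\left(-\frac{n t^2}{2\,p(1-p) + \tfrac{2t}{3}}\right).$$

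Next, I would set the right-hand side equal to $\delta$ and solve for $t$. Using $L = \ln(2/\delta)$, this reduces to the quadratic $n t^2 - \tfrac{2L}{3} t - 2\,p(1-p)\,L = 0$. Taking the positive root and applying the elementary subadditivity of the square root, $\sqrt{a+b} \le \sqrt{a} + \sqrt{b}$, yields
$$t \le \frac{L/3}{n} + \frac{1}{2n}\left(\frac{2L}{3} + \sqrt{8n\,p(1-p)\,L}\right) = \sqrt{\frac{2\,p(1-p)\,L}{n}} + \frac{2L}{3n},$$
which is the claim. Rearranging the probability statement produces the desired high-probability bound.

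I do not expect a real obstacle here: the only points requiring a little care are (i) picking the correct Bernstein constants (the $\tfrac{2}{3}$ in the denominator comes from the bound $|X_i - p| \le 1$, and gives the $\tfrac{2L}{3n}$ lower-order term) and (ii) ensuring that the $\sqrt{a+b} \le \sqrt{a} + \sqrt{b}$ step is applied so that the dominant $\sqrt{2p(1-p)L/n}$ term is recovered with the tight constant $\sqrt{2}$, not a larger one. Both are routine.
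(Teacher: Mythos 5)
Your proposal is correct and follows essentially the same route as the paper's proof: apply the two-sided Bernstein inequality to the i.i.d.\ Bernoulli observations (the paper merely rescales to $X_i = Y_i/n$ so the sum equals $\widehat{p}$, which is equivalent to your working with the empirical mean), set the tail probability to $\delta$, solve the resulting quadratic in the deviation, and use subadditivity of the square root to arrive at $\sqrt{2p(1-p)L/n} + \tfrac{2L}{3n}$. The algebra in your displayed inversion checks out, so no changes are needed.
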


\begin{proof}
    Let $\{Y_i\}_{i=1 \ldots, n}$ be the set of i.i.d. realizations of a Bernoulli with parameter $p$. Define the auxiliary random variable:

    \begin{equation*}
        X_i = \frac{Y_i}{n}.
    \end{equation*}

    Observe that $X_1, \ldots, X_n$ are independent random variables, and that $0 \le X_i \le 1/n$. Let $S_n$ be their sum, and $E_n$ be the expected value of $S_n$:

    \begin{align}
        S_n &= \sum_{i=1}^n X_i = \widehat{p}, \nonumber \\
        E_n &= \mathbb{E}[S_n] = \sum_{i=1}^n \mathbb{E}[X_i] = p. \nonumber
    \end{align}

    Let $V_n$ be the variance of $S_n$:

    \begin{equation*}
        V_n = \Var [S_n] = \sum_{i=1}^n \Var [X_i] = \sum_{i=1}^n \left( \frac{1(1-p)}{n^2} \right) = \frac{p(1-p)}{n}.
    \end{equation*}

    By applying Bernstein's inequality, we obtain that:

    \begin{equation}
    \label{lem:bernstein_bernoulli:ineq}
        \mathrm{Pr}(|S_n - E_n| > \epsilon) < 2 \exp\left(- \frac{\epsilon^2 / 2}{V_n + C(\epsilon/3)}\right),
    \end{equation}

    where $C$ is the range of values of the addends in $S_n$ (\ie $C=1/n$). By setting this probability to be equal to $\delta$, we can derive that:

    \begin{equation*}
        \frac{\epsilon^2}{2} = V_n \ln\left(\frac{2}{\delta}\right) + \frac{\epsilon}{3n} \ln \left(\frac{2}{\delta}\right).
    \end{equation*}

    Let $L= \ln(2/\delta)$, by solving the second order polynomial we get that:

    \begin{equation*}
        \epsilon = \frac{L}{3n} \pm \sqrt{\frac{L^2}{9n^2} + 2 V_n L}.
    \end{equation*}

    We can discard the equation with the minus, as it would result in $\epsilon < 0$, thus resulting in the inequality in Equation~\eqref{lem:bernstein_bernoulli:ineq} holding \wp $1 - \delta$. As such, we derive that:

    \begin{align*}
        \epsilon &= \frac{L}{3n} + \sqrt{\frac{L^2}{9n^2} + 2 V_n L} \\
        &\le \frac{L}{3n} + \sqrt{\frac{L^2}{9n^2}} + \sqrt{\frac{2p(1-p)L}{n}} \\
        &= \sqrt{\frac{2p(1-p)L}{n}} + \frac{2L}{3n},
    \end{align*}

    thus completing the proof.
\end{proof}


\begin{lemma}[Regret decomposition upper bound]
\label{lem:regr_dec}
    Let $k \in \dsb{K}$ and $h \in \dsb{H}$. Assume events $\mathcal{E}$ and $\Omega_{k,h}$ hold. Then the regret from stage $h$ onward of all episodes up to $k$ can be upper bounded as follows:

    \begin{align*}
        \sum_{i=1}^k \Vdiff{i,h}(x_{i,h}) \le \sum_{i=1}^k \Vdifftil{i,h}(x_{i,h}) \le & \ e \sum_{i=1}^k \sum_{j=h}^{H-1} \Big[ \varepsilon_{i,j} + 2 \sqrt{L} \overline{\varepsilon}_{i,j} + \bonus_{i,j}(x_{i,j}, a_{i,j}^{\pi_i}) \\
        &\qquad\qquad\quad + \xi_{i,j}(x_{i,j}, a_{i,j}^{\pi_i}) + \frac{8H^2SL}{3 N_i (x_{i,j}, a_{i,j}^{\pi_i})} \Big].
    \end{align*}
\end{lemma}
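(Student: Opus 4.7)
The plan is to derive a one-step recursion for $\Vdifftil{i,h}(x_{i,h})$ in terms of $\Vdifftil{i,h+1}(x_{i,h+1})$ plus additive error terms, and then unroll it over $j = h, \ldots, H-1$; the factor $e$ will arise from the accumulated multiplier $(1+1/H)^{H-h}$. The first inequality $\Vdiff{i,h}(x_{i,h}) \le \Vdifftil{i,h}(x_{i,h})$ is immediate from event $\Omega_{k,h}$, under which $\Vest{i,h}(x) \ge \Vstar{h}(x)$, so the actual work lies in bounding $\Vdifftil{i,h}$.

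For the one-step expansion, since $a_{i,h}^{\pi_i} \in \argmax_a Q_{i,h}(x_{i,h},a)$ we have $\Vest{i,h}(x_{i,h}) = Q_{i,h}(x_{i,h}, a_{i,h}^{\pi_i})$; dropping the $\min$ in line~\ref{alg:ucbvi:qvals} of Algorithm~\ref{alg:ucbvi}, subtracting the Bellman identity $\Vpi[i]{h}(x_{i,h}) = R(x_{i,h}, a_{i,h}^{\pi_i}) + \sum_y P(y|\cdot)\Vpi[i]{h+1}(y)$, and inserting the pivot $\sum_y P(y|\cdot)\Vest{i,h+1}(y)$, one obtains
\begin{align*}
\Vdifftil{i,h}(x_{i,h}) \le \bonus_{i,h}(x_{i,h}, a_{i,h}^{\pi_i}) + \sum_y [\Pest_i - P](y|\cdot)\,\Vest{i,h+1}(y) + \sum_y P(y|\cdot)\,\Vdifftil{i,h+1}(y).
\end{align*}
A second pivot at $\Vstar{h+1}$ decomposes the model-error term into $\xi_{i,h}(x_{i,h}, a_{i,h}^{\pi_i})$ (by definition) plus $\sum_y[\Pest_i - P](y|\cdot)(\Vest{i,h+1} - \Vstar{h+1})(y)$. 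On this latter term I would apply the coordinatewise Bernstein bound of Equation~\eqref{apx:event:3} together with $0 \le \Vest{i,h+1} - \Vstar{h+1} \le \Vdifftil{i,h+1}$ (from optimism), split the sum over $y$ according to membership in the typical set $[y]_{i,h}$, and rearrange: the $\sqrt{P(y|\cdot)L/N_i}$ contribution on typical $y$ becomes $2\sqrt{L}\,\overline{\varepsilon}_{i,h}$ (after pulling out the $\sqrt{2L/N_i}$ normalization and passing to a single realization, matching the Appendix~\ref{apx:notation} definition), while the atypical coordinates together with the $2L/(3N_i)$ additive tail, combined with $\Vdifftil{i,h+1}\le H$ and $|\Ss|=S$, condense into the $\frac{8H^2SL}{3N_i}$ slack and a $(1+1/H)$ multiplier on $\sum_y P(y|\cdot)\Vdifftil{i,h+1}(y)$ via an AM--GM step. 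The martingale $\varepsilon_{i,h}$ then replaces that population expectation by $\Vdifftil{i,h+1}(x_{i,h+1})$, yielding the recursion $\Vdifftil{i,h}(x_{i,h}) \le (1+1/H)\,\Vdifftil{i,h+1}(x_{i,h+1}) + \varepsilon_{i,h} + 2\sqrt{L}\,\overline{\varepsilon}_{i,h} + \bonus_{i,h} + \xi_{i,h} + \frac{8H^2SL}{3N_i}$. Unrolling from $j=h$ down to $H-1$ (with $\Vdifftil{i,H}\equiv 0$) accumulates the factor $(1+1/H)^{H-h} \le e$, and summing over $i=1,\ldots,k$ produces the stated bound.

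The delicate point is the third step: the threshold defining the typical set $[y]_{i,h}$ and the AM--GM split must be tuned simultaneously so that the typical contribution lines up \emph{exactly} with the martingale $\overline{\varepsilon}_{i,h}$ (including its precise $\mathbb{I}(y \in [y]_{i,h})/\sqrt{N_i P}$ weighting), while the atypical residue together with the $2L/(3N_i)$ Bernstein tail consolidates into precisely $\frac{8H^2SL}{3N_i}$ and a $(1+1/H)$ transfer multiplier. A looser choice here would either worsen the outer constant (e.g.\ producing $e^2$ instead of $e$) or inflate the lower-order $H^2SL/N_i$ term; matching the paper's stated constants therefore hinges on the quantitative bookkeeping in this single step.
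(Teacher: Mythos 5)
Your proposal follows essentially the same route as the paper's proof: the same two pivots (at $\sum_y P\,\Vest{i,h+1}$ and at $\Vstar{h+1}$), the same coordinatewise Bernstein bound on the cross term, the same typical-set split producing $\overline{\varepsilon}_{i,h}$ and the $\frac{8H^2SL}{3N_i}$ residue, and the same $(1+1/H)$-recursion unrolled to give the factor $e$. The only cosmetic difference is that the $(1+1/H)$ transfer does not come from an AM--GM step but directly from the threshold $N_i P \ge 2H^2L$ defining $[y]_{i,h}$, which bounds the realized typical-set term by $\frac{1}{H}\Vdifftil{i,h+1}(x_{i,h+1})$ after multiplying by the $\sqrt{2L}$ prefactor — exactly the tuning you flagged as the delicate point.
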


\begin{proof}
We begin the proof by considering a single value of $k \in \dsb{K}$. Under $\Omega_{k,h}$, we observe that:
\begin{align}
    \Vdiff{k,h}(x_{k,h}) &= \Vstar{h}(x_{k,h}) - \Vpi{h}(x_{k,h}) \nonumber \\
    &\le \Vest{k,h}(x_{k,h}) - \Vpi{h}(x_{k,h}) \nonumber \\
    &= \Vdifftil{k,h}(x_{k,h}). \nonumber
\end{align}

As such, we bound the pseudo-regret $\Vdifftil{k,h}(x_{k,h})$:

\begin{align}
    \Vdifftil{k,h}(x_{k,h}) &= \Vest{k,h}(x_{k,h}) - \Vpi{h}(x_{k,h}) \nonumber \\
    &= \bonus_{k,h}(x_{k,h}, a_{k,h}^{\pi_k}) + \sum_{y \in \Ss} \Pest_k (y | x_{k,h}, a_{k,h}^{\pi_k}) \Vest{k,h+1}(y) - \sum_{y \in \Ss} P(y | x_{k,h}, a_{k,h}^{\pi_k}) \Vpi{h+1}(y) \nonumber \\
    &= \bonus_{k,h}(x_{k,h}, a_{k,h}^{\pi_k}) + \sum_{y \in \Ss} \left[ \Pest_k (y | x_{k,h}, a_{k,h}^{\pi_k}) - P(y | x_{k,h}, a_{k,h}^{\pi_k}) \right] \Vest{k,h+1}(y) \nonumber \\
    &\quad + \sum_{y \in \Ss} P(y | x_{k,h}, a_{k,h}^{\pi_k}) \left[ \Vest{k,h+1}(y) - \Vpi{h+1}(y) \right] \nonumber \\
    &= \bonus_{k,h}(x_{k,h}, a_{k,h}^{\pi_k}) + \sum_{y \in \Ss} \left[ \Pest_k (y | x_{k,h}, a_{k,h}^{\pi_k}) - P(y | x_{k,h}, a_{k,h}^{\pi_k}) \right] \Vstar{h+1} (y) \nonumber \\
    &\quad + \sum_{y \in \Ss} \left[ \Pest_k (y | x_{k,h}, a_{k,h}^{\pi_k}) - P(y | x_{k,h}, a_{k,h}^{\pi_k}) \right] \left[ \Vest{k, h+1}(y) - \Vstar{h+1}(y) \right] \nonumber \\
    &\quad + \sum_{y \in \Ss} P(y | x_{k,h}, a_{k,h}^{\pi_k}) \Vdifftil{k, h+1}(y) \nonumber \\
    &= \Vdifftil{k,h+1}(x_{k,h+1}) + \bonus_{k,h}(x_{k,h}, a_{k,h}^{\pi_k}) + \varepsilon_{k,h} \label{lem:regr_dec:1} \\
    &\quad + \sum_{y \in \Ss} \left[ \Pest_k (y | x_{k,h}, a_{k,h}^{\pi_k}) - P(y | x_{k,h}, a_{k,h}^{\pi_k}) \right] \Vstar{h+1} (y) \nonumber \\
    &\quad + \sum_{y \in \Ss} \left[ \Pest_k (y | x_{k,h}, a_{k,h}^{\pi_k}) - P(y | x_{k,h}, a_{k,h}^{\pi_k}) \right] \left[ \Vest{k, h+1}(y) - \Vstar{h+1}(y) \right] \nonumber \\
    \begin{split} \label{lem:regr_dec:2}
    &\le \Vdifftil{k,h+1}(x_{k,h+1}) + \bonus_{k,h}(x_{k,h}, a_{k,h}^{\pi_k}) + 
    \xi_{k,h}(x_{k,h}, a_{k,h}^{\pi_k}) \\
    &\quad + \varepsilon_{k,h} + \underbrace{\sum_{y \in \Ss} \left[ \Pest_k (y | x_{k,h}, a_{k,h}^{\pi_k}) - P(y | x_{k,h}, a_{k,h}^{\pi_k}) \right] \left[ \Vest{k, h+1}(y) - \Vstar{h+1}(y) \right]}_{(a)},
    \end{split}
\end{align}
where, in Equation~\eqref{lem:regr_dec:1} we apply the definition $\varepsilon_{k,h} \coloneqq P(\cdot | x_{k,h}, a_{k,h}^{\pi_k})^\transpose \Vdifftil{k,h+1}(\cdot) - \Vdifftil{k,h+1}(x_{k,h+1})$, and in Equation~\eqref{lem:regr_dec:2} we apply the definition of $\xi_{k,h}(x_{k,h}, a_{k,h}^{\pi_k})$:

\begin{equation*}
    \xi_{k,h}(x_{k,h}, a_{k,h}^{\pi_k}) = \sum_{y \in \Ss} \left[ \Pest_k (y | x_{k,h}, a_{k,h}^{\pi_k}) - P(y | x_{k,h}, a_{k,h}^{\pi_k}) \right] \Vstar{h+1} (y).
\end{equation*}

Let $\histkh$ be the history of the interactions up to, and including, stage $h$ of episode $k$. Observing that $| \varepsilon_{k,h} | \le H \le + \infty$ and $\mathbb{E}[\varepsilon_{k,h} | \histkh]=0$, we can derive that $\varepsilon_{k,h}$ is a Martingale difference sequence.

We now focus on bounding term $(a)$:

\begin{align}
    (a) &= \sum_{y \in \Ss} \left[ \Pest_k (y | x_{k,h}, a_{k,h}^{\pi_k}) - P(y | x_{k,h}, a_{k,h}^{\pi_k}) \right]\left[ \Vest{k,h+1}(y) - \Vstar{h+1}(y) \right] \nonumber \\
    &\le \sum_{y \in \Ss} \left[ \sqrt{\frac{2 P(y|x_{k,h},a_{k,h}^{\pi_k})(1-P(y|x_{k,h},a_{k,h}^{\pi_k})) L}{N_k(x_{k,h}, a_{k,h}^{\pi_k})}} + \frac{2L}{3 N_k (x_{k,h}, a_{k,h}^{\pi_k})} \right] \Vdifftil{k,h+1}(y) \label{lem:regr_dec:3} \\
    &\le \sum_{y \in \Ss}\sqrt{\frac{2 P(y|x_{k,h},a_{k,h}^{\pi_k})L}{N_k(x_{k,h}, a_{k,h}^{\pi_k})}} \Vdifftil{k,h+1}(y) + \frac{2L}{3N_k(x_{k,h}, a_{k,h}^{\pi_k})} \sum_{y \in \Ss} \Vdifftil{k,h+1}(y) \label{lem:regr_dec:4} \\
    &\le \sqrt{2L} \underbrace{\sum_{y \in \Ss}\sqrt{\frac{P(y|x_{k,h},a_{k,h}^{\pi_k})}{N_k(x_{k,h}, a_{k,h}^{\pi_k})}} \Vdifftil{k,h+1}(y)}_{(b)} + \frac{2SHL}{3N_k(x_{k,h},a_{k,h}^{\pi_k})} \label{lem:regr_dec:5},
\end{align}

where Equation~\eqref{lem:regr_dec:3} is obtained by applying Lemma~\ref{lem:bernstein_bernoulli} to bound $\Pest_k - P$ and by ovserving that $\Vstar{h+1}(y) \ge \Vpi[k]{h+1}(y)$ by definition, Equation~\eqref{lem:regr_dec:4} is obtained by splitting the terms and observing that $1 - P(y|x,a) \le 1$ for every $x,y \in \Ss$ and $a \in \As$, and finally Equation~\eqref{lem:regr_dec:5} is obtained by upper bounding $\Vdifftil{k,h+1}(y)$ with $H$.
To bound term $(b)$, we first need to define the following set of states:

\begin{equation*}
    [y]_{k,h} \coloneqq \{ y \in \Ss : N_k(x_{k,h}, a_{k,h}^{\pi_k}) P(y | x_{k,h}, a_{k,h}^{\pi_k}) \ge 2 H^2L \}.
\end{equation*}

As such, we can rewrite:

\begin{equation}
\label{lem:regr_dec:6}
    (b) = \underbrace{\sum_{y \in [y]_{k,h}}\sqrt{\frac{P(y|x_{k,h},a_{k,h}^{\pi_k})}{N_k(x_{k,h}, a_{k,h}^{\pi_k})}} \Vdifftil{k,h+1}(y)}_{(c)} + \underbrace{\sum_{y \notin [y]_{k,h}}\sqrt{\frac{P(y|x_{k,h},a_{k,h}^{\pi_k})}{N_k(x_{k,h}, a_{k,h}^{\pi_k})}} \Vdifftil{k,h+1}(y)}_{(d)}.
\end{equation}

We now bound term $(c)$ as:

\begin{align}
    (c) &= \sum_{y \in [y]_{k,h}}\sqrt{\frac{P(y|x_{k,h},a_{k,h}^{\pi_k})}{N_k(x_{k,h}, a_{k,h}^{\pi_k})}} \Vdifftil{k,h+1}(y) \nonumber \\
    &= \sum_{y \in [y]_{k,h}} P(y|x_{k,h},a_{k,h}^{\pi_k}) \sqrt{\frac{1}{N_k(x_{k,h},a_{k,h}^{\pi_k})P(y|x_{k,h},a_{k,h}^{\pi_k})}} \Vdifftil{k,h+1}(y) \nonumber \\
    &= \overline{\varepsilon}_{k,h} + \sqrt{\frac{\mathbb{I}(x_{k,h+1}\in[y]_{k,h})}{N_k(x_{k,h},a_{k,h}^{\pi_k})P(x_{k,h+1}|x_{k,h},a_{k,h}^{\pi_k})}}\Vdifftil{k,h+1}(x_{k,h+1}) \label{lem:regr_dec:7} \\
    &\le \overline{\varepsilon}_{k,h} + \sqrt{\frac{1}{2H^2L}}\Vdifftil{k,h+1}(x_{k,h+1}), \label{lem:regr_dec:8}
\end{align}

where Equation~\eqref{lem:regr_dec:7} is obtained by applying the definition of $\overline{\varepsilon}_{k,h}$:

\begin{align*}
    \overline{\varepsilon}_{k,h} &\coloneqq \sum_{y\in\Ss} P(y|x_{k,h},a_{k,h}^{\pi_k}) \sqrt{\frac{\mathbb{I}(y \in [y]_{k,h})}{N_k(x_{k,h},a_{k,h}^{\pi_k})P(y|x_{k,h},a_{k,h}^{\pi_k})}} \Vdifftil{k,h+1}(y) \\
    &\quad - \sqrt{\frac{\mathbb{I}(x_{k,h+1} \in [y]_{k,h})}{N_k(x_{k,h},a_{k,h}^{\pi_k})P(x_{k,h+1}|x_{k,h},a_{k,h}^{\pi_k})}} \Vdifftil{k,h+1}(x_{k,h+1}),
\end{align*}

and Equation~\eqref{lem:regr_dec:8} is obtained by bounding the indicator function with 1, and by applying the definition of $[y]_{k,h}$. With the same reasoning of $\varepsilon_{k,h}$, we can prove that $\overline{\varepsilon}_{k,h}$ is also a Martingale difference sequence.

We can now bound term $(d)$ as follows:

\begin{align}
    (d) &= \sum_{y \notin [y]_{k,h}} \sqrt{\frac{P(y|x_{k,h},a_{k,h}^{\pi_k})}{N_k (x_{k,h},a_{k,h}^{\pi_k})}} \Vdifftil{k,h+1}(y) \nonumber \\
    &= \sum_{y \notin [y]_{k,h}} \sqrt{\frac{N_k (x_{k,h},a_{k,h}^{\pi_k}) P(y|x_{k,h},a_{k,h}^{\pi_k})}{(N_k (x_{k,h},a_{k,h}^{\pi_k}))^2}} \Vdifftil{k,h+1}(y) \nonumber \\
    &\le \frac{H^2S \sqrt{2L}}{N_k (x_{k,h}, a_{k,h}^{\pi_k})}, \label{lem:regr_dec:9}
\end{align}

where Equation~\eqref{lem:regr_dec:9} is obtained by bounding $\Vdifftil{k,h+1}(y)$ with $H$, and by applying the definition of $[y]_{k,h}$. We can now plug the bounds of $(c)$ and $(d)$ into Equation~\eqref{lem:regr_dec:6} to obtain that:

\begin{equation*}
    (b) \le \overline{\varepsilon}_{k,h} + \sqrt{\frac{1}{2H^2L}}\Vdifftil{k,h+1}(x_{k,h+1}) + \frac{H^2S\sqrt{2L}}{N_k(x_{k,h}, a_{k,h}^{\pi_k})}.
\end{equation*}

By plugging the bound of $(b)$ into Equation~\eqref{lem:regr_dec:5}, we obtain that:

\begin{equation*}
    (a) \le \sqrt{2L} \overline{\varepsilon}_{k,h} + \frac{1}{H} \Vdifftil{k,h+1}(x_{k,h+1}) + \frac{8H^2SL}{3N_k(x_{k,h},a_{k,h}^{\pi_k})}.
\end{equation*}

Finally, substituting the bound on $(a)$ into Equation~\eqref{lem:regr_dec:2}, we obtain that:

\begin{align*}
    \Vdifftil{k,h}(x_{k,h}) &\le \left(1 + \frac{1}{H}\right) \Vdifftil{k,h+1}(x_{k,h+1}) + \bonus_{k,h}(x_{k,h}, a_{k,h}^{\pi_k}) + 
    \xi_{k,h} (x_{k,h}, a_{k,h}^{\pi_k})\\
    &\quad + \varepsilon_{k,h} + \sqrt{2L} \overline{\varepsilon}_{k,h} + \frac{8H^2SL}{3N_k(x_{k,h},a_{k,h}^{\pi_k})}.
\end{align*}

We now apply an inductive argument on $\Vdifftil{k,h}(x_{k,h})$ to isolate the term.

Observing that $\Vdifftil{k,H+1}(x_{k,H+1}) = 0$ by definition, we can rewrite:

\begin{align*}
    \Vdifftil{k,h}(x_{k,h}) &\le \sum_{j=h}^{H-1} \gamma_{j-h} \bigg[ \bonus_{k,j}(x_{k,j}, a_{k,j}^{\pi_k}) + 
    \xi_{k,j} (x_{k,j}, a_{k,j}^{\pi_k}) \\
    &\quad + \varepsilon_{k,j} + \sqrt{2L} \overline{\varepsilon}_{k,j} + \frac{8H^2SL}{3N_k(x_{k,j},a_{k,j}^{\pi_k})} \bigg],
\end{align*}

where $\gamma_{j-h} = \left(1+\frac{1}{H}\right)^{j-h}$. Notice that the summation is limited to $H-1$. This will be recurrent throughout the paper and is due to the fact that, the reward being deterministic, there is no uncertainty at $h=H$.
As such, we can assume that the policies $\pi_k$ for $k \in \dsb{K}$ always play greedily at the last stage of each episode.

Observing that $1 + \frac{1}{H} > 1$, we trivially derive that $\gamma_{j-h} \le \gamma_H$ for $j \in \dsb{h,H}$. Recalling that $\lim_{x \to +\infty} \left(1 + \frac{1}{x}\right)^x = e$, we can bound $\gamma_H \le e$, and rewrite:

\begin{align}
    \begin{split}\label{lem:regr_dec:10}
    \Vdifftil{k,h}(x_{k,h}) &\le e \sum_{j=h}^{H-1} \bigg[ \bonus_{k,j}(x_{k,j}, a_{k,j}^{\pi_k}) +
    \xi_{k,j} (x_{k,j}, a_{k,j}^{\pi_k}) \\
    &\quad + \varepsilon_{k,j} + \sqrt{2L} \overline{\varepsilon}_{k,j} + \frac{8H^2SL}{3N_k(x_{k,j},a_{k,j}^{\pi_k})} \bigg],
    \end{split}
\end{align}

To conclude the proof, we need to show that this holds for any value of $k \in \dsb{K}$. Recalling the definition of $\Omega_{k,h}$:

\begin{equation*}
    \Omega_{k,h} \coloneqq \left\{ \Vest{i,j}(x) \ge \Vstar{j}(x), \forall (i,j) \in [k,h]_{\mathrm{hist}}, x \in \Ss \right\},
\end{equation*}

where $[k,h]_{\mathrm{hist}} \coloneqq \left\{ (i,j) : i \in \dsb{K}, j \in \dsb{H}, (i<k) \vee (i=k, j \ge h) \right\}$, we observe that, if $\Omega_{k,h}$ holds, then also the events $\Omega_{i,j}$ hold for $(i,j)\in[k,h]_{hist}$. As such, we can sum up the previous bound of Equation~\eqref{lem:regr_dec:10} over all the episodes $i \in \dsb{k}$, thus concluding the proof.
\end{proof}


\begin{lemma}
\label{lem:sum_eps}
    Let $k \in \dsb{K}$ and $h \in \dsb{H}$. Let events $\mathcal{E}$ and $\Omega_{k,h}$ hold. Then the following bounds hold:
    \begin{align*}
        \sum_{i=1}^k \sum_{j=h}^H \varepsilon_{i,j} &\le 2\sqrt{H^2 T_k L}, \\
        \sum_{i=1}^k \sum_{j=h}^H \overline{\varepsilon}_{i,j} &\le 2 \sqrt{T_k},
    \end{align*}
    where $T_k = kH$.
\end{lemma}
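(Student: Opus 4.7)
The plan is to obtain both bounds as an immediate consequence of the Azuma--Hoeffding concentration events $\evaz(\mathcal{F}_{\Vdifftil{},k,h}, H, L)$ and $\evaz(\mathcal{F}'_{\Vdifftil{},k,h}, 1/\sqrt{L}, L)$ that are already included in the high-probability event $\mathcal{E}$ restated in Appendix~\ref{apx:event}. Since we are assuming $\mathcal{E}$ holds, the real work is only to check that $\{\varepsilon_{i,j}\}$ and $\{\overline{\varepsilon}_{i,j}\}$ satisfy the hypotheses required to identify them with the sequences appearing inside those events, and then to convert the indexing.

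First, I would verify that both sequences are martingale difference sequences with respect to the natural filtration generated by $\mathcal{H}_{i,j}$. For $\varepsilon_{i,j}$, this is already observed in the proof of Lemma~\ref{lem:regr_dec}: by construction $\mathbb{E}[\varepsilon_{i,j}\mid\mathcal{H}_{i,j}] = 0$, and the uniform bound $|\varepsilon_{i,j}| \le H$ follows from $\Vdifftil{i,j+1}(\cdot) \in [0,H]$ (recall $\Vdifftil{i,j+1} = \Vest{i,j+1} - \Vpi[i]{j+1}$ and both functions lie in $[0,H]$ under the optimism event $\Omega_{k,h}$). An analogous check applies to $\overline{\varepsilon}_{i,j}$: it is zero-mean given $\mathcal{H}_{i,j}$, and the non-trivial point is to bound its range. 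Inside the indicator $\mathbb{I}(y \in [y]_{i,j})$, by definition of $[y]_{i,j}$ we have $N_i(x_{i,j}, a_{i,j}^{\pi_i}) P(y\mid x_{i,j}, a_{i,j}^{\pi_i}) \ge 2H^2 L$, so each summand is bounded by $\sqrt{1/(2H^2 L)}\cdot H = 1/\sqrt{2L}$, yielding an overall per-step range of order $1/\sqrt{L}$, matching the scale passed as second argument to $\evaz(\mathcal{F}'_{\Vdifftil{},k,h}, 1/\sqrt{L}, L)$.

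Second, I would invoke the two Azuma events directly. Under $\mathcal{E}$, the first event gives $\sum_{i=1}^k \sum_{j=h}^{H-1} \varepsilon_{i,j} \le 2\sqrt{k(H-h)H^2 L} \le 2\sqrt{kH \cdot H^2 L} = 2\sqrt{H^2 T_k L}$, using $H-h \le H$ and $T_k = kH$. The second event gives $\sum_{i=1}^k \sum_{j=h}^{H-1} \overline{\varepsilon}_{i,j} \le 2\sqrt{k(H-h)\cdot(1/\sqrt{L})^2 \cdot L} = 2\sqrt{k(H-h)} \le 2\sqrt{T_k}$. Finally, the index range in the statement runs up to $H$ while the events run only to $H-1$: this gap is closed for free because $\Vdifftil{i,H+1}(\cdot) \equiv 0$ by the boundary condition $\Vest{i,H+1} = \Vpi[i]{H+1} = 0$, so both $\varepsilon_{i,H}$ and $\overline{\varepsilon}_{i,H}$ vanish identically.

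There is no serious obstacle here beyond careful bookkeeping: the only slightly delicate step is the range calculation for $\overline{\varepsilon}_{i,j}$, where one must exploit the defining threshold of $[y]_{i,j}$ to argue that the truncation built into the indicator makes the per-step range $1/\sqrt{L}$ rather than blowing up like $1/\sqrt{N_i P}$ without a floor. Once this is observed, both bounds are direct readouts of $\mathcal{E}$.
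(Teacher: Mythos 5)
Your proof is correct and follows essentially the same route as the paper: both bounds are read off directly from the events $\evaz(\mathcal{F}_{\Vdifftil{},k,h},H,L)$ and $\evaz(\mathcal{F}'_{\Vdifftil{},k,h},1/\sqrt{L},L)$ contained in $\mathcal{E}$, using $H-h\le H$ and $T_k=kH$. Your additional checks (the martingale-difference structure, the $1/\sqrt{L}$ range of $\overline{\varepsilon}_{i,j}$ via the threshold defining $[y]_{i,j}$, and the vanishing of the $j=H$ terms since $\Vdifftil{i,H+1}\equiv 0$) are all valid and in fact make explicit some bookkeeping the paper leaves implicit.
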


\begin{proof}
Let us first recall the definitions of $\varepsilon_{i,j}$ and $\overline{\varepsilon}_{i,j}$:
\begin{align*}
    \varepsilon_{i,j} &\coloneqq P(\cdot | x_{i,j}, a_{i,j}^{\pi_i})^\transpose \Vdifftil{i,j+1}(\cdot) - \Vdifftil{i,j+1}(x_{i,j+1}), \\
    \overline{\varepsilon}_{i,j} &\coloneqq \sum_{y\in\Ss} P(y|x_{i,j},a_{i,j}^{\pi_i}) \sqrt{\frac{\mathbb{I}(y \in [y]_{i,j})}{N_i(x_{i,j},a_{i,j}^{\pi_i})P(y|x_{i,j},a_{i,j}^{\pi_i})}} \Vdifftil{i,j+1}(y) \\
    &\quad - \sqrt{\frac{\mathbb{I}(x_{i,j+1} \in [y]_{i,j})}{N_i(x_{i,j},a_{i,j}^{\pi_i})P(x_{i,j+1}|x_{i,j},a_{i,j}^{\pi_i})}} \Vdifftil{i,j+1}(x_{i,j+1}),
\end{align*}

where:

\begin{equation*}
    [y]_{k,h} \coloneqq \{ y \in \Ss : N_k(x_{k,h}, a_{k,h}^{\pi_k}) P(y | x_{k,h}, a_{k,h}^{\pi_k}) \ge 2 H^2L \}.
\end{equation*}

Under event $\mathcal{E}$ the following events hold:
\begin{equation*}
    \mathcal{E}_{\mathrm{az}}(\mathcal{F}_{\Vdifftil{}, k, h}, H, L), \quad\text{and}\quad
    \mathcal{E}_{\mathrm{az}}(\mathcal{F}_{\Vdifftil{}, k, h}', 1/\sqrt{L}, L).
\end{equation*}

Event $\mathcal{E}_{\mathrm{az}}(\mathcal{F}_{\Vdifftil{}, k, h}, H, L)$ is defined as the event such that:

\begin{align*}
    \sum_{i=1}^k \sum_{j=h}^{H-1} \left[ \sum_{y\in\Ss} P(y|x_{i,j},a_{i,j}^{\pi_i})\Vdifftil{i,j+1}(y) - \Vdifftil{i,j+1}(x_{i,j+1}) \right] &\le 2 \sqrt{k (H-1-h) H^2 L} \\
    &\le 2 \sqrt{H^2 T_k L}.
\end{align*}

Under this event, we can apply the definition of $\varepsilon_{i,j}$ and derive that:

\begin{equation*}
    \sum_{i=1}^k \sum_{j=h}^{H-1} \varepsilon_{i,j} \le 2 \sqrt{H^2 T_k L}.
\end{equation*}

Event $\mathcal{E}_{\mathrm{az}}(\mathcal{F}_{\Vdifftil{}, k, h}', 1/\sqrt{L}, L)$, on the other hand, is defined as the event such that:

\begin{align*}
    \sum_{i=1}^k \sum_{j=h}^H \bigg[ &\sum_{y\in\Ss} P(y|x_{i,j},a_{i,j}^{\pi_i}) \sqrt{\frac{\mathbb{I}(y \in [y]_{i,j})}{N_i(x_{i,j},a_{i,j}^{\pi_i})P(y|x_{i,j},a_{i,j}^{\pi_i})}} \Vdifftil{i,j+1}(y) \bigg] \\
    &\quad - \sqrt{\frac{\mathbb{I}(y \in [y]_{i,j})}{N_i(x_{i,j},a_{i,j}^{\pi_i})P(y|x_{i,j},a_{i,j}^{\pi_i})}} \Vdifftil{i,j+1}(x_{i,j+1}) \\
    &\le 2 \sqrt{k(H-h) \frac{1}{\sqrt{L}^2} L} \\
    &\le 2 \sqrt{T_k}.
\end{align*}

Under this event, we can apply the definition of $\overline{\varepsilon}_{i,j}$ and derive that:

\begin{equation*}
    \sum_{i=1}^k \sum_{j=h}^{H-1} \overline{\varepsilon}_{i,j} \le 2 \sqrt{T_k},
\end{equation*}

thus concluding the proof.
\end{proof}


\begin{lemma}
\label{lem:sum_eps_x}
    Let $k \in \dsb{K}$, $h \in \dsb{H}$, and $x \in \Ss$. Let events $\mathcal{E}$ and $\Omega_{k,h}$ hold. Then the following bounds hold:
    \begin{align*}
        \sum_{i=1}^k \mathbb{I}(x_{i,h}=x) \sum_{j=h}^H \varepsilon_{i,j} &\le 2\sqrt{H^3 N_{k,h}'(x) L}, \\
        \sum_{i=1}^k \mathbb{I}(x_{i,h}=x) \sum_{j=h}^H \overline{\varepsilon}_{i,j} &\le 2 \sqrt{H N_{k,h}'(x)}.
    \end{align*}
\end{lemma}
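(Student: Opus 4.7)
The plan is to mirror the argument of Lemma~\ref{lem:sum_eps} almost verbatim, but invoke the state-indexed variants of the Azuma-type concentration events included in the definition of $\mathcal{E}$, namely $\evaz(\mathcal{F}_{\Vdifftil{},k,h,x}, H, L)$ and $\evaz(\mathcal{F}'_{\Vdifftil{},k,h,x}, 1/\sqrt{L}, L)$, which are exactly the same martingale bounds as before but with the summand multiplied by the indicator $\mathbb{I}(x_{i,h}=x)$ and with $k$ replaced by $N'_{k,h}(x)$ in the effective length of the sequence.

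First I would recall the definitions of $\varepsilon_{i,j}$ and $\overline{\varepsilon}_{i,j}$ from Appendix~\ref{apx:notation}, observing that in both cases the expression after the indicator is, conditionally on $\histij$, a martingale difference in $j$ (for each fixed $i$) and that the outer indicator $\mathbb{I}(x_{i,h}=x)$ is $\histih$-measurable, so multiplying by it preserves the martingale-difference property of the overall sequence indexed by $(i,j)$ with $j\ge h$. The total number of nonzero episodes contributing to the sum is exactly $N'_{k,h}(x)$ by definition, so the effective length of the martingale is $N'_{k,h}(x)(H-h) \le N'_{k,h}(x) H$.

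Next I would directly apply the bounds provided by the two events under $\mathcal{E}$:
\begin{align*}
    \sum_{i=1}^k \mathbb{I}(x_{i,h}=x) \sum_{j=h}^{H-1} \varepsilon_{i,j} &\le 2\sqrt{N'_{k,h}(x)(H-h) H^2 L} \le 2\sqrt{H^3 N'_{k,h}(x) L},\\
    \sum_{i=1}^k \mathbb{I}(x_{i,h}=x) \sum_{j=h}^{H-1} \overline{\varepsilon}_{i,j} &\le 2\sqrt{N'_{k,h}(x)(H-h) \tfrac{1}{\sqrt{L}^2} L} \le 2\sqrt{H N'_{k,h}(x)},
\end{align*}
where the second line uses $\frac{1}{\sqrt{L}^2}\cdot L = 1$ and $H-h \le H$. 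Extending the inner sum from $H-1$ to $H$ is free since by convention there is no exploration uncertainty at the last stage and the extra term vanishes (analogously to how Lemma~\ref{lem:sum_eps} presented the upper index as $H$).

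The only non-routine point is being explicit that the state-indexed events are indeed included in the definition of the high-probability event $\mathcal{E}$ (which Appendix~\ref{apx:event} confirms) and that the parameter instantiations match: the range parameter $H$ in the $\varepsilon$-bound coincides with the trivial upper bound $|\Vdifftil{i,j+1}|\le H$, and the range parameter $1/\sqrt{L}$ in the $\overline{\varepsilon}$-bound matches the trivial bound on the summands of $\overline{\varepsilon}_{i,j}$, which are at most $H \cdot \sqrt{1/(2H^2L)} = 1/\sqrt{2L}$ on the set $[y]_{k,h}$ by definition. Once these parameter matches are verified, the lemma follows immediately from the two concentration inequalities, concluding the proof.
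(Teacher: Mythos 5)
Your proposal is correct and follows essentially the same route as the paper: both simply invoke the state-indexed Azuma events $\evaz(\mathcal{F}_{\Vdifftil{},k,h,x},H,L)$ and $\evaz(\mathcal{F}'_{\Vdifftil{},k,h,x},1/\sqrt{L},L)$ contained in $\mathcal{E}$ and read off the two bounds after simplifying $\frac{1}{\sqrt{L}^2}\cdot L=1$ and $H-h\le H$. Your additional remarks on the measurability of the indicator and the matching of the range parameters are sound elaborations of steps the paper leaves implicit.
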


\begin{proof}
In a similar way to the proof of Lemma~\ref{lem:sum_eps}, we recall the definitions of $\varepsilon_{i,j}$ and $\overline{\varepsilon}_{i,j}$:
\begin{align*}
    \varepsilon_{i,j} &\coloneqq P(\cdot | x_{i,j}, a_{i,j}^{\pi_i})^\transpose \Vdifftil{i,j+1}(\cdot) - \Vdifftil{i,j+1}(x_{i,j+1}), \\
    \overline{\varepsilon}_{i,j} &\coloneqq \sum_{y\in\Ss} P(y|x_{i,j},a_{i,j}^{\pi_i}) \sqrt{\frac{\mathbb{I}(y \in [y]_{i,j})}{N_i(x_{i,j},a_{i,j}^{\pi_i})P(y|x_{i,j},a_{i,j}^{\pi_i})}} \Vdifftil{i,j+1}(y) \\
    &\quad - \sqrt{\frac{\mathbb{I}(x_{i,j+1} \in [y]_{i,j})}{N_i(x_{i,j},a_{i,j}^{\pi_i})P(x_{i,j+1}|x_{i,j},a_{i,j}^{\pi_i})}} \Vdifftil{i,j+1}(x_{i,j+1}),
\end{align*}

where:

\begin{equation*}
    [y]_{k,h} \coloneqq \{ y \in \Ss : N_k(x_{k,h}, a_{k,h}^{\pi_k}) P(y | x_{k,h}, a_{k,h}^{\pi_k}) \ge 2 H^2L \}.
\end{equation*}

Under event $\mathcal{E}$ the following events hold:
\begin{equation*}
    \mathcal{E}_{\mathrm{az}}(\mathcal{F}_{\Vdifftil{}, k, h, x}, H, L), \quad\text{and}\quad
    \mathcal{E}_{\mathrm{az}}(\mathcal{F}'_{\Vdifftil{}, k, h, x}, 1/\sqrt{L}, L).
\end{equation*}

Event $\mathcal{E}_{\mathrm{az}}(\mathcal{F}_{\Vdifftil{}, k, h, x}, H, L)$ is defined as the event such that:

\begin{align*}
    \sum_{i=1}^k \mathbb{I}(x_{i,h}=x) \sum_{j=h}^{H-1} \left[ \sum_{y\in\Ss} P(y|x_{i,j},a_{i,j}^{\pi_i})\Vdifftil{i,j+1}(y) - \Vdifftil{i,j+1}(x_{i,j+1}) \right] \le 2 \sqrt{H^3 N'_{k,h}(x) L}.
\end{align*}

Under this event, we can apply the definition of $\varepsilon_{i,j}$ and derive that:

\begin{equation*}
    \sum_{i=1}^k \mathbb{I}(x_{i,h}=x) \sum_{j=h}^{H-1} \varepsilon_{i,j} \le 2 \sqrt{H^3 N'_{k,h}(x) L}.
\end{equation*}

Event $\mathcal{E}_{\mathrm{az}}(\mathcal{F}'_{\Vdifftil{}, k, h, x}, 1/\sqrt{L}, L)$, on the other hand, is defined as the event such that:

\begin{align*}
    \sum_{i=1}^k \mathbb{I}(x_{i,h}=x) \sum_{j=h}^H \bigg[ &\sum_{y\in\Ss} P(y|x_{i,j},a_{i,j}^{\pi_i}) \sqrt{\frac{\mathbb{I}(y \in [y]_{i,j})}{N_i(x_{i,j},a_{i,j}^{\pi_i})P(y|x_{i,j},a_{i,j}^{\pi_i})}} \Vdifftil{i,j+1}(y) \bigg] \\
    &\quad - \sqrt{\frac{\mathbb{I}(x_{i,j+1} \in [y]_{i,j})}{N_i(x_{i,j},a_{i,j}^{\pi_i})P(x_{i,j+1}|x_{i,j},a_{i,j}^{\pi_i})}} \Vdifftil{i,j+1}(x_{i,j+1}) \\
    &\le 2 \sqrt{N'_{k,h}(x) (H-h) \frac{1}{\sqrt{L}^2} L} \\
    &\le 2 \sqrt{H N'_{k,h}(x)}.
\end{align*}

Under this event, we can apply the definition of $\overline{\varepsilon}_{i,j}$ and derive that:

\begin{equation*}
    \sum_{i=1}^k \mathbb{I}(x_{i,h}=x) \sum_{j=h}^{H-1} \overline{\varepsilon}_{i,j} \le 2 \sqrt{H N'_{k,h}(x)},
\end{equation*}

thus concluding the proof.

\end{proof}


\begin{lemma}
\label{lem:sum_next_state_var}
Let $k \in \dsb{K}$ and $h \in \dsb{H}$. Let $\pi_k$ be the policy followed during episode $k$. Under the events $\mathcal{E}$ and $\Omega_{k,h}$, the following holds for every $x \in \Ss$:

\begin{align*}
    \sum_{i=1}^k \sum_{j=h}^{H-1} \VVpi[i]{j+1} (x_{i,j}, a_{i,j}^{\pi_i}) &\le HT_k + 2\sqrt{H^4 T_k L} + \frac{4}{3}H^3L, \\
    \sum_{i=1}^k \mathbb{I}(x_{i,h}=x) \sum_{j=h}^{H-1} \VVpi[i]{j+1} (x_{i,j}, a_{i,j}^{\pi_i}) &\le H^2 N'_{k,h}(x) + 2 \sqrt{H^5 N'_{k,h}(x) L} + \frac{4}{3} H^3 L.
\end{align*}
\end{lemma}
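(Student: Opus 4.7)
The plan is to bound each random sum by its conditional expectation via a Freedman-type concentration event contained in $\mathcal{E}$, and then to bound that conditional expectation by means of the Law of Total Variance applied to the finite-horizon cumulative return.

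For the first inequality, I would invoke the event $\evfr(\mathcal{G}_{\VV, k, h}, H^4T, H^3, L)$, which is a component of $\mathcal{E}$ and, read in the two-sided direction needed here, yields
\begin{equation*}
\sum_{i=1}^k \sum_{j=h}^{H-1} \VVpi[i]{j+1}(x_{i,j}, a_{i,j}^{\pi_i}) \le \sum_{i=1}^k \mathbb{E}\!\left[ \sum_{j=h}^{H-1} \VVpi[i]{j+1}(x_{i,j}, a_{i,j}^{\pi_i}) \,\bigg|\, \mathcal{H}_{i,h} \right] + 2\sqrt{H^4 T_k L} + \tfrac{4}{3}H^3L.
\end{equation*}
What then remains is to control the conditional expectation, and for this I would apply the Law of Total Variance in the form
\begin{equation*}
\Var^{\pi_i}\!\left[ \sum_{j=h}^{H} r_{i,j} \,\bigg|\, x_{i,h} \right] = \mathbb{E}^{\pi_i}\!\left[ \sum_{j=h}^{H-1} \VVpi[i]{j+1}(x_{i,j}, a_{i,j}^{\pi_i}) \,\bigg|\, x_{i,h} \right].
\end{equation*}
Since the cumulative reward lies in $[0,H]$, its variance is at most $H^2$, so the inner conditional expectation is bounded by $H^2$ for every episode. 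Summing over $i = 1, \dots, k$ produces $kH^2 = HT_k$ because $T_k = kH$, and combining this with the Freedman fluctuation term yields the first claim.

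For the second inequality, the argument is structurally identical but restricted to the episodes in which $x_{i,h} = x$, via the indicator $\mathbb{I}(x_{i,h} = x)$. I would invoke the event $\evfr(\mathcal{G}_{\VV, k, h, x}, H^5 N'_{k,h}(x), H^3, L)$ in place of the previous one, whose fluctuation term is $2\sqrt{H^5 N'_{k,h}(x) L} + \tfrac{4}{3}H^3L$. The Law of Total Variance bound on the conditional expectation is unchanged, yielding at most $H^2$ per contributing episode. Since only $N'_{k,h}(x)$ episodes satisfy the indicator condition, summing the indicator-weighted expectations produces $H^2 N'_{k,h}(x)$, which combines with the Freedman term to give the stated inequality.

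The main obstacle is the Law of Total Variance step: one must apply it carefully under the \emph{realized} policy $\pi_i$ of episode $i$, expressing $\Var^{\pi_i}$ of the cumulative return as the expected sum of the one-step next-state variances of $\Vpi[i]{}$ along the trajectory. The direction of the Freedman-type events is a minor subtlety, resolved by the fact that these concentration events are used in the two-sided fashion, i.e., applying the underlying Freedman's inequality both to the martingale difference sequence and to its negation, which is standard for these concentration tools.
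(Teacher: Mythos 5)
Your proposal matches the paper's proof essentially step for step: it invokes the same two Freedman-type events $\evfr(\mathcal{G}_{\VV,k,h},H^4T,H^3,L)$ and $\evfr(\mathcal{G}_{\VV,k,h,x},H^5 N'_{k,h}(x),H^3,L)$ to reduce the random sums to their conditional expectations, then applies the Law of Total Variance to identify the conditional expectation with the variance of the cumulative return, bounded by $H^2$ per episode, giving $kH^2 = HT_k$ and $H^2 N'_{k,h}(x)$ respectively. Your remark on the sign convention of the Freedman events is apt, since the event as written in the appendix controls the deviation in the opposite direction from the one the lemma needs, and the paper silently uses it two-sided just as you describe.
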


\begin{proof}
We begin the proof by restating the definition of $\VVpi{j+1} (x_{i,j}, a_{i,j}^{\pi_i})$:

\begin{equation*}
    \VVpi{j+1} (x_{i,j}, a_{i,j}^{\pi_i}) \coloneqq \Var_{y \sim P(\cdot | x_{i,j}, a_{i,j}^{\pi_i})}[\Vpi{j+1} (y)]
\end{equation*}

Under event $\mathcal{E}$, the following events hold:

\begin{equation*}
    \mathcal{E}_{\mathrm{fr}} (\mathcal{G}_{\VV, k, h}, H^4 T_k, H^3, L) \quad\text{and}\quad \mathcal{E}_{\mathrm{fr}} (\mathcal{G}_{\VV, k, h, x}, H^5 N'_{k,h}, H^3, L).
\end{equation*}

Event $\mathcal{E}_{\mathrm{fr}} (\mathcal{G}_{\VV, k, h}, H^4 T_k, H^3, L)$ is defined as the event such that:

\begin{equation}
    \sum_{i=1}^k \sum_{j=h}^{H-1} \VVpi{j+1} (x_{i,j}, a_{i,j}^{\pi_i}) - \sum_{i=1}^k \mathbb{E} \left[ \sum_{j=h}^{H-1} \VVpi{j+1} (x_{i,j}, a_{i,j}^{\pi_i}) | \histkh \right] \le 2\sqrt{H^4 T_k L} + \frac{4H^3 L}{3}, \nonumber
\end{equation}

which implies that:

\begin{equation}
    \sum_{i=1}^k \sum_{j=h}^{H-1} \VVpi{j+1} (x_{i,j}, a_{i,j}^{\pi_i}) \le \sum_{i=1}^k \mathbb{E} \left[ \sum_{j=h}^{H-1} \VVpi{j+1} (x_{i,j}, a_{i,j}^{\pi_i}) | \histkh \right] + 2\sqrt{H^4 T_k L} + \frac{4H^3 L}{3}. \label{lem:sum_next_state_var:1}
\end{equation}

On the other hand, event $\mathcal{E}_{\mathrm{fr}} (\mathcal{G}_{\VV, k, h, x}, H^5 N'_{k,h}, H^3, L)$ is defined as the event such that:

\begin{align}
    \sum_{i=1}^k \mathbb{I}(x_{i,h}=x) \sum_{j=h}^{H-1} \VVpi{j+1} (x_{i,j}, a_{i,j}^{\pi_i}) &- \sum_{i=1}^k \mathbb{I}(x_{i,h}=x) \mathbb{E} \left[ \sum_{j=h}^{H-1} \VVpi{j+1} (x_{i,j}, a_{i,j}^{\pi_i}) | \histkh \right] \nonumber \\ 
    &\le 2\sqrt{H^5 N'_{k,h}(x) L} + \frac{4H^3 L}{3}, \nonumber
\end{align}

which implies that:

\begin{align}
    \begin{split}
    \label{lem:sum_next_state_var:2}
    \sum_{i=1}^k \mathbb{I}(x_{i,h}=x) \sum_{j=h}^{H-1} \VVpi{j+1} (x_{i,j}, a_{i,j}^{\pi_i}) &\le \sum_{i=1}^k \mathbb{I}(x_{i,h}=x) \mathbb{E} \left[ \sum_{j=h}^{H-1} \VVpi{j+1} (x_{i,j}, a_{i,j}^{\pi_i}) | \histkh \right] \\ 
    &\quad + 2\sqrt{H^5 N'_{k,h}(x) L} + \frac{4H^3 L}{3}. 
    \end{split}
\end{align}

Observe that by applying the \emph{law of total variance} \citep[LTV, see, \eg Theorem 9.5.5 of][]{blitzstein2019}, we can write:

\begin{align}
    \begin{split}
    \label{lem:sum_next_state_var:3}
    \underset{x_{i,h+1}, \ldots, x_{i, H-1}}{\Var} \left[ \sum_{j=h}^{H-1} R^\pi (x_{i,j}) \right] &= \underbrace{\underset{x_{i,h+1}}{\Var}\left[ \underset{x_{i,h+2}, \ldots, x_{i,H-1}}{\mathbb{E}} \left[ \sum_{j=h}^{H-1} R^\pi (x_{i,j}) \bigg| x_{i,h+1} \right] \right]}_{(a)} \\
    &\quad + \underset{x_{i,h+1}}{\mathbb{E}} \left[ \underbrace{\underset{x_{i,h+2}, \ldots, x_{i,H-1}}{\Var} \left[ \sum_{j=h}^{H-1} R^\pi (x_{i,j}) \bigg| x_{i,h+1} \right]}_{(b)} \right].
    \end{split}
\end{align}

Term $(a)$ can be rewritten as:

\begin{align}
    (a) &= \underset{x_{i,h+1}}{\Var} \left[ R^\pi (x_{i,h}) + \underset{x_{i,h+2}, \ldots, x_{i,H-1}}{\mathbb{E}} \left[ \sum_{j=h+1}^{H-1} R^\pi (x_{i,j}) \bigg| x_{i,h+1} \right] \right] \nonumber \\
    &= \underset{x_{i,h+1}}{\Var} \left[ \Vpi{h+1} (x_{i, h+1}) \right] \label{lem:sum_next_state_var:4} \\
    &= \VVpi{h+1}(x_{i,h}, a_{i,h}^{\pi_i}),
\end{align}

where Equation~\eqref{lem:sum_next_state_var:4} is obtained by observing that $R^\pi (x_{i,h})$ has zero variance \wrt $x_{i,h+1}$, and by applying the definition of value function.

We can then recursively apply the LTV to term $(b)$ and, considering the expectation over the trajectory generated following policy $\pi$ from stage $h$ onward, we can write:

\begin{equation}
\label{lem:sum_next_state_var:5}
    \underset{x_{i,h+1}, \ldots, x_{i, H-1}}{\Var} \left[ \sum_{j=h}^{H-1} R^\pi (x_{i,j}) \right] = \mathbb{E} \left[ \sum_{j=h}^{H-1} \VVpi{j+1} (x_{i,j}, a_{i,j}^{\pi_i}) \right].
\end{equation}

By applying the result of Equation~\eqref{lem:sum_next_state_var:5} to Equations~\eqref{lem:sum_next_state_var:1} and \eqref{lem:sum_next_state_var:2}, we get:

\begin{align}
    \sum_{i=1}^k \mathbb{E} \left[ \sum_{j=h}^{H-1} \VVpi{j+1} (x_{i,j}, a_{i,j}^{\pi_i}) | \histkh \right] &= \sum_{i=1}^k \Var \left[ \sum_{j=h+1}^{H-1} R^\pi (x_{i,j}) \right] \nonumber \\
    &\le k (H-h)^2 \nonumber \\
    &\le H T_k, \label{lem:sum_next_state_var:6}
\end{align}

and:

\begin{align}
    \sum_{i=1}^k \mathbb{I}(x_{i,h}=x) \mathbb{E} \left[ \sum_{j=h}^{H-1} \VVpi{j+1} (x_{i,j}, a_{i,j}^{\pi_i}) | \histkh \right] &= \sum_{i=1}^k \mathbb{I}(x_{i,h}=x) \Var\left[ \sum_{j=h+1}^{H-1} R^\pi (x_{i,j}) \right] \nonumber \\
    &\le N'_{k,h}(x) (H-h)^2 \nonumber \\
    &\le H^2 N'_{k,h}(x). \label{lem:sum_next_state_var:7}
\end{align}

Finally, we can plug Equations~\eqref{lem:sum_next_state_var:6} and \eqref{lem:sum_next_state_var:7} into Equations~\eqref{lem:sum_next_state_var:1} and \eqref{lem:sum_next_state_var:2}, respectively, obtaining:

\begin{align*}
    \sum_{i=1}^k \sum_{j=h}^{H-1} \VVpi{j+1} (x_{i,j}, a_{i,j}^{\pi_i}) &\le HT_k + 2\sqrt{H^4 T_k L} + \frac{4}{3}H^3L, \\
    \sum_{i=1}^k \mathbb{I}(x_{i,h}=x) \sum_{j=h}^{H-1} \VVpi{j+1} (x_{i,j}, a_{i,j}^{\pi_i}) &\le H^2 N'_{k,h}(x) + 2 \sqrt{H^5 N'_{k,h}(x) L} + \frac{4}{3} H^3 L,
\end{align*}

thus concluding the proof.
\end{proof}


\begin{lemma}
\label{lem:sum_var_diff_opt}
    Let $k \in \dsb{K}$ and $h \in \dsb{H}$. Let $\pi_k$ be the policy played during episode $k$. Under the events $\mathcal{E}$ and $\Omega_{k,h}$, the following holds for every $x \in \Ss$:
    \begin{align*}
        \sum_{i=1}^k \sum_{j=h}^{H-1} \left( \VVopt{j+1} (x_{i,j}, a_{i,j}^{\pi_i}) - \VVpi[i]{j+1} (x_{i,j}, a_{i,j}^{\pi_i}) \right) &\le 2 H \sum_{i=1}^k \sum_{j=h}^{H-1} \Vdifftil{i,j}(x_{i,j}) + 4 H^2 \sqrt{T_k L}, \\
        \sum_{i=1}^k \mathbb{I}(x_{i,h}=x) \sum_{j=h}^{H-1} \left( \VVopt{j+1} (x_{i,j}, a_{i,j}^{\pi_i}) - \VVpi[i]{j+1} (x_{i,j}, a_{i,j}^{\pi_i}) \right) &\le 2 H \sum_{i=1}^k \mathbb{I}(x_{i,h}=x) \sum_{j=h}^{H-1} \Vdifftil{i,h}(x_{i,h}) \\
        &\quad + 4H^2 \sqrt{H N'_{k,h}(x) L}.
    \end{align*}
\end{lemma}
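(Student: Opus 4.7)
The plan is to first derive a pointwise upper bound on $\VVopt{j+1} - \VVpi[i]{j+1}$ in terms of the expected value-function gap, and then to invoke one of the Azuma--Hoeffding concentration events contained in $\mathcal{E}$ to replace that expected gap by its realization along the sampled trajectory.

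For the pointwise step, I would apply the elementary variance identity
\begin{equation*}
    \Var[X] - \Var[Y] = \mathbb{E}[(X-Y)(X+Y)] - (\mathbb{E}[X]-\mathbb{E}[Y])(\mathbb{E}[X]+\mathbb{E}[Y])
\end{equation*}
with $X = \Vstar{j+1}(Y')$ and $Y = \Vpi[i]{j+1}(Y')$ for $Y' \sim P(\cdot | x_{i,j}, a_{i,j}^{\pi_i})$. Because $0 \le \Vpi[i]{j+1} \le \Vstar{j+1} \le H$, the factor $X+Y$ is bounded by $2H$ and the subtracted product is nonnegative, which yields
\begin{equation*}
    \VVopt{j+1}(x_{i,j}, a_{i,j}^{\pi_i}) - \VVpi[i]{j+1}(x_{i,j}, a_{i,j}^{\pi_i}) \le 2H \sum_{y \in \Ss} P(y | x_{i,j}, a_{i,j}^{\pi_i}) \Vdiff{i,j+1}(y),
\end{equation*}
and under $\Omega_{k,h}$ we may then replace $\Vdiff{i,j+1}$ by $\Vdifftil{i,j+1}$ on the right-hand side (since $\Vdiff \le \Vdifftil$).

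Summing over $i \in \dsb{k}$ and $j$ from $h$ to $H-1$, I would use the event $\mathcal{E}_{\mathrm{az}}(\mathcal{F}_{\Vdifftil{}, k, h}, H, L) \subseteq \mathcal{E}$ to bound
\begin{equation*}
    \sum_{i=1}^{k} \sum_{j=h}^{H-1} \sum_{y \in \Ss} P(y | x_{i,j}, a_{i,j}^{\pi_i}) \Vdifftil{i,j+1}(y) \le \sum_{i=1}^{k} \sum_{j=h}^{H-1} \Vdifftil{i,j+1}(x_{i,j+1}) + 2\sqrt{H^2 T_k L}.
\end{equation*}
An index shift on the right, together with $\Vdifftil{i,H+1} \equiv 0$ and the nonnegativity of $\Vdifftil{i,h}(x_{i,h})$, gives $\sum_{j=h}^{H-1} \Vdifftil{i,j+1}(x_{i,j+1}) \le \sum_{j=h}^{H-1} \Vdifftil{i,j}(x_{i,j})$. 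Multiplying by the outer factor $2H$ from the pointwise step produces the first claim with deviation $4H^2 \sqrt{T_k L}$. The second, state-localized bound is obtained by the identical two-step argument, inserting the indicator $\mathbb{I}(x_{i,h} = x)$ in every sum and using instead $\mathcal{E}_{\mathrm{az}}(\mathcal{F}_{\Vdifftil{}, k, h, x}, H, L) \subseteq \mathcal{E}$, whose deviation $2\sqrt{(H-h) H^2 N'_{k,h}(x) L} \le 2\sqrt{H^3 N'_{k,h}(x) L}$ becomes the stated $4H^2 \sqrt{H N'_{k,h}(x) L}$ after multiplication by $2H$.

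The only delicate point is the variance identity itself: exploiting $\Vstar{j+1} \ge \Vpi[i]{j+1}$ to discard the second (nonnegative) product and the range bound $\Vstar{j+1} + \Vpi[i]{j+1} \le 2H$ is what yields the correct linear-in-$H$ scaling in front of the value gap; a looser bound going through $|X^2 - Y^2|$ alone would lose a factor of $H$. Everything else reduces to concentration events already furnished by $\mathcal{E}$ and a routine re-indexing using the boundary condition $\Vdifftil{i, H+1} \equiv 0$.
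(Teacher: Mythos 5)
Your proof is correct and follows essentially the same route as the paper: the pointwise bound that the variance gap is at most $2H$ times the expected value gap (the paper phrases it via $\mathbb{E}[(\Vstar{j+1}(y))^2-(\Vpi[i]{j+1}(y))^2]$ after dropping the nonnegative difference of squared means, which is the same computation as your identity), followed by optimism to pass from $\Vdiff{}$ to $\Vdifftil{}$ and the Azuma events $\evaz(\mathcal{F}_{\Vdifftil{},k,h},H,L)$ and $\evaz(\mathcal{F}_{\Vdifftil{},k,h,x},H,L)$. The only nit is that your re-indexing justification cites $\Vdifftil{i,H+1}\equiv 0$, whereas the last shifted index is $H$, so what is actually needed is $\Vdifftil{i,H}(x_{i,H})\le \Vdifftil{i,h}(x_{i,h})$ (which holds since the reward is deterministic and known and the policy is greedy at the last stage, giving $\Vdifftil{i,H}=0$, together with nonnegativity under optimism) — a step the paper itself also passes over silently.
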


\begin{proof}
We demonstrate the result by providing an upper bound to $\VVopt{j+1} - \VVpi[i]{j+1}$ first, and then bounding its summation over episodes and stages.
We can demonstrate that:

\begin{align}
    \VVopt{j+1} (x_{i,j}, a_{i,j}^{\pi_i}) - \VVpi[i]{j+1} (x_{i,j}, a_{i,j}^{\pi_i}) &= \Var_{y \sim P(\cdot | x_{i,j}, a_{i,j}^{\pi_i})} [\Vstar{j+1}(y)] - \Var_{y \sim P(\cdot | x_{i,j}, a_{i,j}^{\pi_i})} [\Vpi[i]{j+1} (y)] \nonumber \\
    &\le \mathbb{E}_{y \sim P(\cdot | x_{i,j}, a_{i,j}^{\pi_i})} [(\Vstar{j+1}(y))^2 - (\Vpi[i]{j+1}(y))^2] \label{lem:sum_var_diff_opt:1} \\
    &\le 2H \mathbb{E}_{y \sim P(\cdot | x_{i,j}, a_{i,j}^{\pi_i})} [\Vstar{j+1}(y) - \Vpi[i]{j+1}(y)] \label{lem:sum_var_diff_opt:2},
\end{align}

where Equation~\eqref{lem:sum_var_diff_opt:1} is obtained by applying the definition of variance and observing that $\Vstar{j+1}(x) \ge \Vpi[i]{j+1}(x)$ by definition, and Equation~\eqref{lem:sum_var_diff_opt:2} is obtained by expanding the square and by observing that $\Vpi[]{j+1}(x) \le \Vstar{j+1}(x) \le H$.

Using the argument of Equation~\eqref{lem:sum_var_diff_opt:2}, we obtain the following inequalities:

\begin{align}
    \begin{split}
    \label{lem:sum_var_diff_opt:3}
    \sum_{i=1}^k \sum_{j=h}^{H-1} \big( \VVopt{j+1} (x_{i,j}, a_{i,j}^{\pi_i}) &- \VVpi[i]{j+1} (x_{i,j}, a_{i,j}^{\pi_i}) \big) \\
    &\le 2 H \underbrace{\sum_{i=1}^k \sum_{j=h}^{H-1} \mathbb{E}_{y \sim P(\cdot | x_{i,j}, a_{i,j}^{\pi_i})} [\Vdiff{i,j+1}(y)]}_{(a)},
    \end{split} \\
    \begin{split}
    \label{lem:sum_var_diff_opt:4}
    \sum_{i=1}^k \mathbb{I}(x_{i,h}=x) \sum_{j=h}^{H-1} \big( \VVopt{j+1} (x_{i,j}, a_{i,j}^{\pi_i}) &- \VVpi[i]{j+1} (x_{i,j}, a_{i,j}^{\pi_i}) \big) \\
    &\le 2 H \underbrace{\sum_{i=1}^k \mathbb{I}(x_{i,h}=x) \sum_{j=h}^{H-1} \mathbb{E}_{y \sim P(\cdot | x_{i,j}, a_{i,j}^{\pi_i})} [\Vdiff{i,j+1}(y)]}_{(b)}.
    \end{split}
\end{align}

We now bound term $(a)$ as follows:

\begin{align}
    (a) &\le \sum_{i=1}^k \sum_{j=h}^{H-1} \mathbb{E}_{y \sim P(\cdot | x_{i,j}, a_{i,j}^{\pi_i})} [\Vdifftil{i, j+1}(y)] \label{lem:sum_var_diff_opt:5} \\
    &\le 2 \sqrt{H^2 T_k L} + \sum_{i=1}^k \sum_{j=h}^{H-1} \Vdifftil{i,j+1}(x_{i,j+1}) \label{lem:sum_var_diff_opt:6}
\end{align}

where Equation~\eqref{lem:sum_var_diff_opt:5} is obtained because, under $\Omega_{k,h}$, it holds that ${\Vstar{j+1}(y) \le \Vest{i,j+1}(y)}$. Equation~\eqref{lem:sum_var_diff_opt:6} is obtained by considering that, under event $\mathcal{E}$, the event $\mathcal{E}_{\mathrm{az}} (\mathcal{F}_{\Vdifftil{}, k, h}, H, L)$ holds, as shown in Lemma~\ref{lem:sum_eps}.

Following a similar procedure, we bound term $(b)$ by considering event $\mathcal{E}_{\mathrm{az}}(\mathcal{F}_{\Vdifftil{}, k, h, x}, H, L)$, thus obtaining:

\begin{equation}
\label{lem:sum_var_diff_opt:7}
    (b) \le 2H \sqrt{H N'_{k,h}(x) L} + \sum_{i=1}^k \mathbb{I}(x_{i,h}=x) \sum_{j=h}^{H-1} \Vdifftil{i,j+1}(x_{i,j+1}).
\end{equation}

We can then plug Equations~\eqref{lem:sum_var_diff_opt:6} and \eqref{lem:sum_var_diff_opt:7} into Equations~\eqref{lem:sum_var_diff_opt:3} and \eqref{lem:sum_var_diff_opt:4}, respectively, to write:

\begin{align*}
    \sum_{i=1}^k \sum_{j=h}^{H-1} \left( \VVopt{j+1} (x_{i,j}, a_{i,j}^{\pi_i}) - \VVpi[i]{j+1} (x_{i,j}, a_{i,j}^{\pi_i}) \right) &\le 2 H \sum_{i=1}^k \sum_{j=h}^{H-1} \Vdifftil{i,j}(x_{i,j}) + 4 H^2 \sqrt{T_k L}, \\
    \sum_{i=1}^k \mathbb{I}(x_{i,h}=x) \sum_{j=h}^{H-1} \left( \VVopt{j+1} (x_{i,j}, a_{i,j}^{\pi_i}) - \VVpi[i]{j+1} (x_{i,j}, a_{i,j}^{\pi_i}) \right) &\le 2 H \sum_{i=1}^k \mathbb{I}(x_{i,h}=x) \sum_{j=h}^{H-1} \Vdifftil{i,j}(x_{i,j}) \\
    &\quad + 4H^2 \sqrt{H N'_{k,h}(x) L},
\end{align*}

thus concluding the proof.
\end{proof}


\begin{lemma}
\label{lem:sum_var_diff_est}
Let $k \in \dsb{K}$ and $h \in \dsb{H}$. Let $\pi_k$ denote the policy followed during episode $k$. Under events $\mathcal{E}$ and $\Omega_{k,h}$, the following inequalities hold for every $x \in \Ss$:

\begin{align*}
    \sum_{i=1}^k \sum_{j=h}^{H-1} \VVest{i,j+1}(x_{i,j}, a_{i,j}^{\pi_i}) &- \VVpi[i]{j+1}(x_{i,j}, a_{i,j}^{\pi_i}) \\
    &\le 7 H^2 S \sqrt{A T_k L} + 2H \sum_{i=1}^k \sum_{j=h}^{H-1} \Vdifftil{i,j+1}(x_{i,j+1}),\\
    \sum_{i=1}^k \mathbb{I}(x_{i,j}=x) \sum_{j=h}^{H-1} \VVest{i,j+1}(&x_{i,j}, a_{i,j}^{\pi_i}) - \VVpi[i]{j+1}(x_{i,j}, a_{i,j}^{\pi_i}) \\
    &\le 7 H^2 S \sqrt{H A N'_{k,h}(x) L} + 2H \sum_{i=1}^k \mathbb{I}(x_{i,h}=x) \sum_{j=h}^{H-1} \Vdifftil{i,j+1}(x_{i,j+1}).
\end{align*}
\end{lemma}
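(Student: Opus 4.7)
My plan is to telescope $\VVest{i,j+1} - \VVpi[i]{j+1}$ through two intermediate next-state variances:
\begin{align*}
\VVest{i,j+1} - \VVpi[i]{j+1} &= \underbrace{\bigl(\VVest{i,j+1} - \VVestopt{i,j+1}\bigr)}_{(A)} + \underbrace{\bigl(\VVestopt{i,j+1} - \VVopt{j+1}\bigr)}_{(B)} \\
&\quad + \underbrace{\bigl(\VVopt{j+1} - \VVpi[i]{j+1}\bigr)}_{(C)}.
\end{align*}
Part $(C)$ is precisely what is controlled, after summation, by Lemma~\ref{lem:sum_var_diff_opt}, so the substance of the proof is bounding $(A)$ and $(B)$.

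For $(A)$, both variances are computed under the same empirical measure $\Pest_i(\cdot|x_{i,j}, a_{i,j}^{\pi_i})$, only the function changes. Under $\Omega_{k,h}$ we have $\Vest{i,j+1} \ge \Vstar{j+1} \ge \Vpi[i]{j+1} \ge 0$ with all three bounded by $H$. Writing $\Var = \mathbb{E}[\cdot^2] - (\mathbb{E}[\cdot])^2$, the two squared-mean terms go in the favorable direction, and the $\mathbb{E}[\cdot^2]$ difference factors through $a^2 - b^2 = (a-b)(a+b)$ with $a + b \le 2H$; combined with $\Vest{i,j+1} - \Vstar{j+1} \le \Vdifftil{i,j+1}$ this gives $(A) \le 2H\,\mathbb{E}_{y \sim \Pest_i}[\Vdifftil{i,j+1}(y)]$. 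For $(B)$, the function is the same ($\Vstar{j+1}$) but the measure changes; expanding the variance and using $\|\Vstar{}\|_\infty \le H$ yields $|(B)| \le 3H^2 \|\Pest_i - P\|_1$, which under event $\mathcal{E}_{\Pest}$ (Equation~\eqref{apx:event:1}) is at most $6H^2 \sqrt{SL/\max\{N_i(x_{i,j}, a_{i,j}^{\pi_i}), 1\}}$.

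To sum $(A)$ over $(i,j)$ I would rewrite the expectation under $\Pest_i$ in terms of the realisation $x_{i,j+1}$ via
\[
\mathbb{E}_{y\sim\Pest_i}[\Vdifftil{i,j+1}(y)] = \Vdifftil{i,j+1}(x_{i,j+1}) + \varepsilon_{i,j} + \sum_{y\in\Ss}(\Pest_i - P)(y)\Vdifftil{i,j+1}(y),
\]
so that $\sum_{i,j}\varepsilon_{i,j}$ is handled by Lemma~\ref{lem:sum_eps} and the last term is absorbed into the $\|\Pest_i - P\|_1$ contribution already arising in $(B)$. All $1/\sqrt{N_i}$ contributions then collapse through the standard counting inequality $\sum_{i=1}^{k}\sum_{j=h}^{H-1} 1/\sqrt{\max\{N_i(x_{i,j}, a_{i,j}^{\pi_i}), 1\}} \le 2\sqrt{SAT_k}$ (by $\sum_{x,a}\sum_{n \le N_k(x,a)} n^{-1/2} \le \sum_{x,a} 2\sqrt{N_k(x,a)}$ and Cauchy-Schwarz), producing the leading $H^2 S\sqrt{AT_k L}$ term; adding the bound on $(C)$ from Lemma~\ref{lem:sum_var_diff_opt} yields the first inequality. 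The second inequality follows identically: restrict every sum to visits to $x$ at stage $h$, invoke the $x$-indexed events in $\mathcal{E}$ and Lemma~\ref{lem:sum_eps_x} in place of Lemma~\ref{lem:sum_eps}, use the $x$-indexed half of Lemma~\ref{lem:sum_var_diff_opt} for $(C)$, and use the restricted counting bound $\sum_i \mathbb{I}(x_{i,h}=x) \sum_j 1/\sqrt{\max\{N_i, 1\}} \le 2\sqrt{SAH\,N'_{k,h}(x)}$ obtained by the same Cauchy-Schwarz argument applied to the $HN'_{k,h}(x)$ restricted interactions.

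The main obstacle I foresee is constant tracking: both $(A)$ (via the change of measure on $\Vdifftil{}$) and $(B)$ produce $\|\Pest_i - P\|_1$-type contributions, so obtaining the quoted $7 H^2 S$ prefactor without overshooting requires never passing through the crude $\ell_1$ bound when a sharper estimate is available---in particular, applying the empirical Bernstein inequality from Equation~\eqref{apx:event:2} wherever the quantity at stake is exactly $\xi_{i,j}$ rather than a generic inner product against $\Vstar{}$.
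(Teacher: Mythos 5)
There is a genuine gap: your three-way telescoping through $\VVestopt{}$ and $\VVopt{}$ cannot reproduce the stated constants, and in particular it double-counts the $\Vdifftil{}$ contribution. In your part $(A)$ you bound $\VVest{i,j+1}-\VVestopt{i,j+1}\le 2H\,\mathbb{E}_{y\sim\Pest_i}[\Vest{i,j+1}(y)-\Vstar{j+1}(y)]\le 2H\,\mathbb{E}_{y\sim\Pest_i}[\Vdifftil{i,j+1}(y)]$, which after summation produces a term $2H\sum_{i,j}\Vdifftil{i,j+1}(x_{i,j+1})$; then part $(C)$, handled as a black box by Lemma~\ref{lem:sum_var_diff_opt}, contributes \emph{another} $2H\sum_{i,j}\Vdifftil{i,j}(x_{i,j})$ (plus $4H^2\sqrt{T_kL}$). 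The result is a coefficient $4H$ on the $\Vdifftil{}$ sum where the lemma claims $2H$, and this coefficient is used with its exact value downstream (term $(g)$ in Lemma~\ref{lem:sum_b}). The telescoping identity $(\Vest{}-\Vstar{})+(\Vstar{}-\Vpi[i]{})=\Vdifftil{}$ would rescue the factor $2H$ only if both differences were kept under the \emph{same} measure and combined before invoking any summation lemma; since your $(A)$ lives under $\Pest_i$ and your $(C)$ under $P$, merging them costs an extra change of measure on $\Vdifftil{}$, i.e.\ an extra $2H^2\|\Pest_i-P\|_1$ per step. Together with the $3H^2\|\Pest_i-P\|_1$ from part $(B)$ (of which only the squared-mean piece can be sharpened to a $\xi$-type Hoeffding bound) and the two martingale corrections (one from your part $(A)$, one hidden inside Lemma~\ref{lem:sum_var_diff_opt}), the leading term lands well above $7H^2S\sqrt{AT_kL}$.

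The paper avoids both problems by never introducing $\VVopt{}$: it expands $\VVest{i,j+1}-\VVpi[i]{j+1}$ as second moments minus squared means in one shot, and uses the ordering $\Vest{}\ge\Vstar{}\ge\Vpi[i]{}$ (valid under $\Omega_{k,h}$) to dominate the difference of squared means by $\mathbb{E}_{P}[\Vstar{j+1}]^2-\mathbb{E}_{\Pest_i}[\Vstar{j+1}]^2\le 2H\sum_{y}(P-\Pest_i)(y)\Vstar{j+1}(y)$, a pure model-error term on $\Vstar{}$ that is bounded by Hoeffding at $4H\sqrt{H^2L/N_i}$ with no $\sqrt{S}$. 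After adding and subtracting $\mathbb{E}_{P}[(\Vest{i,j+1})^2]$, only one $\ell_1$-type term ($H^2\|\Pest_i-P\|_1$, for the second moment of $\Vest{}$) and one function-difference term (already under $P$, giving the single $2H\sum\Vdifftil{i,j+1}(x_{i,j+1})$ via the event $\evaz(\mathcal{F}_{\Vdifftil{},k,h},H,L)$) remain. If you want to salvage your decomposition, you must (i) keep $(A)$ as $2H\,\mathbb{E}_{\Pest_i}[\Vest{}-\Vstar{}]$ without enlarging to $\Vdifftil{}$, (ii) not use Lemma~\ref{lem:sum_var_diff_opt} as stated but reopen its proof to keep $(C)$ in the form $2H\,\mathbb{E}_{P}[\Vstar{}-\Vpi[i]{}]$, and (iii) recombine the two before summing --- at which point you have essentially rederived the paper's argument with extra bookkeeping.
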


\begin{proof}
Similarly to the proof of Lemma~\ref{lem:sum_var_diff_opt}, we demonstrate the result by providing an upper bound to $\VVest{i,j+1} - \VVpi[i]{j+1}$ first, and then bounding its summation over episodes and stages.

\begin{align}
    \VVest{i,j+1}(x_{i,j}, a_{i,j}^{\pi_i}) &- \VVpi[i]{j+1}(x_{i,j}, a_{i,j}^{\pi_i}) \nonumber \\
    &= \Var_{y \sim \Pest_i (\cdot | x_{i,j}, a_{i,j}^{\pi_i})}[\Vest{i,j+1}(y)] - \Var_{y \sim P (\cdot | x_{i,j}, a_{i,j}^{\pi_i})} [\Vpi[i]{j+1} (y)] \nonumber \\
    &= \mathbb{E}_{y \sim \Pest_i (\cdot | x_{i,j}, a_{i,j}^{\pi_i})} [(\Vest{i,j+1}(y))^2] - \mathbb{E}_{y \sim \Pest_i (\cdot | x_{i,j}, a_{i,j}^{\pi_i})} [\Vest{i,j+1}(y)]^2 \nonumber \\
    &\quad - \mathbb{E}_{y \sim P (\cdot | x_{i,j}, a_{i,j}^{\pi_i})} [(\Vpi[i]{j+1}(y))^2] + \mathbb{E}_{y \sim P (\cdot | x_{i,j}, a_{i,j}^{\pi_i})} [\Vpi[i]{j+1}(y)]^2 \nonumber \\
    \begin{split}
    \label{lem:sum_var_diff_est:1}
    &\le \mathbb{E}_{y \sim \Pest_i (\cdot | x_{i,j}, a_{i,j}^{\pi_i})} [(\Vest{i,j+1}(y))^2] - \mathbb{E}_{y \sim P (\cdot | x_{i,j}, a_{i,j}^{\pi_i})} [(\Vpi[i]{j+1}(y))^2] \\
    &\quad + \mathbb{E}_{y \sim P (\cdot | x_{i,j}, a_{i,j}^{\pi_i})} [\Vstar{j+1}(y)]^2 - \mathbb{E}_{y \sim \Pest_i (\cdot | x_{i,j}, a_{i,j}^{\pi_i})} [\Vstar{j+1}(y)]^2
    \end{split} \\
    \begin{split}
    \label{lem:sum_var_diff_est:2}
    &\le \mathbb{E}_{y \sim \Pest_i (\cdot | x_{i,j}, a_{i,j}^{\pi_i})} [(\Vest{i,j+1}(y))^2] - \mathbb{E}_{y \sim P (\cdot | x_{i,j}, a_{i,j}^{\pi_i})} [(\Vest{i,j+1}(y))^2] \\
    &\quad + \mathbb{E}_{y \sim P (\cdot | x_{i,j}, a_{i,j}^{\pi_i})} [(\Vest{i,j+1}(y))^2] - \mathbb{E}_{y \sim P (\cdot | x_{i,j}, a_{i,j}^{\pi_i})} [(\Vpi[i]{j+1}(y))^2] \\
    &\quad + 2H \sum_{y \in \Ss}(P(y|x_{i,j}, a_{i,j}^{\pi_i}) - \Pest_i (y|x_{i,j}, a_{i,j}^{\pi_i})) \Vstar{j+1}(y)
    \end{split} \\
    \begin{split}
    \label{lem:sum_var_diff_est:3}
    &\le \mathbb{E}_{y \sim \Pest_i (\cdot | x_{i,j}, a_{i,j}^{\pi_i})} [(\Vest{i,j+1}(y))^2] - \mathbb{E}_{y \sim P (\cdot | x_{i,j}, a_{i,j}^{\pi_i})} [(\Vest{i,j+1}(y))^2] \\
    &\quad + \mathbb{E}_{y \sim P (\cdot | x_{i,j}, a_{i,j}^{\pi_i})} [(\Vest{i,j+1}(y))^2] - \mathbb{E}_{y \sim P (\cdot | x_{i,j}, a_{i,j}^{\pi_i})} [(\Vpi[i]{j+1}(y))^2] \\
    &\quad + 4 H \sqrt{\frac{H^2 L}{N_i (x_{i,j}, a_{i,j}^{\pi_i})}} 
    \end{split}
\end{align}

where Equation~\eqref{lem:sum_var_diff_est:1} follows from the fact that, under $\Omega_{k,h}$, $\Vest{i,j}(y) \ge \Vstar{j}(y) \ge \Vpi[i]{j}(y)$. Equation~\eqref{lem:sum_var_diff_est:2} is obtained by adding and subtracting $\mathbb{E}_{y \sim P (\cdot | x_{i,j}, a_{i,j}^{\pi_i})} [(\Vest{i,j+1}(y))^2]$, and by observing that $\Vstar{j}(y) \le H$. Equation~\eqref{lem:sum_var_diff_est:3} is obtained by bounding the model error via Hoeffding's inequality.

Putting this result into the double summation, we get:
\begin{align}
\begin{split}
\label{lem:sum_var_diff_est:4}
    \sum_{i=1}^k \sum_{j=h}^{H-1} &\VVest{i,j+1}(x_{i,j}, a_{i,j}^{\pi_i}) - \VVpi[i]{j+1}(x_{i,j}, a_{i,j}^{\pi_i}) \\
    &\le \underbrace{\sum_{i=1}^k \sum_{j=h}^{H-1} \left[ \mathbb{E}_{y \sim \Pest_i (\cdot | x_{i,j}, a_{i,j}^{\pi_i})} [(\Vest{i,j+1}(y))^2] - \mathbb{E}_{y \sim P (\cdot | x_{i,j}, a_{i,j}^{\pi_i})} [(\Vest{i,j+1}(y))^2] \right]}_{(a)} \\
    &\quad + \underbrace{\sum_{i=1}^k \sum_{j=h}^{H-1} \mathbb{E}_{y \sim P (\cdot | x_{i,j}, a_{i,j}^{\pi_i})} [(\Vest{i,j+1}(y))^2] - \mathbb{E}_{y \sim P (\cdot | x_{i,j}, a_{i,j}^{\pi_i})} [(\Vpi[i]{j+1}(y))^2] }_{(b)} \\
    &\quad + \underbrace{\sum_{i=1}^k \sum_{j=h}^{H-1} 4 H \sqrt{\frac{H^2 L}{N_i (x_{i,j}, a_{i,j}^{\pi_i})}}  }_{(c)}.
\end{split}
\end{align}

We begin by bounding term $(a)$:

\begin{align}
    (a) &\le \sum_{i=1}^k \sum_{j=h}^{H-1} H^2 \| \Pest_i (\cdot | x_{i,j}, a_{i,j}^{\pi_i}) - P ( \cdot | x_{i,j}, a_{i,j}^{\pi_i}) \|_1 \nonumber \\
    &\le \sum_{i=1}^k \sum_{j=h}^{H-1} 2H^2 \sqrt{\frac{SL}{N_i (x_{i,j}, a_{i,j}^{\pi_i})}} \label{lem:sum_var_diff_est:5} \\
    &= 2 H^2 \sqrt{SL} \sum_{x\in\Ss} \sum_{a \in \As} \sum_{n=1}^{N_i(x,a)} n^{-1/2} \nonumber \\
    &\le 2 H^2 \sqrt{SL} \sum_{x\in\Ss} \sum_{a \in \As} \sum_{n=1}^{\frac{kH}{SA}} n^{-1/2} \nonumber \\
    &\le H^2 S \sqrt{A T_k L}, \nonumber
\end{align}

where Equation~\eqref{lem:sum_var_diff_est:5} follows by applying the result of Theorem~{2.1} of \cite{weissman2003}, which holds under event $\mathcal{E}$.

We now bound term $(b)$:
\begin{align}
    (b) &= \sum_{i=1}^k \sum_{j=h}^{H-1} \mathbb{E}_{y \sim P (\cdot | x_{i,j}, a_{i,j}^{\pi_i})} [(\Vest{i,j+1}(y) + \Vpi{j+1}(y))(\Vest{i,j+1}(y) - \Vpi{j+1}(y))] \nonumber \\
    &\le 2H \sum_{i=1}^k \sum_{j=h}^{H-1} \mathbb{E}_{y \sim P (\cdot | x_{i,j}, a_{i,j}^{\pi_i})} [\Vdifftil{i,j+1}(y)] \nonumber \\
    &\le 2H(\sum_{i=1}^k \sum_{j=h}^{H-1} \Vdifftil{i,j+1}(x_{i,j+1}) + 2H \sqrt{T_k L}), \label{lem:sum_var_diff_est:6}
\end{align}

where Equation~\eqref{lem:sum_var_diff_est:6} is obtained by observing that, under event $\mathcal{E}$, event $\mathcal{E}_{\mathrm{az}}(\mathcal{F}_{\Vdifftil{}, k, h}, H, L)$ holds.
We now bound term $(c)$:

\begin{align*}
    (c) &\le 4 H^2 \sqrt{L} \sum_{x\in\Ss} \sum_{a\in\As} \sum_{n=1}^{\frac{kH}{SA}} n^{-1/2} \\
    &\le 2 H^2 \sqrt{S A T_k L}.
\end{align*}

Finally, by plugging the bounds of terms $(a)$, $(b)$, and $(c)$ into Equation~\eqref{lem:sum_var_diff_est:4}, we get:

\begin{align*}
    \sum_{i=1}^k \sum_{j=h}^{H-1} \VVest{i,j+1}(x_{i,j}, a_{i,j}^{\pi_i}) &- \VVpi[i]{j+1}(x_{i,j}, a_{i,j}^{\pi_i}) \\
    &\le H^2 S \sqrt{A T_k L} + 2H \sum_{i=1}^k \sum_{j=h}^{H-1} \Vdifftil{i,j+1}(x_{i,j+1}) \\
    &\quad + 4H^2 \sqrt{T_k L} + 2H^2 \sqrt{SAT_kL} \\
    &\le 7 H^2 S \sqrt{A T_k L} + 2H \sum_{i=1}^k \sum_{j=h}^{H-1} \Vdifftil{i,j+1}(x_{i,j+1}).
\end{align*}

Using the same procedure, we can bound the following summation as:

\begin{align*}
    \sum_{i=1}^k \mathbb{I}(x_{i,h}=x) \sum_{j=h}^{H-1} &\VVest{i,j+1}(x_{i,j}, a_{i,j}^{\pi_i}) - \VVpi[i]{j+1}(x_{i,j}, a_{i,j}^{\pi_i}) \\
    &\le 7 H^2 S \sqrt{H A N'_{k,h}(x) L} + 2H \sum_{i=1}^k \mathbb{I}(x_{i,h}=x) \sum_{j=h}^{H-1} \Vdifftil{i,j+1}(x_{i,j+1}),
\end{align*}

thus concluding the proof.
\end{proof}


\begin{lemma}[Summation over typical episodes of state-action wise model errors]
\label{lem:sum_c}
Let $k \in \dsb{K}$ and $h \in \dsb{H}$. Let $\pi_k$ be the policy followed during episode $k$. Under events $\mathcal{E}$ and $\Omega_{k,h}$, the following inequalities hold for every $x \in \Ss$:

\begin{align}
    \begin{split}
    \label{lem:sum_c:1}
        \sum_{i=1}^k \mathbb{I}(i \in \ktyp) &\sum_{j=h}^{H-1} \left[ \Pest_i(\cdot | x_{i,j}, a_{i,j}^{\pi_i}) - P(\cdot | x_{i,j}, a_{i,j}^{\pi_i}) \right]^\transpose \Vstar{j+1}(\cdot) \\
        &\le \sqrt{6 H S A T_k L^2} + \frac{2}{3} HSAL^2 \\
        &\quad + 2\sqrt{H S A L^2 \sum_{i=1}^k \sum_{j=h}^{H-1} \Vdifftil{i,j}(x_{i,j})},
    \end{split} \\
    \begin{split}
    \label{lem:sum_c:2}
        \sum_{i=1}^k \mathbb{I}(i \in \ktypx, x_{i,h}=x) &\sum_{j=h}^{H-1} \left[ \Pest_i(\cdot | x_{i,j}, a_{i,j}^{\pi_i}) - P(\cdot | x_{i,j}, a_{i,j}^{\pi_i}) \right]^\transpose \Vstar{j+1}(\cdot) \\
        &\le \sqrt{6 H^2 S A N'_{k,h}(x) L^2} + \frac{2}{3} HSAL^2 \\
        &\quad + 2\sqrt{H S A L^2 \sum_{i=1}^k \mathbb{I}(x_{i,h}=x) \sum_{j=h}^{H-1} \Vdifftil{i,j}(x_{i,j})},
    \end{split}
\end{align}

where:

\begin{align*}
    \ktyp &\coloneqq \{ i\in\dsb{k} : (x_{i,h}, a_{i,h}^{\pi_i}) \in [(x,a)]_k, i \ge 250 H S^2 A L, \forall h \in \dsb{H} \}, \\
    \ktypx &\coloneqq \{ i \in \dsb{k} : (x_{i,h}, a_{i,h}^{\pi_i}) \in [(x,a)]_k, N'_{k,h}(x) \ge 250 H S^2 A L, \forall h \in \dsb{H} \}, \\
    [(x,a)]_k &\coloneqq \{ (x,a) \in \Ss \times \As : N_k (x,a) \ge H, N'_{k,h}(x) \ge H, \forall h \in \dsb{H} \}.
\end{align*}
\end{lemma}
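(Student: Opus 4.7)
The plan is to bound the inner products $[\Pest_i - P]^\transpose \Vstar{j+1}$ pointwise via the Bernstein concentration in event~$\mathcal{E}$ (specifically Equation~\eqref{apx:event:2} in the form involving the true variance $\VVopt$), which yields, for every $(i,j)$,
\begin{equation*}
\left|\left[\Pest_i(\cdot|x_{i,j},a_{i,j}^{\pi_i}) - P(\cdot|x_{i,j},a_{i,j}^{\pi_i})\right]^\transpose \Vstar{j+1}(\cdot)\right| \le \sqrt{\frac{2 \, \VVopt{j+1}(x_{i,j},a_{i,j}^{\pi_i})\, L}{N_i(x_{i,j},a_{i,j}^{\pi_i})}} + \frac{2HL}{3 \, N_i(x_{i,j},a_{i,j}^{\pi_i})}.
\end{equation*}
Summing over $i \in \ktyp$ and $j \in \{h,\ldots,H-1\}$ therefore splits the claim into a \emph{variance term} and a \emph{second-order term}.

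For the second-order term, I would invoke the standard pigeonhole/harmonic bound: since on typical episodes every visited $(x,a)$ satisfies $N_i(x,a) \ge H$, a reordering of visits gives $\sum_{i\in\ktyp}\sum_{j=h}^{H-1} 1/N_i(x_{i,j},a_{i,j}^{\pi_i}) \le \sum_{(x,a)}\sum_{n=1}^{N_k(x,a)} 1/n \le SAL$ (the logarithmic factor absorbed into $L$). Multiplying by $\tfrac{2HL}{3}$ yields exactly the $\tfrac{2}{3}HSAL^2$ additive term in \eqref{lem:sum_c:1}.

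For the variance term, an application of Cauchy--Schwarz and the same harmonic bound give
\begin{equation*}
\sqrt{2L}\sum_{i\in\ktyp}\sum_{j=h}^{H-1}\sqrt{\frac{\VVopt{j+1}(x_{i,j},a_{i,j}^{\pi_i})}{N_i(x_{i,j},a_{i,j}^{\pi_i})}} \;\le\; \sqrt{2SAL^2}\,\sqrt{\,\sum_{i\in\ktyp}\sum_{j=h}^{H-1}\VVopt{j+1}(x_{i,j},a_{i,j}^{\pi_i})\,}.
\end{equation*}
I would then handle the remaining sum of true variances by chaining Lemma~\ref{lem:sum_var_diff_opt} (which replaces $\sum \VVopt$ by $\sum \VVpi[i] + 2H\sum\Vdifftil{i,j} + 4H^2\sqrt{T_kL}$) with Lemma~\ref{lem:sum_next_state_var} (which bounds $\sum \VVpi[i] \le HT_k + 2\sqrt{H^4 T_k L} + \tfrac{4}{3}H^3L$). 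The typicality threshold $i \ge 250\, HS^2AL$ is tuned precisely so that the lower-order $\sqrt{H^4 T_k L}$ and $H^3 L$ contributions get dominated by $HT_k$ (up to a small multiplicative slack). Applying $\sqrt{a+b}\le \sqrt{a}+\sqrt{b}$ twice then separates the aggregate square root into a piece $\sqrt{6HSAT_kL^2}$ and a piece $2\sqrt{HSAL^2\sum \Vdifftil{i,j}}$, matching \eqref{lem:sum_c:1} exactly.

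The state-conditional bound \eqref{lem:sum_c:2} follows from the identical template with the indicator $\mathbb{I}(x_{i,h}=x)$ carried through every step, invoking the state-indexed counterparts already proved in Lemmas~\ref{lem:sum_var_diff_opt} and \ref{lem:sum_next_state_var} and swapping $T_k$ for $HN'_{k,h}(x)$; the typicality threshold on $N'_{k,h}(x)$ plays the role that the threshold on $i$ played before. The main obstacle I anticipate is not any single estimate but the bookkeeping needed to make all the lower-order slack terms fit inside the constant $6$ in $\sqrt{6HSAT_kL^2}$; in particular, one has to be careful that each application of $\sqrt{a+b}\le\sqrt{a}+\sqrt{b}$ is paired with the correct piece of $\sum\VVopt$ so that the typicality lower bound on $N_i$ (or $N'_{k,h}(x)$) actually kicks in to dominate the $\sqrt{H^4T_kL}$-type residuals.
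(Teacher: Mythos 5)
Your proposal is correct and follows essentially the same route as the paper: the pointwise Bernstein bound from event $\mathcal{E}$, the Cauchy--Schwarz split with the pigeonhole bound $\sum 1/N_i \le SAL$, the chaining of Lemmas~\ref{lem:sum_var_diff_opt} and~\ref{lem:sum_next_state_var} to reduce $\sum \VVopt$ to $3HT_k + 2H\sum\Vdifftil{i,j}$ under the typicality condition, and the final subadditivity step producing $\sqrt{6HSAT_kL^2} + 2\sqrt{HSAL^2\sum\Vdifftil{i,j}} + \tfrac{2}{3}HSAL^2$. The state-conditional case is likewise handled in the paper exactly as you describe, by substituting $T_k \to HN'_{k,h}(x)$ and carrying the indicator through.
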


\begin{proof}
We begin by demonstrating the bound of Equation~\eqref{lem:sum_c:1}:

\begin{align}
    \sum_{i=1}^k \mathbb{I}(i \in \ktyp) &\sum_{j=h}^{H-1} \big[ \Pest_i(\cdot | x_{i,j}, a_{i,j}^{\pi_i}) - P(\cdot | x_{i,j}, a_{i,j}^{\pi_i}) \big]^\transpose \Vstar{j+1}(\cdot) \nonumber \\
    &\le \sum_{i=1}^k \mathbb{I}(i \in \ktyp) \sum_{j=h}^{H-1} \left[ \sqrt{\frac{2 \VVopt{j+1}(x_{i,j}, a_{i,j}^{\pi_i}) L}{N_i (x_{i,j}, a_{i,j}^{\pi_i})}} + \frac{2HL}{3N_i(x_{i,j}, a_{i,j}^{\pi_i})} \right] \label{lem:sum_c:3} \\
    \begin{split}
    \label{lem:sum_c:4}
        &\le \sqrt{2L} \sqrt{\underbrace{\sum_{i=1}^k \sum_{j=h}^{H-1} \VVopt{j+1}(x_{i,j}, a_{i,j}^{\pi_i})}_{(a)}} \sqrt{\underbrace{\sum_{i=1}^k \mathbb{I}(i \in \ktyp) \sum_{j=h}^{H-1} \frac{1}{N_i (x_{i,j}, a_{i,j}^{\pi_i})}}_{(b)}} \\
        &\quad + \underbrace{\sum_{i=1}^k \mathbb{I}(i \in \ktyp) \sum_{j=h}^{H-1} \frac{2HL}{3 N_i (x_{i,j}, a_{i,j}^{\pi_i})}}_{(c)},
    \end{split}
\end{align}

where Equation~\eqref{lem:sum_c:3} is obtained by applying Bernstein's inequality \citep[see, \eg][]{cesabianchi2006}, and Equation~\eqref{lem:sum_c:4} is obtained by applying Cauchy-Schwarz's inequality. We now bound terms $(a)$, $(b)$, and $(c)$.

By adding and subtracting $\VVpi[i]{j+1}(x_{i,j},a_{i,j}^{\pi_i})$ to term $(a)$, we can rewrite it as:

\begin{equation*}
    (a) = \underbrace{\sum_{i=1}^k \sum_{j=h}^{H-1} \VVpi[i]{j+1}(x_{i,j}, a_{i,j}^{\pi_i})}_{(d)} + \underbrace{\sum_{i=1}^k \sum_{j=h}^{H-1} \left( \VVopt{j+1}(x_{i,j},a_{i,j}^{\pi_i}) -  \VVpi[i]{j+1} (x_{i,j},a_{i,j}^{\pi_i}) \right)}_{(e)}.
\end{equation*}

As events $\mathcal{E}$ and $\Omega_{k,h}$ hold, we can apply Lemmas~\ref{lem:sum_next_state_var} and \ref{lem:sum_var_diff_opt} to bound terms $(d)$ and $(e)$, respectively, thus obtaining: 

\begin{align}
    (a) &\le H T_k + 2 H^2 \sqrt{T_k L} + \frac{4}{3} H^3 L + 2 H \sum_{i=1}^k \sum_{j=h}^{H-1} \Vdifftil{i,j}(x_{i,j}) + 4 H^2 \sqrt{T_k L} \nonumber \\
    &\le 3 T_k H + 2 H \sum_{i=1}^k \sum_{j=h}^{H-1} \Vdifftil{i,j}(x_{i,j}), \label{lem:sum_c:5}
\end{align}

where Equation~\eqref{lem:sum_c:5} holds under the condition of $\ktyp$.

We now bound terms $(b)$ and $(c)$ as follows:

\begin{align}
    (b) &\le \sum_{x \in \Ss} \sum_{a \in \As} \sum_{n=1}^{kH} n^{-1} \nonumber \\
    &\le SAL, \label{lem:sum_c:6} \\
    & \nonumber \\
    (c) &\le \frac{2}{3} HL \sum_{x \in \Ss} \sum_{a \in \As} \sum_{n=1}^{kH} n^{-1} \nonumber \\
    &\le \frac{2}{3} HSAL^2. \label{lem:sum_c:7}
\end{align}

Finally, by plugging the results of Equations~\eqref{lem:sum_c:5}, \eqref{lem:sum_c:6}, and \eqref{lem:sum_c:7} into Equation~\eqref{lem:sum_c:4}, we get:

\begin{align}
    \sum_{i=1}^k \mathbb{I}(i \in \ktyp) &\sum_{j=h}^{H-1} \big[ \Pest_i(\cdot | x_{i,j}, a_{i,j}^{\pi_i}) - P(\cdot | x_{i,j}, a_{i,j}^{\pi_i}) \big]^\transpose \Vstar{j+1}(\cdot) \nonumber \\
    &\le \sqrt{2L}\sqrt{3 T_k H + 2 H \sum_{i=1}^k \sum_{j=h}^{H-1} \Vdifftil{i,j}(x_{i,j})}\sqrt{SAL} + \frac{2}{3} HSAL^2 \nonumber \\
    &\le \sqrt{6 H S A T_k L^2} + 2\sqrt{H S A L^2 (\sum_{i=1}^k \sum_{j=h}^{H-1} \Vdifftil{i,j}(x_{i,j}))} + \frac{2}{3} HSAL^2, \label{lem:sum_c:8}
\end{align}

where Equation~\eqref{lem:sum_c:8} is obtained by computing the product of the square roots and by the subadditivity of the square root.
Following the same procedure, we can obtain the upper bound of Equation~\eqref{lem:sum_c:2} by substituting terms $T_k$ with $H N'_{k,h}(x)$, and terms $\sum_{i=1}^k \sum_{j=h}^{H-1} \Vdifftil{i,j}(x_{i,j})$ with $\sum_{i=1}^k \mathbb{I}(x_{i,h}=x) \sum_{j=h}^{H-1} \Vdifftil{i,j}(x_{i,j})$.
\end{proof}


\begin{lemma}[Summation over typical episodes of bonus terms]
\label{lem:sum_b}
Let $k \in \dsb{K}$ and $h \in \dsb{H}$. Let $\pi_k$ be the policy followed during episode $k$. Let the UCB bonus be defined as:

\begin{align*}
    \bonus_{k,h}(x,a) &= \sqrt{\frac{4 L \Var_{y \sim \Pest_k (\cdot | x,a)} [\Vest{k,h+1}(y)]}{N_k(x,a)}} + \frac{7HL}{3 (N_k(x,a) - 1)} \\
    &\quad + \sqrt{\frac{4 \mathbb{E}_{y \sim \Pest_k(\cdot|x,a)} [\min\{ \frac{84^2H^3S^2AL^2}{N'_{k,h+1}(y)} , H^2\}]}{N_k (x,a)}}.
\end{align*}

Under the events $\mathcal{E}$ and $\Omega_{k,h}$ the following inequalities hold for every $x \in \Ss$:

\begin{align}
    \begin{split}
    \label{lem:sum_b:1}
    \sum_{i=1}^k \mathbb{I}(i \in \ktyp) &\sum_{j=h}^{H-1} \bonus_{i,j}(x_{i,j}, a_{i,j}^{\pi_i}) \\
    &\le \sqrt{28 HSAT_kL^2} + \frac{7}{3} HSAL^2 + 2 \sqrt{84^2 H^3 S^4 A^2 L^4}\\
    &\quad + \sqrt{8 HSAL^2 \sum_{i=1}^k \sum_{j=h}^{H-1} \Vdifftil{i,j+1}(x_{i,j+1})},
    \end{split} \\
    \begin{split}
    \label{lem:sum_b:2}
    \sum_{i=1}^k \mathbb{I}(i \in \ktypx, &x_{i,h}=x) \sum_{j=h}^{H-1} \bonus_{i,j}(x_{i,j}, a_{i,j}^{\pi_i}) \\
    &\le \sqrt{28 H^2SA N'_{k,h}(x) L^2} + \frac{7}{3} HSAL^2 + 2 \sqrt{84^2 H^3 S^4 A^2 L^4} \\
    &\quad + \sqrt{8 HSAL^2 \sum_{i=1}^k \mathbb{I}(x_{i,h}=x) \sum_{j=h}^{H-1} \Vdifftil{i,j+1}(x_{i,j+1})},
    \end{split}
\end{align}

where:

\begin{align*}
    \ktyp &\coloneqq \{ i\in\dsb{k} : (x_{i,h}, a_{i,h}^{\pi_i}) \in [(x,a)]_k, i \ge 250 H S^2 A L, \forall h \in \dsb{H} \}, \\
    \ktypx &\coloneqq \{ i \in \dsb{k} : (x_{i,h}, a_{i,h}^{\pi_i}) \in [(x,a)]_k, N'_{k,h}(x) \ge 250 H S^2 A L, \forall h \in \dsb{H} \}, \\
    [(x,a)]_k &\coloneqq \{ (x,a) \in \Ss \times \As : N_k (x,a) \ge H, N'_{k,h}(x) \ge H, \forall h \in \dsb{H} \}.
\end{align*}
\end{lemma}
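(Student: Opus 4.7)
The plan is to split the exploration bonus into its three constituents \texttt{(A)}, \texttt{(B)}, \texttt{(C)} and bound the sum corresponding to each piece separately, adding the three bounds at the end. Throughout, I will use the standard counting inequality $\sum_{i=1}^{k}\sum_{j=h}^{H-1} \frac{1}{N_i(x_{i,j},a_{i,j}^{\pi_i})} \le \sum_{(x,a)} \sum_{n=1}^{T_k} \frac{1}{n} \le SAL$, valid because $T_k \le e^{L}$ up to constants, together with the square-root analogue $\sum \frac{1}{\sqrt{N_i}} \le \sqrt{SAT_k}$. I will also use repeatedly the Cauchy--Schwarz inequality in the form $\sum \sqrt{\alpha_{i,j}/N_i} \le \sqrt{(\sum \alpha_{i,j})(\sum 1/N_i)}$ and the sub-additivity $\sqrt{a+b} \le \sqrt{a}+\sqrt{b}$.

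For term \texttt{(B)}, the typicality condition ensures $N_i(x_{i,j},a_{i,j}^{\pi_i}) \ge H \ge 2$, hence $\frac{1}{N_i-1} \le \frac{2}{N_i}$, and one counting step gives $\sum_i \mathbb{I}(i\in\ktyp)\sum_j \frac{7HL}{3(N_i-1)} \le \frac{14HL}{3}\cdot SAL = \frac{14}{3}HSAL^2$. Since the statement claims $\frac{7}{3}HSAL^2$ for the $\tfrac{7}{3}$ coefficient, I will double-check whether the typicality condition sharpens this (e.g.\ through $N_i - 1 \ge N_i/2$ combined with a tighter counting) or whether the harmonic sum has to be computed over the fraction of visits that are typical; this is a constants matter only.

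For term \texttt{(A)}, apply Cauchy--Schwarz to obtain $\sum_{\ktyp}\sum_j \sqrt{4L\,\VVest{i,j+1}/N_i} \le 2\sqrt{L}\sqrt{\sum \VVest{i,j+1}}\sqrt{\sum 1/N_i}$. Bound $\sum \VVest{i,j+1} \le \sum \VVpi[i]{j+1} + (\VVest{i,j+1}-\VVpi[i]{j+1})$ using Lemma~\ref{lem:sum_var_diff_est}, then bound $\sum \VVpi[i]{j+1}$ via Lemma~\ref{lem:sum_next_state_var}. On $\ktyp$ the lower-order terms ($H^2\sqrt{T_k L}$, $H^3L$, $H^2S\sqrt{AT_kL}$) are absorbed into the leading $HT_k$ term, leaving $\sum \VVest{i,j+1} \le 3HT_k + 2H\sum\Vdifftil{i,j+1}(x_{i,j+1})$ (analogously to the proof of Lemma~\ref{lem:sum_c}). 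Combining with $\sum 1/N_i \le SAL$ and sub-additivity yields $\sqrt{28HSAT_kL^2} + \sqrt{8HSAL^2 \sum \Vdifftil{i,j+1}(x_{i,j+1})}$, matching the leading and regret-coupled terms in \eqref{lem:sum_b:1}.

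For term \texttt{(C)}, Cauchy--Schwarz gives $\sum \sqrt{4\,\widehat{P}_i[\bonus'_{i,j+1}]/N_i} \le 2\sqrt{\sum \widehat{P}_i[\bonus'_{i,j+1}]}\sqrt{\sum 1/N_i}$. To control $\sum \widehat{P}_i[\bonus'_{i,j+1}]$, first convert $\widehat{P}_i\to P$ by the $L_1$-concentration of $\mathcal{E}_{\widehat{P}}$ (since $\bonus' \le H^2$), then use event $\evaz(\mathcal{F}_{\bonus', k, h}, H^2, L)$ to replace $P[\bonus'_{i,j+1}]$ by the realized value $\bonus'_{i,j+1}(x_{i,j+1})$ up to an additive $2\sqrt{(H-h)H^3 T_k L}$ term. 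Finally, use the counting identity $\sum_{i}\mathbb{I}[x_{i,j+1}=y]\min\{C/N'_{i,j+1}(y),H^2\} = \sum_{n=1}^{N'_{k,j+1}(y)}\min\{C/n,H^2\} \le C(1+\ln N'_{k,j+1}(y))$ with $C=84^2H^3S^2AL^2$, then sum over $y\in\Ss$ and $j\in\{h,\dots,H-1\}$. Combining with $\sqrt{\sum 1/N_i}\le\sqrt{SAL}$ yields the claimed $2\sqrt{84^2 H^3 S^4 A^2 L^4}$ lower-order term.

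The equation \eqref{lem:sum_b:2} follows by the same argument, replacing the counting bound $\sum 1/N_i \le SAL$ by its state-restricted version $\sum_{x_{i,h}=x}\sum 1/N_i \le SAL$ (still valid by counting) and by replacing $T_k$ with $HN'_{k,h}(x)$ in the variance bounds via the second inequalities in Lemmas~\ref{lem:sum_var_diff_est}--\ref{lem:sum_next_state_var}. The main obstacle will be tracking the constants carefully in terms \texttt{(A)} and \texttt{(C)}: in particular, propagating the $\sqrt{4}$ (not $\sqrt{8}$) refinement of the bonus through Cauchy--Schwarz without blow-ups, and ensuring the $84^2$ coefficient in \texttt{(C)} is tight enough that the Azuma correction $2\sqrt{H^4 T_k L}$ is dominated by the realized-sum contribution under the typicality threshold $i \ge 250HS^2AL$.
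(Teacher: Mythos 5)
Your proposal follows essentially the same route as the paper: the same three-way split of the bonus, Cauchy--Schwarz against the pigeonhole bound $\sum 1/N_i \le SAL$ for terms \texttt{(A)} and \texttt{(C)}, Lemmas~\ref{lem:sum_next_state_var} and~\ref{lem:sum_var_diff_est} to control $\sum \VVest{i,j+1}$, and the $L_1$-concentration/Azuma/realized-sum decomposition for the expected $\bonus'$ term. Two remarks on the constants you flagged. For term \texttt{(B)} no factor of $2$ is needed: shifting the index in the harmonic sum, $\sum_{x,a}\sum_{n\ge 2} \frac{1}{n-1} = \sum_{x,a}\sum_{m\ge 1}\frac{1}{m} \le SAL$, which gives $\frac{7}{3}HSAL^2$ directly; this is what the paper's ``pigeonhole argument'' means. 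Second, the coefficient $\sqrt{28}$ does not come from term \texttt{(A)} alone: the paper bounds $\sum \VVest{i,j+1} \le 4HT_k + 2H\sum\Vdifftil{i,j+1}(x_{i,j+1})$ (coefficient $4$, not $3$), obtaining $\sqrt{16\,HSAT_kL^2}$ from \texttt{(A)}, and then absorbs the leading $2\sqrt{3HSAT_kL}$ contribution of term \texttt{(C)} into it to reach $\sqrt{28\,HSAT_kL^2}$ in the final assembly; your version with coefficient $3$ yields $\sqrt{12\,HSAT_kL^2}$ from \texttt{(A)} and still closes once the term-\texttt{(C)} leading piece is added, so the discrepancy is only in bookkeeping, not in validity.
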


\begin{proof}
We begin by demonstrating the bound of Equation~\eqref{lem:sum_b:1}. We can rewrite the summation as:

\begin{align}
    \begin{split}
    \label{lem:sum_b:3}
        \sum_{i=1}^k \mathbb{I}(i \in \ktyp) \sum_{j=h}^{H-1} \bonus_{i,j}(x_{i,j}, a_{i,j}^{\pi_i}) &\le \underbrace{\sum_{i=1}^k \mathbb{I}(i \in \ktyp) \sum_{j=h}^{H-1} \sqrt{\frac{4 L \VVest{i,j+1}(x_{i,j}, a_{i,j}^{\pi_i})}{N_i(x_{i,j},a_{i,j}^{\pi_i})}}}_{(a)} \\
        &\quad + \underbrace{\sum_{i=1}^k \mathbb{I}(i \in \ktyp) \sum_{j=h}^{H-1} \frac{7HL}{3 (N_i(x_{i,j},a_{i,j}^{\pi_i}) - 1)}}_{(b)} \\
        &\quad + \underbrace{\sum_{i=1}^k \mathbb{I}(i \in \ktyp) \sum_{j=h}^{H-1} \sqrt{\frac{4 \mathbb{E}_{y \sim \Pest_i(\cdot|x_{i,j},a_{i,j}^{\pi_i})} \bonus'_{i,j+1}(y)}{N_i(x_{i,j}, a_{i,j}^{\pi_i})}}}_{(c)},
    \end{split}
\end{align}

where $\bonus'_{i,j+1}(y) \coloneqq \min \{ \frac{84^2 H^2 S^2 A L^2}{N'_{i,j+1}(y)}, H^2 \}$. First of all, we observe that we can bound term $(b)$ by using a pigeonhole argument as:

\begin{equation}
\label{lem:sum_b:4}
    (b) \le \frac{7}{3} HSAL^2.
\end{equation}

We now bound term $(a)$. By applying Cauchy-Schwarz's inequality, we obtain:

\begin{equation}
\label{lem:sum_b:5}
    (a) \le \sqrt{4L} \sqrt{\underbrace{\sum_{i=1}^k \sum_{j=h}^{H-1} \VVest{i,j+1}(x_{i,j}, a_{i,j}^{\pi_i})}_{(d)}} \sqrt{\underbrace{\sum_{i=1}^k \mathbb{I}(i \in \ktyp) \sum_{j=h}^{H-1} \frac{1}{N_i (x_{i,j}, a_{i,j}^{\pi_i})}}_{(e)}}.
\end{equation}

By applying the same argument as that of Equation~\eqref{lem:sum_c:6} of Lemma~\ref{lem:sum_c}, we bound term $(e)$ with $SAL$.

We can rewrite term $(d)$ as follows:

\begin{equation}
\label{lem:sum_b:6}
    (d) = \underbrace{\sum_{i=1}^k \sum_{j=h}^{H+1} \VVpi[i]{j+1}(x_{i,j}, a_{i,j}^{\pi_i})}_{(f)} + \underbrace{\sum_{i=1}^k \sum_{j=h}^{H-1} [\VVest{i,j+1} (x_{i,j}, a_{i,j}^{\pi_i}) - \VVpi[i]{j+1}(x_{i,j}, a_{i,j}^{\pi_i})]}_{(g)}
\end{equation}

Under events $\mathcal{E}$ and $\Omega_{k,h}$, we can apply Lemmas~\ref{lem:sum_next_state_var} and \ref{lem:sum_var_diff_est} to upper bound terms $(f)$ and $(g)$ respectively, obtaining the following:

\begin{align*}
    (f) &\le HT_k + 2 \sqrt{H^4 T_k L} + \frac{4 H^3 L}{3}, \\
    (g) &\le 7 H^2 S \sqrt{A T_k L} + 2 H \sum_{i=1}^k \sum_{j=h}^{H-1} \Vdifftil{i,j+1}(x_{i,j+1}).
\end{align*}

Plugging the bounds of $(f)$ and $(g)$ into Equation~\eqref{lem:sum_b:6}, we get:

\begin{align}
    (d) &\le HT_k + 2 \sqrt{H^4 T_k L} + \frac{4 H^3 L}{3} + 7 H^2 S \sqrt{A T_k L} + 2 H \sum_{i=1}^k \sum_{j=h}^{H-1} \Vdifftil{i,j+1}(x_{i,j+1}) \nonumber \\
    &\le 4 H T_k + 2 H \sum_{i=1}^k \sum_{j=h}^{H-1} \Vdifftil{i,j+1}(x_{i,j+1}), \label{lem:sum_b:7}
\end{align}

where Equation~\eqref{lem:sum_b:7} holds under the condition of $\ktyp$.
Combining the bounds of terms $(d)$ and $(e)$, we can rewrite Equation~\eqref{lem:sum_b:5} as:

\begin{align}
    (a) &\le \sqrt{4L} \sqrt{4 H T_k + 2 H \sum_{i=1}^k \sum_{j=h}^{H-1} \Vdifftil{i,j+1}(x_{i,j+1})} \sqrt{SAL} \nonumber \\
    &\le \sqrt{16 H S A T_k L^2} + \sqrt{8 HSAL^2 \sum_{i=1}^k \sum_{j=h}^{H-1} \Vdifftil{i,j+1}(x_{i,j+1})}, \label{lem:sum_b:8}
\end{align}

where Equation~\eqref{lem:sum_b:8} is obtained by expanding the products and applying the subadditivity of the square root.

To bound term $(c)$, we apply Cauchy-Schwarz's inequality, obtaining:

\begin{equation}
\label{lem:sum_b:9}
    (c) \le 2 \sqrt{\underbrace{\sum_{i=1}^k \sum_{j=h}^{H-1} \mathbb{E}_{y \sim \Pest_i(\cdot|x_{i,j},a_{i,j}^{\pi_i})} \bonus'_{i,j+1}(y)}_{(h)}} \sqrt{\underbrace{\sum_{i=1}^k \mathbb{I}(i \in \ktyp) \sum_{j=h}^{H-1} \frac{1}{N_i (x_{i,j}, a_{i,j}^{\pi_i})}}_{(i)}}.
\end{equation}

Similar to term $(e)$, we can bound term $(i)$ with $SAL$.
We now bound term $(h)$. We can rewrite the term as:

\begin{align}
    (h) &= \sum_{i=1}^k \sum_{j=h}^{H-1} \sum_{y \in \Ss} \Pest_i (y | x_{i,j}, a_{i,j}^{\pi_i}) \bonus'_{i,j+1}(y) \nonumber \\
    &= \underbrace{\sum_{i=1}^k \sum_{j=h}^{H-1} \sum_{y \in \Ss} (\Pest_i (y | x_{i,j}, a_{i,j}^{\pi_i}) - P (y | x_{i,j}, a_{i,j}^{\pi_i} )) \bonus'_{i,j+1}(y)}_{(j)} + \sum_{i=1}^k \sum_{j=h}^{H-1} \sum_{y \in \Ss} P(y | x_{i,j}, a_{i,j}^{\pi_i}) \bonus'_{i,j+1}(y) \nonumber \\
    &= (j) + \underbrace{\sum_{i=1}^k \sum_{j=h}^{H-1} \mathbb{E}_{y \sim P(\cdot | x_{i,j}, a_{i,j}^{\pi_i})} \bonus'_{i,j+1}(y) - \bonus'_{i,j+1}(x_{i,j+1})}_{(k)} + \underbrace{\sum_{i=1}^k \sum_{j=h}^{H-1} \bonus'_{i,j+1}(x_{i,j+1})}_{(l)}. \label{lem:sum_b:10}
\end{align}

We bound term $(j)$ as follows:

\begin{align}
    (j) &\le H^2 \sum_{i=1}^k \sum_{j=h}^{H-1} \| \Pest_i (\cdot | x_{i,j}, a_{i,j}^{\pi_i}) - P(\cdot | x_{i,j}, a_{i,j}^{\pi_i}) \|_1 \label{lem:sum_b:11} \\
    &\le 2 H^2 \sqrt{SL} \sum_{i=1}^k \sum_{j=h}^{H-1} (N_i (x_{i,j}, a_{i,j}^{\pi_i}))^{-1/2} \label{lem:sum_b:12} \\
    &\le 2 H^2 \sqrt{SL} \sum_{x \in \Ss} \sum_{a \in \As} \sum_{n=1}^{\frac{kH}{SA}} n^{-1/2} \nonumber \\
    &\le H^2 S \sqrt{A T_k L}, \label{lem:sum_b:13}
\end{align}

where Equation~\eqref{lem:sum_b:11} is obtained by bounding $\bonus'_{i,j+1}(y)$ with $H^2$, Equation~\eqref{lem:sum_b:12} follows by applying the result of Theorem~{2.1} of \cite{weissman2003}, which holds under event $\mathcal{E}$, and Equation~\eqref{lem:sum_b:13} follows from a derivation similar to that of term $(a)$ of Lemma~\ref{lem:sum_var_diff_est}.

To bound term $(k)$, we first observe that it is a Martingale difference sequence, and as such we can bound it via the event $\mathcal{E}_{\mathrm{az}}(\mathcal{F}_{\bonus', k, h}, H^2, L)$, which holds under $\mathcal{E}$, obtaining:

\begin{equation*}
    (k) \le 2 H^2 \sqrt{T_k L}.
\end{equation*}

By applying the definition of $\bonus'$, we can bound term $(l)$ as:

\begin{align*}
    (l) &\le 84^2 H^3 S^2 A L^2 \sum_{i=1}^k \sum_{j=h}^{H-1} \frac{1}{N'_{i,j+1}(x_{i,j+1})} \\
    &\le 84^2 H^3 S^2 A L^2 \sum_{x \in \Ss} \sum_{n=1}^T n^{-1} \\
    &\le 84^2 H^3 S^3 A L^3.
\end{align*}

Plugging the bounds of terms $(j)$, $(k)$, and $(l)$ into Equation~\eqref{lem:sum_b:10}, we get:

\begin{align*}
    (h) &\le H^2 S \sqrt{A T_k L} + 2 H^2 \sqrt{T_k L} + 84^2 H^3 S^3 A L^3.
\end{align*}

By applying the bounds of terms $(h)$ and $(i)$ to Equation~\eqref{lem:sum_b:9}, we get:

\begin{align}
    (c) &\le 2 \sqrt{H^2 S \sqrt{A T_k L} + 2 H^2 \sqrt{T_k L} + 84^2 H^3 S^3 A L^3} \sqrt{SAL} \nonumber \\
    &\le 2 \sqrt{3 H S A T_k L} + 2 \sqrt{84^2 H^3 S^4 A^2 L^4}, \label{lem:sum_b:14}
\end{align}

where Equation~\eqref{lem:sum_b:14} is obtained by expanding the products, applying the subadditivity of the square root, and applying the definition of $\ktyp$.

Finally, we can combine the bounds of terms $(a)$, $(b)$, and $(c)$ into Equation~\eqref{lem:sum_b:3}, obtaining the following bound:

\begin{align*}
    \sum_{i=1}^k \mathbb{I}(i \in \ktyp) &\sum_{j=h}^{H-1} \bonus_{i,j}(x_{i,j}, a_{i,j}^{\pi_i}) \\
    &\le \sqrt{16 H S A T_k L^2} + \sqrt{8 HSAL^2 \sum_{i=1}^k \sum_{j=h}^{H-1} \Vdifftil{i,j+1}(x_{i,j+1})} \\
    &\quad + \frac{7}{3} HSAL^2 + 2 \sqrt{3 H S A T_k L} + 2 \sqrt{84^2 H^3 S^4 A^2 L^4} \\
    &\le \sqrt{28 HSAT_kL^2} + \frac{7}{3} HSAL^2 \\
    &\quad + \sqrt{8 HSAL^2 \sum_{i=1}^k \sum_{j=h}^{H-1} \Vdifftil{i,j+1}(x_{i,j+1})} + 2 \sqrt{84^2 H^3 S^4 A^2 L^4},
\end{align*}

thus demonstrating the result of Equation~\eqref{lem:sum_b:1}. By following the same procedure, substituting $T_k$ with $H N'_{k,h}(x)$ and $\sum_{i=1}^k \sum_{j=h}^{H-1} \Vdifftil{i,j}(x_{i,j})$ with $\sum_{i=1}^k \mathbb{I}{(x_{i,h}=x)} \sum_{j=h}^{H-1} \Vdifftil{i,j}(x_{i,j})$, we can obtain an upper bound to Equation~\eqref{lem:sum_b:2} as:

\begin{align*}
    \sum_{i=1}^k \mathbb{I}(i \in \ktypx, &x_{i,h}=x) \sum_{j=h}^{H-1} \bonus_{i,j}(x_{i,j}, a_{i,j}^{\pi_i}) \\
    &\le \sqrt{28 H^2SA N'_{k,h}(x) L^2} + \frac{7}{3} HSAL^2 \\
    &\quad + \sqrt{8 HSAL^2 \sum_{i=1}^k \mathbb{I}(x_{i,h}=x) \sum_{j=h}^{H-1} \Vdifftil{i,j+1}(x_{i,j+1})} + 2 \sqrt{84^2 H^3 S^4 A^2 L^4},
\end{align*}

thus concluding the proof.

\end{proof}
\section{Proof of Theorem~\ref{thr:ucbvichUB}}
\label{apx:proof_CH}

\ucbvichUB*

We begin the proof by demonstrating optimism under the \ucbvich algorithm (\ie every optimistic value function is an upper bound of the true optimal value function), which requires us to show that, with high probability, the event $\Omega \coloneqq \{ \Vest{k,h}(x) \ge \Vstar{h}(x), \forall k \in \dsb{K}, h \in \dsb{H}, x \in \Ss \}$ holds.

\begin{lemma}[Optimism under Chernoff-Hoeffding bonus]
\label{lem:CH_opt}
    Let the optimistic bonus be defined as:
    \begin{equation*}
        \bonus_{k,h}(x,a) = \frac{2HL}{\sqrt{N_k(x,a)}}.
    \end{equation*}
    Then, under event $\mathcal{E}$, the following event holds:

    \begin{equation*}
        \Omega \coloneqq \{ \Vest{k,h}(x) \ge \Vstar{h}(x), \forall k \in \dsb{K}, h \in \dsb{H}, x \in \Ss \}.
    \end{equation*}
\end{lemma}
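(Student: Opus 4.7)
The plan is to establish optimism by forward induction on $k$ together with backward induction on $h$. I would prove the slightly stronger statement $\Qest{k,h}(x,a) \ge Q^*_h(x,a)$ for every $(x, a, h, k)$, which immediately yields $\Vest{k,h}(x) \ge \Vstar{h}(x)$ after maximizing over $a$. The base cases are handled by the initialization: $Q_{0,h}(x,a) = H-h+1 \ge Q^*_h(x,a)$ because rewards lie in $[0,1]$ and only $H-h+1$ stages remain; and $\Vest{k,H+1}(x) = 0 = \Vstar{H+1}(x)$ by convention.

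For the inductive step at $(k, h)$ with $k \ge 1$, I would assume the claim for all $(k', h')$ with $k' < k$ and for $(k, h+1)$. Line~\ref{alg:ucbvi:qvals} defines $\Qest{k,h}(x,a)$ as the minimum of $Q_{k-1,h}(x,a)$ and the Bellman-style term $R(x,a) + \sum_{y \in \Ss} \Pest_k(y|x,a) \Vest{k,h+1}(y) + \bonus_{k,h}(x,a)$. The first element of the minimum is $\ge Q^*_h(x,a)$ directly by the outer induction hypothesis. For the second, replacing $\Vest{k,h+1}$ by the smaller $\Vstar{h+1}$ via the inner hypothesis and then adding and subtracting $\sum_y P(y|x,a) \Vstar{h+1}(y)$ yields the lower bound $Q^*_h(x,a) + (\Pest_k - P)^\top \Vstar{h+1}(\cdot) + \bonus_{k,h}(x,a)$.

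The main (and only real) obstacle will be verifying that $\bonus_{k,h}(x,a)$ dominates $\bigl|(\Pest_k - P)^\top \Vstar{h+1}\bigr|$. The case $N_k(x,a) = 0$ is immediate because $\bonus_{k,h}(x,a) = 2HL \ge H \ge \Vstar{h+1}$. For $N_k(x,a) \ge 1$, I would invoke Equation~\eqref{apx:event:2} of event $\mathcal{E}$ to obtain the Bernstein-type bound $\sqrt{2 \VVopt{h+1}(x,a) L / N_k(x,a)} + 2 HL / (3 N_k(x,a))$. Using $\Vstar{h+1} \in [0, H]$ so that $\VVopt{h+1}(x,a) \le H^2/4$, together with $1/N_k(x,a) \le 1/\sqrt{N_k(x,a)}$ for $N_k(x,a) \ge 1$, both summands collapse into a single $1/\sqrt{N_k(x,a)}$ scaling whose coefficient is at most $2HL$ provided $L \ge 1$, which is trivially ensured by $L = \ln(5HSAT/\delta)$. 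With this bound in hand, the second element of the minimum is also $\ge Q^*_h(x,a)$, so $\Qest{k,h}(x,a) \ge Q^*_h(x,a)$, closing the induction and hence establishing $\Omega$.
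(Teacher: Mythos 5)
Your proof is correct and follows the same overall strategy as the paper's: induction plus the observation that, under $\mathcal{E}$, the bonus dominates the model-error term $\bigl|\sum_{y}(\Pest_k(y|x,a)-P(y|x,a))\Vstar{h+1}(y)\bigr|$. Two of your choices are worth noting because they are actually tighter than the paper's own write-up. First, the paper only inducts backward on $h$ and writes $\Qest{k,h}(x,a^{\pi^*})$ as if it equalled the Bellman-plus-bonus term, silently ignoring the $\min$ with $Q_{k-1,h}$ in Line~\ref{alg:ucbvi:qvals}; your double induction on $(k,h)$ at the level of $Q$-functions, anchored at $Q_{0,h}=H-h+1\ge Q^*_h$, handles that branch of the $\min$ explicitly and is the cleaner argument. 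Second, the paper invokes ``Azuma-Hoeffding'' to get $2\sqrt{H^2L/N_k}$, a bound that does not literally appear among the events composing $\mathcal{E}$ as restated in Appendix~\ref{apx:event}; you instead use the Bernstein clause of Equation~\eqref{apx:event:2} together with $\VVopt{h+1}(x,a)\le H^2/4$ and $L\ge 1$, which yields a coefficient of $(1/\sqrt{2}+2/3)HL/\sqrt{N_k(x,a)}\le 2HL/\sqrt{N_k(x,a)}$ and so closes the same gap using only what $\mathcal{E}$ actually provides. Your separate treatment of the $N_k(x,a)=0$ case is also needed and correct. In short: same route, but your version fills in the two small holes in the paper's proof.
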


\begin{proof}
    We demonstrate the result by induction. Let $\Vest{k,h}$ be the optimistic value function at stage $h$ computed using the history up to the end of episode $k-1$, and let $\Vstar{h}$ be the true optimal value function at stage $h$.

    By definition, $\Vest{k,H+1}(x) = \Vstar{H+1}(x) = 0$ for every $x \in \Ss$, and thus the inequality $\Vest{k,H+1} \ge \Vstar{H+1}$ trivially holds. To prove the inductive step, we need to demonstrate that, if $\Vest{k, h+1} \ge \Vstar{h+1}$ holds, then it also holds that $\Vest{k,h} \ge \Vstar{h}$. We can derive this result as follows:

    \begin{align}
        \Vest{k,h}(x) - \Vstar{h} &= \max_{a\in\As} \Qest{k,h}(x,a) - \Vstar{h}(x) \nonumber \\
        &\ge \Qest{k,h}(x, a_{k,h}^{\pi^*}) - \Vstar{h}(x) \nonumber \\
        &= \sum_{y\in\Ss} \Pest_{k}(y | x, a_{k,h}^{\pi^*})\Vest{k,h+1}(y) + \bonus_{k,h}(x, a_{k,h}^{\pi^*}) - \sum_{y\in\Ss} P(y | x, a_{k,h}^{\pi^*})\Vstar{h+1}(y) \nonumber \\
        &\ge \sum_{y \in \Ss} \left[ \Pest_{k}(y | x, a_{k,h}^{\pi^*}) - P(y | x, a_{k,h}^{\pi^*})\right] \Vstar{h+1}(y)  + \bonus_{k,h}(x, a_{k,h}^{\pi^*}) \label{lem:CH_opt:1} \\
        &\ge \bonus_{k,h}(x, a_{k,h}^{\pi^*}) - 2\sqrt{\frac{H^2 L}{N_k(x_{k,h}, a_{k,h}^{\pi^*})}} \label{lem:CH_opt:2} \\
        &\ge 2\frac{H L}{\sqrt{N_k (x_{k,h}, a_{k,h}^{\pi^*})}} - 2\sqrt{\frac{H^2 L}{N_k (x_{k,h}, a_{k,h}^{\pi^*})}} \nonumber \\
        &\ge 0, \nonumber
    \end{align}

    where Equation~\eqref{lem:CH_opt:1} follows by the inductive hypothesis, and Equation~\eqref{lem:CH_opt:2} is obtained because, under $\mathcal{E}$, we can bound $|\Pest_{k}(y | x, a_{k,h}^{\pi^*}) - P(y | x, a_{k,h}^{\pi^*}) \Vstar{h+1}(y)|$ by applying Azuma-Hoeffding's inequality, allowing us to simplify terms and show optimism.
\end{proof}

Our objective is to bound the regret after $K$ episodes (\ie $\regch{K}$). We can observe that, under event $\Omega$, it holds that:

\begin{align*}
    \regch{K} &= \sum_{k \in \dsb{K}} \Vstar{1}(x_{k,1}) - \Vpi{1}(x_{k,1}) \\
    &\le \sum_{k \in \dsb{K}} \Vest{k,1}(x_{k,1}) - \Vpi{1}(x_{k,1}) \\
    &= \sum_{k \in \dsb{K}} \Vdifftil{k,1}(x_{k,1}) \\
    &= \regchtilde{K}.
\end{align*}

As such, we can now focus on finding an upper bound to $\regchtilde{K}$. By applying Lemma~\ref{lem:regr_dec}, we can write:

\begin{align}
    \regchtilde{K} &= \sum_{k \in \dsb{K}} \Vdifftil{k,1}(x_{k,1}) \nonumber \\
    \begin{split}
    \label{lem:CH_opt:3}
    &\le e \sum_{i=1}^K\sum_{j=1}^{H-1} \bigg[ \varepsilon_{i,j} + 2 \sqrt{L} \overline{\varepsilon}_{i,j} + \bonus_{i,j}(x_{i,j}, a_{i,j}^{\pi_i}) \\
    &\quad + \xi_{i,j} (x_{i,j}, a_{i,j}^{\pi_i}) + \frac{8H^2SL}{3 N_i (x_{i,j}, a_{i,j}^{\pi_i})} \bigg].
    \end{split}
\end{align}

To find an upper bound to the regret, we can thus bound the summation of each of the terms individually.

By applying Lemma~\ref{lem:sum_eps}, we obtain the following bounds:

\begin{align*}
    \sum_{i=1}^K \sum_{j=1}^{H-1} \varepsilon_{i,j} &\le 2 \sqrt{H^2 T L}, \\
    \sum_{i=1}^K \sum_{j=1}^{H-1} 2 \sqrt{L} \overline{\varepsilon}_{i,j} &\le 4 \sqrt{T L}. \\
\end{align*}

Then, we can derive the following bound:

\begin{align}
    \sum_{i=1}^K \sum_{j=1}^H \frac{8H^2SL}{3N_i(x_{i,j},a_{i,j}^{\pi_i})} &= \frac{8}{3}H^2SL \sum_{x \in \Ss} \sum_{a \in \As} \sum_{n=1}^{N_K(x,a)} n^{-1} \label{lem:CH_opt:5} \\
    &\le \frac{8}{3}H^2SL \sum_{x \in \Ss} \sum_{a \in \As} \sum_{n=1}^{\frac{KH}{SA}} n^{-1} \label{lem:CH_opt:6} \\
    &\le \frac{8}{3} H^2 S^2 A L^2 \nonumber
\end{align}

where Equation~\eqref{lem:CH_opt:5} is obtained by rearranging the terms to isolate the summation of $n^{-1}$ for $n$ from $1$ to $N_K(x,a)$ (\ie the total number of times each state-action pair has been observed up to the end of episode $K$), and Equation~\eqref{lem:CH_opt:6} derives from the observation that the summation can be upper bounded by considering a uniform state-action visit distribution. This derivation produces the same result as applying the well-known pigeonhole principle.

By applying a similar reasoning, we bound the remaining summations over the bonus terms:

\begin{align}
    \sum_{i=1}^K \sum_{j=1}^H \bonus_{i,j}(x_{i,j}) &= \sum_{i=1}^K \sum_{j=1}^H 2H \sqrt{\frac{L}{N_i (x_{i,j}, a_{i,j}^{\pi_i})}} \nonumber \\
    &= 2H \sqrt{L} \sum_{x\in\Ss}\sum_{a\in\As} \sum_{n=1}^{N_K(x,a)} n^{-1/2} \nonumber \\
    &\le 2H \sqrt{L} \sum_{x\in\Ss}\sum_{a\in\As} \sum_{n=1}^{\frac{KH}{SA}} n^{-1/2} \nonumber \\
    &\le 2 \sqrt{H^2 S A T L}, \nonumber
\end{align}

and over the model error terms:

\begin{align}
    \sum_{i=1}^K \sum_{j=1}^H \xi_{i,j} (x_{i,j}, a_{i,j}^{\pi_i}) &\le \sum_{i=1}^K \sum_{j=1}^H 2 H \sqrt{\frac{L}{N_i (x_{i,j}, a_{i,j}^{\pi_i})}} \label{lem:CH_opt:4} \\
    &= 2 H \sqrt{L} \sum_{x\in\Ss}\sum_{a\in\As} \sum_{n=1}^{N_K(x,a)} n^{-1/2} \nonumber \\
    &\le 2H \sqrt{L} \sum_{x\in\Ss}\sum_{a\in\As} \sum_{n=1}^{\frac{KH}{SA}} n^{-1/2} \nonumber \\
    &\le 2 \sqrt{H^2 S A T L}, \nonumber
\end{align}

where Equation~\eqref{lem:CH_opt:4} is obtained by bounding $\xi_{i,j} (x_{i,j}, a_{i,j}^{\pi_i})$ using the Chernoff-Hoeffding inequality.
Finally, we can put all the bounds together and rewrite Equation~\eqref{lem:CH_opt:3} as:

\begin{align*}
    \regchtilde{K} &\le e \left[ 2 \sqrt{H^2 T L} + 4 \sqrt{TL} + 2 \sqrt{H^2 S A T L} + 2 \sqrt{H^2 S A T L} + \frac{8}{3} H^2 S^2 A L^2 \right] \\
    &\le e \left[ 10 \sqrt{H^2 S A T L} + \frac{8}{3} H^2 S^2 A L^2 \right],
\end{align*}

thus completing the proof.
\section{Proof of Theorem~\ref{thr:ucbvibfUB}}
\label{apx:proof_BF}

\ucbvibfUB*

Similarly to the proof of Theorem~\ref{thr:ucbvichUB} in Appendix~\ref{apx:proof_CH}, in order to demonstrate the upper bound of \ucbvibf, we first need to demonstrate optimism. However, in order to remove the additional $\sqrt{H}$ term, we are required to both demonstrate optimism as well as to bound by how much the optimistic value function estimator exceeds the true optimal value function.

We start by observing that:

\begin{align*}
    \regbf{K} &= \sum_{k \in \dsb{K}} \Vstar{1}(x_{k,1}) - \Vpi{1}(x_{k,1}) \\
    &\le \sum_{k \in \dsb{K}} \Vest{k,1}(x_{k,1}) - \Vpi{1}(x_{k,1}) \\
    &= \sum_{k \in \dsb{K}} \Vdifftil{k,1}(x_{k,1}) \\
    &= \regbftilde{K}.
\end{align*}

According to Lemma~\ref{lem:regr_dec}, under the events $\mathcal{E}$ and $\Omega_{k,h}$, we can decompose the pseudo-regret as:

\begin{equation}
    \sum_{i=1}^k \Vdifftil{i,h}(x_{i,h}) \le e \sum_{i=1}^k \sum_{j=h}^{H-1} \left[ \varepsilon_{i,j} + 2 \sqrt{L} \overline{\varepsilon}_{i,j} + \bonus_{i,j}(x_{i,j},a_{i,j}^{\pi_i}) + \xi_{i,j} (x_{i,j}, a_{i,j}^{\pi_i}) + \frac{8H^2SL}{3 N_i (x_{i,j}, a_{i,j}^{\pi_i})} \right]. \label{thr:ucbvibfUB:1}
\end{equation}

We also define, by trivially modifying the derivation of Lemma~\ref{lem:regr_dec}, the pseudo-regret considering only the episodes in which, at stage $h \in \dsb{H}$ a specific state $x \in \Ss$ was occupied:

\begin{align}
    \sum_{i=1}^k \mathbb{I}(x_{i,h} = x) \Vdiff{i,h}(x_{i,h}) &\le \sum_{i=1}^k \mathbb{I}(x_{i,h} = x) \Vdifftil{i,h}(x_{i,h}) \nonumber \\
    \begin{split}
    \label{thr:ucbvibfUB:2}
    &\le e \sum_{i=1}^k \mathbb{I}(x_{i,h} = x) \sum_{j=h}^{H-1} \bigg[ \varepsilon_{i,j} + 2 \sqrt{L} \overline{\varepsilon}_{i,j} + \bonus_{i,j}(x_{i,j},a_{i,j}^{\pi_i}) \\
    &\quad + \xi_{i,j} (x_{i,j}, a_{i,j}^{\pi_i}) + \frac{8H^2SL}{3 N_i (x_{i,j}, a_{i,j}^{\pi_i})} \bigg]. 
    \end{split}
\end{align}

By applying Lemmas~\ref{lem:sum_eps} and ~\ref{lem:sum_eps_x}, we can upper bound Equations~\eqref{thr:ucbvibfUB:1} and \eqref{thr:ucbvibfUB:2} as:

\begin{align*}
    \sum_{i=1}^k \Vdifftil{i,h}(x_{i,h}) &\le e \sum_{i=1}^k \sum_{j=h}^{H-1} \left[ \bonus_{i,j}(x_{i,j},a_{i,j}^{\pi_i}) + \xi_{i,j} (x_{i,j}, a_{i,j}^{\pi_i}) + \frac{8H^2SL}{3 N_i (x_{i,j}, a_{i,j}^{\pi_i})} \right] \\
    &\quad + 2e\sqrt{H^2T_kL} + 4e\sqrt{T_kL} \\
    &= \proxyUB,
\end{align*}

and:

\begin{align*}
    \sum_{i=1}^k \mathbb{I}(x_{i,h} = x) \Vdifftil{i,h}(x_{i,h}) &\le e \sum_{i=1}^k \mathbb{I}(x_{i,h} = x) \sum_{j=h}^{H-1} \left[ \bonus_{i,j}(x_{i,j},a_{i,j}^{\pi_i}) + \xi_{i,j} (x_{i,j}, a_{i,j}^{\pi_i}) + \frac{8H^2SL}{3 N_i (x_{i,j}, a_{i,j}^{\pi_i})} \right]\\
    &\quad + 2e\sqrt{H^3 N'_{k,h}(x) L} + 4e\sqrt{H N'_{k,h}(x) L} \\
    &= \proxyUBx,
\end{align*}

where we denote the upper bounds of $\sum_{i=1}^k \Vdifftil{i,h}(x_{i,h})$ and $\sum_{i=1}^k \mathbb{I}(x_{i,h} = x) \Vdifftil{i,h}(x_{i,h})$ as $\proxyUB$ and $\proxyUBx$, respectively, for ease of notation.

We now demonstrate optimism, which requires us to show that, with high probability, the event $\Omega_{k,h}$ holds.

\begin{lemma}[Optimism under Bernstein-Freedman bonus]
\label{lem:BF_opt}
Let the optimistic bonus be defined as:

\begin{align*}
    \bonus_{k,h}(x,a) &= \sqrt{\frac{4 L \Var_{y \sim \Pest_k (\cdot | x,a)} [\Vest{k,h+1}(y)]}{N_k(x,a)}} + \frac{7HL}{3 (N_k(x,a) - 1)} \\
    &\quad + \sqrt{\frac{4 \min\{\mathbb{E}_{y \sim \Pest_k(\cdot|x,a)} [\frac{84^2H^3S^2AL^2}{N'_{k,h+1}(y)}] , H^2\}}{N_k (x,a)}}.
\end{align*}

Then, under event $\mathcal{E}$, the following set of events hold:

\begin{equation*}
    \Omega_{k,h} \coloneqq \left\{ \Vest{i,j}(x) \ge \Vstar{j}(x), \forall (i,j) \in [k,h]_{\mathrm{hist}}, x \in \Ss \right\},
\end{equation*}

for $k \in \dsb{K}$ and $h \in \dsb{H}$, where:

\begin{equation*}
    [k,h]_{\mathrm{hist}} \coloneqq \left\{ (i,j) \in \dsb{K} \times \dsb{H} : i < k \vee (i=h, j \ge h) \right\}.
\end{equation*}
\end{lemma}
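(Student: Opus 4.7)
The plan is to proceed by strong backward induction on pairs $(k,h)$ in the ordering defined by $[k,h]_{\mathrm{hist}}$. The base case at $h=H+1$ is immediate from the initialization $\Vest{i,H+1}(x) = 0 = \Vstar{H+1}(x)$. For the inductive step, I would fix $(k,h)$, assume $\Vest{i,j}(y) \ge \Vstar{j}(y)$ for every earlier index in the ordering and every $y \in \Ss$, and let $a^* = \pi^*(x,h)$. If the $\min$ in $\Qest{k,h}(x,a^*)$ is attained by $Q_{k-1,h}(x,a^*)$, optimism at $(k-1,h)$ yields the claim. Otherwise, after adding and subtracting $\sum_y \Pest_k(y|x,a^*)\Vstar{h+1}(y)$, I would obtain
\begin{align*}
\Vest{k,h}(x) - \Vstar{h}(x) &\ge \bonus_{k,h}(x,a^*) + \sum_{y \in \Ss}[\Pest_k(y|x,a^*) - P(y|x,a^*)]\Vstar{h+1}(y) \\
&\quad + \sum_{y \in \Ss}\Pest_k(y|x,a^*)[\Vest{k,h+1}(y) - \Vstar{h+1}(y)],
\end{align*}
where the last summand is nonnegative by the inductive hypothesis, so it suffices to show $\bonus_{k,h}(x,a^*)$ dominates the model error $-\sum_y[\Pest_k - P]\Vstar{h+1}$.

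Under $\mathcal{E}$, the empirical Bernstein inequality embedded in Equation~\eqref{apx:event:2} controls this error by $\sqrt{2 L \VVestopt{k,h+1}(x,a^*)/N_k(x,a^*)} + \tfrac{7HL}{3(N_k(x,a^*)-1)}$. The second summand is matched exactly by term \texttt{(B)}. For the first, I would introduce $D_{k,h+1}(y) := \Vest{k,h+1}(y)-\Vstar{h+1}(y) \ge 0$ (by the inductive hypothesis at $h+1$) and apply the triangle inequality for standard deviations under $\Pest_k(\cdot|x,a^*)$ to get $\sqrt{\VVestopt{k,h+1}} \le \sqrt{\VVest{k,h+1}} + \sqrt{\Var_{y \sim \Pest_k}[D_{k,h+1}(y)]}$. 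Using $(a+b)^2 \le 2a^2 + 2b^2$ and $\Var \le \mathbb{E}[(\cdot)^2]$, the $\sqrt{4}$ coefficient in term \texttt{(A)} absorbs the first piece, and term \texttt{(C)} needs to absorb the second piece. This reduces the problem to the pointwise clipping bound
\[
L \cdot D_{k,h+1}(y)^2 \le \min\!\left\{\frac{84^2 H^3 S^2 A L^2}{\max\{1,N'_{k,h+1}(y)\}},\ H^2\right\} \qquad \forall y \in \Ss.
\]

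The main obstacle will be establishing this pointwise gap bound, since the trivial estimate $D_{k,h+1}(y) \le H$ only yields $L D^2 \le H^2 L$, which is too weak for the $H^2$ branch of the $\min$. The plan for this is to run, in parallel with the optimism induction, a per-state version of the regret decomposition of Lemma~\ref{lem:regr_dec} restricted to the episodes visiting $y$ at stage $h+1$, then combine it with the per-state control provided by Lemmas~\ref{lem:sum_eps_x}, \ref{lem:sum_next_state_var}, \ref{lem:sum_var_diff_opt}, \ref{lem:sum_var_diff_est}, \ref{lem:sum_c}, and \ref{lem:sum_b} to produce an upper bound of order $\sqrt{H^3 S^2 A L^2/N'_{k,h+1}(y)}$ on $D_{k,h+1}(y)$ for typical episodes; the constant $84$ is calibrated precisely so that this closes back into the bonus. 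Because that auxiliary per-state estimate itself relies on optimism at smaller indices, the whole argument must be set up as a single joint induction along $[k,h]_{\mathrm{hist}}$, and carefully tracking that circular dependency is the genuinely delicate bookkeeping of the proof.
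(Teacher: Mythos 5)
Your plan is, in structure, the paper's own proof: backward induction along $[k,h]_{\mathrm{hist}}$, the same decomposition through $\sum_{y}\Pest_k(y|x,a^*)\Vstar{h+1}(y)$, the empirical-Bernstein control of the model error under $\mathcal{E}$ (with term \texttt{(B)} matching the $\frac{7HL}{3(n-1)}$ deviation exactly), the variance comparison $\VVestopt{k,h+1}\le 2\VVest{k,h+1}+2\,\mathbb{E}_{\Pest_k}[(\Vest{k,h+1}-\Vstar{h+1})^2]$, and the reduction to a pointwise bound on the optimism gap obtained from a per-state regret decomposition; the joint induction you describe at the end is exactly how the paper closes the circle.

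Two concrete items are needed to actually close your sketch. First, the bridge from the per-state \emph{cumulative} pseudo-regret (which is what Lemmas~\ref{lem:sum_eps_x}, \ref{lem:sum_c} and \ref{lem:sum_b} control, at order $\sqrt{H^3S^2AN'_{k,h}(y)L^2}$) to the \emph{pointwise} gap $D_{k,h}(y)$ is the monotonicity of $\Vest{k,h}(y)$ in $k$, enforced by the $\min$ with $Q_{k-1,h}$ in the update: it gives $N'_{k,h}(y)\,D_{k,h}(y)\le\sum_{i=1}^{k}\mathbb{I}(x_{i,h}=y)\Vdifftil{i,h}(y)$, after which one divides by $N'_{k,h}(y)$, resolves the self-referential inequality $U\le\alpha+\beta\sqrt{U}$ via $U\le\beta^2+2\alpha$, and uses $SA\ge H$ together with the typicality threshold $N'_{k,h}(y)\ge 250HS^2AL$ to absorb the lower-order terms into $84\sqrt{H^3S^2AN'_{k,h}(y)L^2}$. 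Without this averaging step your plan has no mechanism for turning a sum over visits into a bound on the current gap. Second, the reduced condition you state, $L\,D_{k,h+1}(y)^2\le\min\{84^2H^3S^2AL^2/N',H^2\}$, is genuinely unattainable --- even the sharp gap bound only gives $L\,D^2\le 84^2H^3S^2AL^3/N'$, one factor of $L$ too large --- but this is an artifact of term \texttt{(C)} as written rather than of your argument: at the corresponding step the paper works with $\sqrt{4L\,\mathbb{E}_{\Pest_k}[\bonus'_{k,h}(y)]/N_k}$ (an $L$ restored under the root), so the requirement becomes $D^2\le\bonus'$ pointwise, which the gap bound $D\le\min\{84\sqrt{H^3S^2AL^2/N'},H\}$ delivers. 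With these two repairs your argument coincides with the paper's.
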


\begin{proof}
We demonstrate the result by induction. We begin by observing that $\Vest{k, H+1}(x) = \Vstar{H+1}(x) = 0$ for every $k \in \dsb{K}$ and $x \in \Ss$.
To prove the induction, we need to prove that, if $\Omega_{k,h}$ holds, then also $\Omega_{k,h-1}$ holds. We prove this for a generic $k \in \dsb{K}$, and we can then apply this procedure for increasing values of $k$, starting from $k=1$.

If $\Omega_{k,h}$ holds, then $\Vest{k,h}(x) \ge \Vstar{h}(x)$ for every $x \in \Ss$. We now bound the estimation error due to the optimistic approach:

\begin{align}
    \Vest{k,h}(x) - \Vstar{h}(x) &= \frac{1}{N'_{k,h}(x)} \sum_{i=1}^k \mathbb{I}(x_{i,h}=x) (\Vest{k,h}(x) - \Vstar{h}(x)) \nonumber \\
    &\le \frac{1}{N'_{k,h}(x)} \sum_{i=1}^k \mathbb{I}(x_{i,h}=x) (\Vest{i,h}(x) - \Vpi[i]{h}(x)) \label{lem:BF_opt:1} \\
    &= \frac{1}{N'_{k,h}(x)} \sum_{i=1}^k \mathbb{I}(x_{i,h} = x) \Vdifftil{i,h}(x_{i,h}), \label{lem:BF_opt:2}
\end{align}

where Equation~\eqref{lem:BF_opt:1} follows from the fact that $\Vest{k,h}$ is monotonically decreasing in $k$ by definition, and by observing that $\Vstar{h} \ge \Vpi[i]{h}$.

Recalling the upper bound of $\sum_{i=1}^k \mathbb{I}(x_{i,h} = x) \Vdifftil{i,h}(x_{i,h})$:

\begin{align}
    \sum_{i=1}^k \mathbb{I}(x_{i,h} = x) \Vdifftil{i,h}(x_{i,h}) &\le \proxyUBx \nonumber \\
    &= e \sum_{i=1}^k \mathbb{I}(x_{i,h} = x) \sum_{j=h}^{H-1} \bigg[ \bonus_{i,j}(x_{i,j},a_{i,j}^{\pi_i}) + \xi_{i,j} (x_{i,j}, a_{i,j}^{\pi_i}) \nonumber \\ 
    &\quad + \frac{8H^2SL}{3 N_i (x_{i,j}, a_{i,j}^{\pi_i})} \bigg] + 2e\sqrt{H^3 N'_{k,h}(x) L} + 4e\sqrt{H N'_{k,h}(x) L}, \nonumber
\end{align}

we now bound the summations over the terms in the summation over episodes and stages. By applying Lemma~\ref{lem:sum_b}, we can bound the summation over typical episodes of the bonus terms as:

\begin{align*}
    \sum_{i=1}^k \mathbb{I}(i \in \ktypx, x_{i,h}=x) \sum_{j=h}^{H-1} \bonus_{i,j}(x_{i,j}, a_{i,j}^{\pi_i}) &\le \sqrt{28 H^2SA N'_{k,h}(x) L^2} + \frac{7}{3} HSAL^2 \\
    &\quad + 2 \sqrt{84^2 H^3 S^4 A^2 L^4} + \sqrt{8 H^2 SAL^2 \proxyUBx},
\end{align*}

by observing that $\sum_{i=1}^k \mathbb{I}(x_{i,h} = x) \Vdiff{i,h}(x_{i,h}) \le \proxyUBx$ and that the series of $\proxyUBx$ terms is decreasing in $h$, as each term $\proxyUBx$ is a summation of elements each of which includes the next term, and as such we can upper bound $\sum_{j=h}^{H-1} U_{k,j,x}$ with $H \proxyUBx$.

In a similar way, we can apply the result of Lemma~\ref{lem:sum_c} to bound the summation over typical episodes of the state-action wise model error terms as:

\begin{align*}
    \sum_{i=1}^k \mathbb{I}(i \in \ktypx, x_{i,h}=x) &\sum_{j=h}^{H-1} \big[ \Pest_i(\cdot | x_{i,j}, a_{i,j}^{\pi_i}) - P(\cdot | x_{i,j}, a_{i,j}^{\pi_i}) \big]^\transpose \Vstar{j+1}(\cdot) \\
    &\le \sqrt{6 H^2 S A N'_{k,h}(x) L^2} + \frac{2}{3} HSAL^2 + 2\sqrt{H^2 S A L^2 \proxyUBx}.
\end{align*}

With the same procedure as in the proof of Lemma~\ref{lem:CH_opt}, we obtain the following upper bound:

\begin{equation}
\label{lem:BF_opt:9}
    \sum_{i=1}^K \sum_{j=1}^H \frac{8H^2SL}{3N_i(x_{i,j},a_{i,j}^{\pi_i})} \le \frac{8}{3} H^2 S^2 A L^2.
\end{equation}

By combining these result, and accounting for the regret on non-typical episodes, we can write:

\begin{align}
    \sum_{i=1}^k \mathbb{I}(x_{i,h} = x) \Vdiff{i,h}(x_{i,h}) &\le \proxyUBx \nonumber \\
    &\le e \bigg[ \sqrt{28 H^2 S A N'_{k,h}(x) L^2} + \frac{7}{3} HSAL^2 + 168 \sqrt{H^3 S^4 A^2 L^4} \nonumber \\
    &\quad + \sqrt{8 H^2 S A L^2 \proxyUBx} + \sqrt{6 H^2 S A N'_{k,h}(x) L^2} + \frac{2}{3}HSAL^2 \nonumber \\
    &\quad + 2\sqrt{H^2 S A L^2 \proxyUBx} + \frac{8}{3} H^2 S^2 A L^2 + 2 \sqrt{H^3 N'_{k,h}(x) L} \nonumber \\
    &\quad + 4 \sqrt{H N'_{k,h}(x)L} + 100 H^2 S^2 A L^2 \bigg] \nonumber \\
    &\le e \bigg[ 12 \sqrt{H^2 S A N'_{k,h}(x) L^2} + 5 \sqrt{H^2 S A L^2 \proxyUBx} \nonumber \\
    &\quad + \frac{821}{3} H^2 S^2 A L^2 + 2 \sqrt{H^3 N'_{k,h}(x) L}\bigg]. \nonumber
\end{align}

Letting:

\begin{align*}
    \alpha &= e \left[ 12 \sqrt{H^2 S A N'_{k,h}(x) L^2} + \frac{821}{3} H^2 S^2 A L^2 + 2 \sqrt{H^3 N'_{k,h}(x) L}\right], \\
    \beta &= 5e\sqrt{H^2 S^2 A L^2},
\end{align*}

we can solve for $\proxyUBx$ and obtain the following upper bound:

\begin{equation*}
    \proxyUBx \le \beta^2 + 2\alpha,
\end{equation*}

which we can write as:

\begin{align}
    \proxyUBx &\le 25 e^2 H^2 S^2 A L^2 + 24 e \sqrt{H^2 S A N'_{k,h}(x) L^2} + \frac{1642}{3} e H^2 S^2 A L^2 + 4 e \sqrt{H^3 N'_{k,h}(x) L} \nonumber \\
    &\le 24 e \sqrt{H^2 S A N'_{k,h}(x) L^2} + \frac{1846}{3} e H^2 S^2 A L^2 + 4 e \sqrt{H^3 N'_{k,h}(x) L} \nonumber \\
    &\le 28 e \sqrt{H^2 S A N'_{k,h}(x) L^2} + \frac{1846}{3} e H^2 S^2 A L^2 \label{lem:BF_opt:3} \\
    &\le 28 \cdot \frac{12}{11} e \sqrt{H^3 S^2 A N'_{k,h}(x) L^2} \label{lem:BF_opt:4} \\
    &\le 84 \sqrt{H^3 S^2 A N'_{k,h}(x) L^2}, \nonumber
\end{align}

where Equation~\eqref{lem:BF_opt:3} holds if $SA \ge H$, and Equation~\eqref{lem:BF_opt:4} holds under the condition of $\ktypx$.

Plugging this result into Equation~\eqref{lem:BF_opt:2}, and observing that the error cannot be greater than $H$, we get the following upper bound to the estimation error due to the optimistic approach:

\begin{equation}
\label{lem:BF_opt:5}
    \Vest{k,h}(x) - \Vstar{h}(x) \le \min \left\{ 84 \sqrt{\frac{H^3 S^2 A L^2}{N'_{k,h}(x)}}, H \right\}.
\end{equation}

Using this result, we now prove that $\Vest{k,h-1}(x) \ge \Vstar{h-1}(x)$. Let us recall the definition of $\Vest{k,h-1}(x)$:

\begin{equation*}
    \Vest{k,h-1}(x) = \min \left\{ \Vest{k-1, h-1}(x), H, \mathcal{T}_{h-1}^{\pi_k} \Vest{k,h} \right\},
\end{equation*}

where $\mathcal{T}_{h-1}^{\pi_k} \Vest{k,h} \coloneqq R^{\pi_k}(x_{k,h-1}) + \bonus_{k, h-1}(x_{k,h-1}, a_{k,h-1}^{\pi_k}) + \mathbb{E}_{y \sim \Pest_k(\cdot|x_{k,h-1}, a_{k,h-1}^{\pi_k})} \Vest{k,h}(y)$.
Observe that, if $\Vest{k, h-1}(x) = H$, then the optimism holds trivially. Also, if $\Vest{k, h-1}(x) = \Vest{k-1, h-1}(x)$, the optimism holds trivially under $\Omega_{k,h}$. As such, we only need to demonstrate the case in which $\Vest{k,h-1}(x) = \mathcal{T}_{h-1}^{\pi_k} \Vest{k,h}$. As such, we derive the following:

\begin{align}
    \Vest{k,h-1}(x) &- \Vstar{h-1}(x) \nonumber \\
    &= \max_{a \in \As} \left\{ R(x, a) + \bonus_{k,h-1}(x,a) + \sum_{y \in \Ss} \Pest_k (y | x, a) \Vest{k, h}(y) \right\} \nonumber \\
    &\quad - R(x, a_{k,h-1}^{\pi^*}) - \sum_{y \in \Ss} P(y | x, a_{k,h-1}^{\pi^*}) \Vstar{h}(y) \nonumber \\
    &\ge  \bonus_{k, h-1}(x, a_{k,h-1}^{\pi^*}) + \sum_{y \in \Ss} \Pest_k (y | x,a_{k,h-1}^{\pi^*}) \Vest{k, h}(y) \nonumber \\
    &\quad - \sum_{y \in \Ss} P(y | x, a_{k,h-1}^{\pi^*}) \Vstar{h}(y) \nonumber \\
    &= \bonus_{k, h-1}(x, a_{k,h-1}^{\pi^*}) + \sum_{y \in \Ss} \Pest_k (y | x,a_{k,h-1}^{\pi^*}) \left[ \Vest{k,h}(y) - \Vstar{h}(y) \right] \nonumber \\
    &\quad + \sum_{y \in \Ss} \left[ \Pest_k (y | x,a_{k,h-1}^{\pi^*}) - P(y | x, a_{k,h-1}^{\pi^*}) \right] \Vstar{h}(y) \nonumber \\
    &\ge \bonus_{k, h-1}(x, a_{k,h-1}^{\pi^*}) + \sum_{y \in \Ss} \left[ \Pest_k (y | x,a_{k,h-1}^{\pi^*}) - P(y | x, a_{k,h-1}^{\pi^*}) \right] \Vstar{h}(y) \label{lem:BF_opt:6}
\end{align}

where Equation~\eqref{lem:BF_opt:6} follows from the induction assumption.

Under event $\mathcal{E}$, we can apply the empirical Bernstein inequality \citep{maurer2009}:

\begin{equation*}
    \left| \sum_{y \in \Ss} \left[ \Pest_k (y | x,a) - P(y | x, a) \right] \Vstar{h}(y) \right| \le \sqrt{\frac{2 \VVestopt{k,h}(x,a)L}{N_k (x,a)}} + \frac{7 HL}{3 (N_k(x,a)-1)},
\end{equation*}

where $\VVestopt{k,h}(x,a) \coloneqq \Var_{y \sim \Pest_k (\cdot | x,a)} [\Vstar{h}(y)]$. As such, we obtain:

\begin{align}
    \Vest{k,h-1}(x) - \Vstar{h-1}(x) &\ge \bonus_{k, h-1}(x, a_{k,h-1}^{\pi^*}) - \sqrt{\frac{2 \VVestopt{k,h}(x, a_{k,h-1}^{\pi^*})L}{N_k (x, a_{k,h-1}^{\pi^*})}} - \frac{7 HL}{3 (N_k(x, a_{k,h-1}^{\pi^*})-1)} \nonumber \\
    \begin{split}
    \label{lem:BF_opt:7}
    &= \sqrt{\frac{4 \VVest{k,h}(x, a_{k,h-1}^{\pi^*})L}{N_k (x, a_{k,h-1}^{\pi^*})}} + \sqrt{\frac{4L \mathbb{E}_{y \sim \Pest_k (\cdot | x, a_{k,h-1}^{\pi^*})}\bonus'_{k,h}(y)}{N_k (x, a_{k,h-1}^{\pi^*})}} \\
    &\quad - \sqrt{\frac{2 \VVestopt{k,h}(x, a_{k,h-1}^{\pi^*})L}{N_k (x,x, a_{k,h-1}^{\pi^*})}}.
    \end{split}
\end{align}

We now bound $\VVestopt{k, h}$ in terms of $\VVest{k, h}$. Observing that:

\begin{align*}
    \Var[X] &= \mathbb{E}[X - \mathbb{E}[X]]^2 \\
    &= \mathbb{E}[X \pm Y - \mathbb{E}[X] \pm \mathbb{E}[Y]]^2 \\
    &= \mathbb{E}[(X-Y) - \mathbb{E}[X-Y] + Y - \mathbb{E}[Y]]^2 \\
    &\le 2 \mathbb{E}[(X-Y) - \mathbb{E}[X-Y]]^2 + 2 \mathbb{E}[Y - \mathbb{E}[Y]]^2 \\
    &= \Var[X-Y] + 2 \Var[Y],
\end{align*}

we can then rewrite:

\begin{align*}
    \VVestopt{k, h}(x, a_{k,h-1}^{\pi^*}) &\le 2 \VVest{k, h}(x, a_{k,h-1}^{\pi^*}) + 2 \Var_{y \sim \Pest_k (\cdot | x, a_{k,h-1}^{\pi^*})} [\Vstar{h}(y) - \Vest{k,h}(y)] \\
    &\le 2 \VVest{k, h}(x, a_{k,h-1}^{\pi^*}) + 2 \sum_{y \in \Ss} \Pest_k (\cdot | x, a_{k,h-1}^{\pi^*}) (\Vest{k,h}(y) - \Vstar{y})^2.
\end{align*}

By plugging this result into Equation~\eqref{lem:BF_opt:7}, we get:

\begin{equation*}
    \Vest{k,h-1}(x) - \Vstar{h-1}(x) \ge \sqrt{\frac{4L \mathbb{E}_{y \sim \Pest_k (\cdot | x, a_{k,h-1}^{\pi^*})}\bonus'_{k,h}(y)}{N_k (x, a_{k,h-1}^{\pi^*})}} - \sqrt{\frac{4L \mathbb{E}_{y \sim \Pest_k (\cdot | x, a_{k,h-1}^{\pi^*})}(\Vest{k,h}(y) - \Vstar{h}(y))^2}{N_k (x, a_{k,h-1}^{\pi^*})}}.
\end{equation*}

By applying the result of Equation~\eqref{lem:BF_opt:5} and the definition of $\bonus'_{k,h}(y)$, we finally obtain that $\Vest{k,h-1}(x) - \Vstar{h-1}(x) \ge 0$, thus demonstrating optimism.
\end{proof}

Having demonstrated optimism, we now prove the upper bound of the regret $\regbf{K}$:

\begin{align}
    \regbftilde{K} &\le U_{K,1} \nonumber \\
    \begin{split}
    \label{lem:BF_opt:8}
    &= e \bigg[ \sqrt{28 HSATL^2} + \frac{7}{3}HSAL^2 + 2\sqrt{84^2 H^3 S^4 A^2 L^4} \\
    &\quad + \sqrt{8 HSAL^2 U_{K1}} +\sqrt{6 HSATL^2} + \frac{2}{3} HSAL^2 \\
    &\quad + 2 \sqrt{HSAL^2 U_{K,1}} + \frac{8}{3} H^2S^2AL^2 + 2\sqrt{H^2TL} \\
    &\quad + 4 \sqrt{TL} + 100 H^2 S^2 A L \bigg]
    \end{split} \\
    &\le e \bigg[ 12 \sqrt{HSATL^2} + 5 \sqrt{H^2 SAL^2 U_{K,1}} \nonumber \\
    &\quad + \frac{821}{3} H^2 S^2 A L^2 + 2 \sqrt{H^2 T L} \bigg]
\end{align}

where Equation~\eqref{lem:BF_opt:8} is obtained by applying the results of Lemmas~\ref{lem:sum_b} and \ref{lem:sum_c}, by applying the result of Equation~\eqref{lem:BF_opt:9}, and by accounting for the regret of non-typical episodes.

As done in Lemma~\ref{lem:BF_opt}, by letting:

\begin{align*}
    \alpha &= e \left[ 12 \sqrt{H S A T L^2} + \frac{821}{3} H^2 S^2 A L^2 + 2 \sqrt{H^2 T L}\right], \\
    \beta &= 5e\sqrt{H^2 S^2 A L^2},
\end{align*}

we can solve for $U_{K,1}$ and obtain:

\begin{align*}
    \regbftilde{K} &\le 24e \sqrt{HSATL^2} + \frac{1846}{3}e H^2 S^2 A L^2 + 4e \sqrt{H^2 T L} \\
    &\le 24e \sqrt{HSATL^2} + 616 e H^2 S^2 A L^2 + 4e \sqrt{H^2 T L}
\end{align*}

thus completing the proof.

\end{document}